\newtheorem{proposition}{Proposition}[section]
\newtheorem{definition}{Definition}[section]
\definecolor{PaleGreen}{rgb}{0.76, 1.0, 0.76}
\definecolor{Orangelight}{rgb}{1,0.85,0.6}
\definecolor{LightCyan}{rgb}{0.90,1,1}
\definecolor{LightGreen}{rgb}{0.88, 0.95, 0.88}
\definecolor{lightgray}{RGB}{250, 250, 250} 
\definecolor{lightblue}{RGB}{15, 71, 127}  \definecolor{darkorange}{RGB}{37, 105, 16}  
\newtcolorbox[auto counter, number within=section]{blueequationbox}[2][]{colback=lightgray, colframe=lightblue,
    fonttitle=\bfseries, coltitle=white, colbacktitle=lightblue,
    title= #2, boxrule=0.8pt, arc=3pt, boxsep=5pt, left=2pt, right=2pt, top=2pt, bottom=2pt}
\newtcolorbox[auto counter, number within=section]{orangeequationbox}[2][]{colback=lightgray, colframe=darkorange,
    fonttitle=\bfseries, coltitle=white, colbacktitle=darkorange,
    title= #2, boxrule=0.8pt, arc=3pt, boxsep=5pt, left=2pt, right=2pt, top=2pt, bottom=2pt}
\begin{document}

\author{Damien Martins Gomes}
\title{Towards Practical Second-Order Optimizers in Deep Learning: Insights from Fisher Information Analysis}

\mastersDegree{Master of Applied Science}

\titleOfPhDAuthor{Mr.}

\program{Computer Science}

\dept{Computer Science and Software Engineering}

\chairOfDept{Joey Paquet}

\deanOfENCS{Mourad Debbabi}

\chairOfCommittee{Mirco Ravanelli}
\examinerFirstOfCommittee{Mirco Ravanelli}
\examinerExternalOfCommittee{Elvis Dohmatob} 

\principaladvisor{Mahdi S. Hosseini}



\begin{abstract}
First-order optimization methods remain the standard for training deep neural networks (DNNs). Optimizers like Adam incorporate limited curvature information by preconditioning the stochastic gradient with a diagonal matrix. Despite the widespread adoption of first-order methods, second-order optimization algorithms often exhibit superior convergence compared to methods like Adam and SGD. However, their practicality in training DNNs is still limited by a significantly higher per-iteration computational cost compared to first-order methods. In this thesis, we present \emph{AdaFisher}, a novel adaptive second-order optimizer that leverages a \emph{diagonal block-Kronecker approximation} of the Fisher information matrix to adaptively precondition gradients. AdaFisher aims to bridge the gap between the improved \emph{convergence} and \emph{generalization} of second-order methods and the computational efficiency needed for training DNNs. Despite the traditionally slower speed of second-order optimizers, AdaFisher is effective for tasks such as image classification and language modeling, exhibiting remarkable stability and robustness during hyperparameter tuning. We demonstrate that AdaFisher \textbf{outperforms state-of-the-art optimizers} in both accuracy and convergence speed. The Code is available from \href{https://github.com/AtlasAnalyticsLab/AdaFisher}{https://github.com/AtlasAnalyticsLab/AdaFisher}.
\end{abstract}
\begin{acknowledgments}
\vspace{-5mm}
I would like to express my deepest gratitude to Dr. Mahdi S. Hosseini, my supervisor, for his invaluable guidance, extensive support, and the countless hours he dedicated to helping me prepare and submit the AdaFisher paper to the ICLR 2025 conference. His mentorship has been instrumental in this journey.

A special and heartfelt thanks to PhD. Yanley (Kaly) Zhang, the second author of the AdaFisher paper and co-author for the BDN paper, for his relentless efforts, time, and dedication. His contributions to the experiments and countless meetings, even during late nights, have been crucial to the success of this work.

I am also immensely grateful to Atlas Analytics Lab for their support throughout this journey. I benefited greatly from the advice and assistance of my colleagues Cassandre, Ali, Hailey, Amirhossein, Thomas, Denisha, Vasudev, and Sina. Your support has been invaluable.

Merci aux Fonds de Recherche du Québec - Nature et Technologies, pour leur soutien et leur volonté de soutenir la recherche académique québécoise.

I extend my deepest gratitude to my family and friends. To Alexis, my roommate, for the engaging discussions about research, fieldwork, and One Piece. Your intellectual companionship has been invaluable. Thank you to my mother and sibling for their unwavering encouragement and support across the Atlantic.

Merci à mon école en France, IPSA Toulouse, de m’avoir permis de compléter mon parcours universitaire par une expérience de recherche inoubliable. Special thanks to Dr. Lorenzo Ortega for accompanying me as my supervisor in France.

And finally, to all the people who have accompanied me on this research journey, both from France and Canada, merci infiniment pour votre soutien.
\end{acknowledgments}

\begin{contributions}
Damien Martins Gomes originated and developed the AdaFisher optimizer, addressing the critical challenge of achieving rapid convergence and enhanced generalization while preserving the computational efficiency typical of first-order methods. Drawing on an extensive literature review and extensive discussions with his supervisor, Dr. Mahdi S. Hosseini, Damien designed the theoretical framework for AdaFisher and validated its performance through large-scale experiments. His work was pivotal not only in the conceptual development and empirical evaluation of AdaFisher but also in the composition of the conference paper and the rigorous rebuttal process for ICLR 2025 submission.

Dr. Yanlei Zhang, a Post-Doctoral researcher at Mila and Université de Montréal, joined the project to further strengthen the convergence analysis and support the experimental investigations of AdaFisher. His contributions were crucial in refining the theoretical analysis and ensuring robust experimental outcomes. Dr. Zhang also played an active role in drafting the manuscript and participated in numerous collaborative meetings, thereby enhancing the overall quality of the work.

All authors have reviewed and approved the final manuscript.
\end{contributions}
\pagenumbering{arabic}
\begin{glossaryy}
\begin{multicols}{2}
\noindent
\textbf{AI}: Artificial Intelligence \newline
\textbf{BN}: BatchNorm Layer \newline
\textbf{CNNs}: Convolutional Neural Networks \newline
\textbf{CV}: Computer Vision \newline
\textbf{DL}: Deep Learning \newline
\textbf{DNNs}: Deep Neural Networks \newline
\textbf{EFIM}: Empirical Fisher Information Matrix \newline
\textbf{EMA}: Exponential Moving Average \newline
\textbf{FFT}: Fast Fourier Transform \newline
\textbf{FIM}: Fisher Information Matrix \newline
\textbf{HP}: Hyper-Parameter \newline
\textbf{KF}: Kronecker Factor \newline
\textbf{LLM}: Large Language Model \newline
\textbf{MLP}: Multi-Layer Perceptron \newline
\textbf{NGD}: Natural Gradient Descent \newline
\textbf{NLP}: Natural Language Processing \newline
\textbf{NN}: Neural Network \newline
\textbf{PCA}: Principal Component Analysis \newline
\textbf{PPL}: Perplexity \newline
\textbf{PTB}: Penn TreeBank \newline
\textbf{SGD}: Stochastic Gradient Descent \newline
\textbf{SNRs}: Signal-to-Noise Ratios \newline
\textbf{SOTA}: State-Of-The-Art \newline
\textbf{ViTs}: Vision Transformers \newline
\textbf{VRAM}: Video RAM \newline
\textbf{WCT}: Wall-Clock-Time \newline
\end{multicols}
\end{glossaryy}

\chapter{Introduction}
\label{chap:introduction}

Deep Neural Networks (DNNs) have revolutionized a wide array of applications, from Computer Vision (CV) and Natural Language Processing (NLP) to reinforcement learning and beyond. Despite their success, training these models remains a challenging task due to the complexity of the loss landscapes they must navigate. In particular, DNN optimization often struggles with the dual challenges of achieving fast convergence and robust generalization across diverse architectures and complex data distributions.

DNNs have demonstrated remarkable success across a wide range of applications, yet their training remains a computationally intensive, time-consuming process that often requires massive amounts of data \citep{brown2020language}. This raises a fundamental motivating question: \textbf{how can we train Neural Networks (NN) more effectively}? Efforts to address this challenge have emerged from many directions, including improved optimization algorithms \citep{martens2010deep, martens2015optimizing}, specialized hardware designs \citep{misra2010artificial}, more data-efficient NN architectures \citep{snell2017prototypical}, and dedicated Deep Learning (DL) compilers \citep{chen2018tvm, rotem2018glow}. However, each of these approaches underscores the need for a deeper understanding of the factors that govern NN performance. Modifications to network architectures or optimization strategies can have profound impacts, as can more subtle changes such as reduced precision training \citep{gupta2015deep}. Yet, finding a cohesive set of tools to comprehend and harness these effects remains a long-standing challenge in DL optimization.

This thesis promotes a holistic approach to DL optimization, exploring both practical implementations and theoretical insights to illuminate the structure of loss landscapes and the dynamics of training. Central to our perspective is the observation that DL models exhibit a surprisingly rich structure in their loss landscapes, a property that not only facilitates acceleration in optimization but also helps explain the overall success of these models. By adopting loss landscape geometry as a framework, we reveal how various components—from curvature-aware updates to adaptive optimization techniques—contribute to effective neural network training. Connecting these pieces provides a clearer perspective on how the interplay between network architecture, optimization dynamics, and loss landscape geometry can be harnessed to design optimizers that balance convergence speed, generalization, and computational efficiency \citep{nocedal1999numerical}.

In the following chapters, we build on these ideas to introduce \emph{AdaFisher}, an adaptive second-order optimizer that leverages a refined \textbf{diagonal block-Kronecker approximation} of the Fisher Information Matrix. Through theoretical development, extensive empirical validation, and comprehensive ablation studies, our work aims to advance the state of DL optimization, providing both practical solutions and new insights for the research community.

\section{Problem Statement}
At the core of DNN training is the minimization of a highly non-convex loss function \(\mathcal{L}(\boldsymbol{\theta})\) by updating model parameters \(\theta\) according to an iterative scheme
\[
\boldsymbol{\theta}^{(t+1)} = \boldsymbol{\theta}^{(t)} - \alpha\, (\mathcal{G}^{(t)})^{-1} \nabla \mathcal{L}(\boldsymbol{\theta}^{(t)}),
\]
where \(\alpha\) is the learning rate and \(\mathcal{G}^{(t)}\) encapsulates curvature information. For first-order methods like Stochastic Gradient Descent (SGD), \(\mathcal{G}^{(t)}\) is simply the identity matrix, which makes these methods computationally efficient but often blind to the underlying geometry of the loss surface. In contrast, second-order methods employ richer curvature information—via the Hessian or the Fisher Information Matrix (FIM)—to rescale and orient gradients in a manner that can accelerate convergence and guide the optimizer toward flatter, more generalizable minima. However, these benefits come at a steep computational cost, as calculating and inverting full curvature matrices is often intractable for large-scale networks. This thesis addresses the need for an optimizer that achieves a balance between rapid convergence, strong generalization, and computational efficiency.

\section{Challenges Associated with the Problem}
The existing literature reveals a fundamental trade-off in DNN optimization. On one hand, methods such as Adam \citep{kingma2017adam} and its variants (e.g., AdamP \citep{heo2021adamp}, AdaInject \citep{dubey2022adainject}, and AdaBelief \citep{zhuang2020adabelief}) rely on diagonal approximations of the FIM to remain computationally efficient. While effective in many settings, these approximations can lead to suboptimal convergence and poorer generalization because they fail to capture the full curvature of the loss landscape. On the other hand, advanced second-order approaches like AdaHessian \citep{yao2021adahessian}, Shampoo \citep{gupta2018shampoo}, and K-FAC \citep{grosse2016kroneckerfactored} attempt to incorporate richer curvature information to improve optimization performance, but they are often hampered by high computational overhead, extensive hyper-parameter tuning requirements, and scalability issues on commodity hardware \citep{martens2020new}. This imbalance between the richness of curvature information and computational feasibility represents a critical challenge for the field, as an effective solution would greatly enhance training dynamics and yield better-performing models across various domains. Hence, we list three research questions that we will try to talk throughout this thesis: How can the Fisher Information Matrix’s structure be leveraged to design a second-order optimizer that is computationally feasible for deep networks?; Does using a refined FIM-based preconditioning improve convergence speed and generalization compared to first-order methods?; What are the impacts of various design choices (e.g., using square-root of adapting optimization scheme, including Fisher computation for normalization layers) on the optimizer’s performance?

\section{Proposed Methodology}
To address this challenge, we introduce \emph{AdaFisher}, a novel adaptive second-order optimization method that leverages a refined diagonal block-Kronecker approximation of the FIM. Inspired by Natural Gradient Descent (NGD) \citep{Amari2000MethodsOI}—which uses the FIM as a \emph{Riemannian metric} to precondition gradients—AdaFisher replaces the second moment in Adaptive framework with our enhanced FIM approximation. This novel approach effectively combines the computational efficiency of first-order methods with the curvature-awareness of second-order techniques. By capturing the essential curvature information through a diagonal block-Kronecker factorization, AdaFisher accelerates convergence and guides the optimization process toward flatter local minima, thereby improving generalization. As illustrated in Figure~\ref{fig:heatloss}, AdaFisher not only converges more rapidly but also reaches a superior local minimum by effectively navigating through saddle points compared to its counterparts. Further details regarding the visualization can be found in Appendix~\ref{chap:Appendixvisualization}. 
\begin{figure}[!h]
    \centering
    \includegraphics[width=0.6\linewidth]{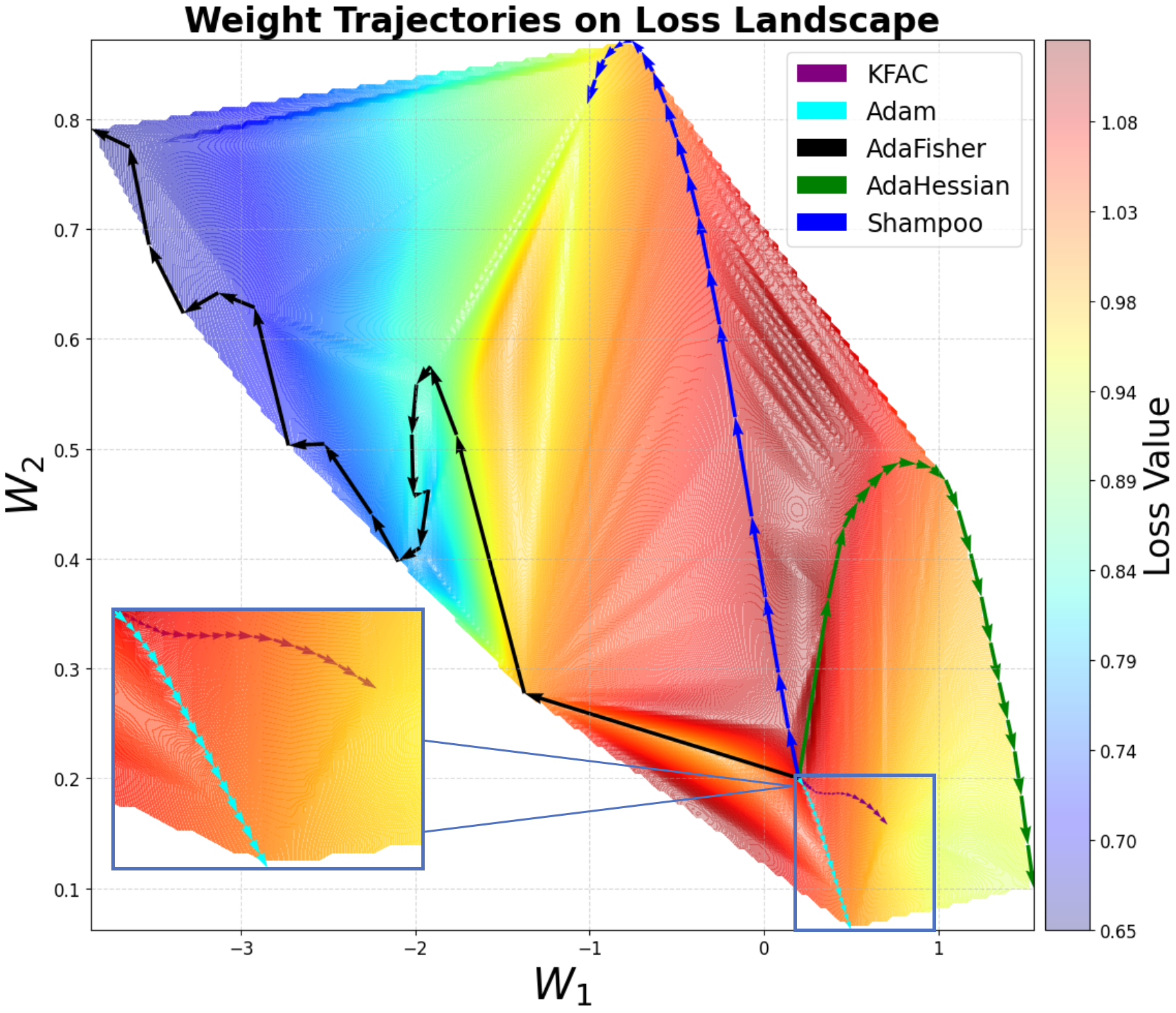}
    \caption{Visualizing optimization trajectories for various optimizers overlaid a loss landscape.}
    \label{fig:heatloss}
\end{figure}
\section{Contributions}
The contributions of this thesis are multifaceted and build upon extensive theoretical and empirical insights in DNN optimization. First, we provide a comprehensive literature review that situates our work within the broader context of first-order, second-order, and emerging optimization methods, highlighting the limitations of current approaches. Second, we empirically demonstrate that the energy of the Kronecker Factors (KF) is predominantly concentrated along the diagonal, offering fresh insights into the structure of the FIM in deep learning. Third, we introduce a novel diagonal block-Kronecker approximation of the FIM, which is applicable across various network layers, including normalization layers, thereby enhancing model adaptability while maintaining computational efficiency. Fourth, we establish the robustness and stability of AdaFisher through extensive experiments across diverse training settings, proving its superior convergence and generalization performance. Fifth, we showcase AdaFisher's empirical performance against state-of-the-art optimizers in both image classification and language modeling tasks. Lastly, we develop innovative visualization techniques that map optimizer trajectories in the loss landscape and introduce an explainable FIM measure, facilitating comparative analysis of optimizer behavior.

\section{Outline}
This thesis is organized to provide a cohesive narrative that bridges theoretical developments with practical implementations. Chapter~\ref{chap:literaturereview} offers a detailed literature review, surveying existing optimization techniques and identifying the challenges that motivate our approach. Chapter~\ref{chap:efficientfisherapprox} delves into a new approximation of the FIM which enables its computation for large scale DNNs while preserving the main curvature information. Then in Chapter~\ref{chap:adafisher} we introduce AdaFisher optimizer, explaining its theoretical foundations, algorithmic structure, and distributed implementation. In Chapter~\ref{chap:experiments}, we present extensive experiments in both computer vision and natural language processing, demonstrating the effectiveness of AdaFisher relative to other state-of-the-art optimizers. Chapter~\ref{chap:ablativestudies} provides an in-depth ablation and stability analysis, dissecting the contributions of individual components within AdaFisher. Finally, Chapter~\ref{chap:conclusion} concludes the thesis with a discussion of the implications of our findings and potential directions for future research.

By addressing the inherent trade-offs in DNN optimization, this thesis contributes a new, scalable approach that combines the efficiency of first-order methods with the precision of second-order curvature information, paving the way for more robust and generalizable deep learning models.
\chapter{Literature Review}
\label{chap:literaturereview}
Deep Learning has transformed numerous fields by achieving breakthrough results in computer vision \citep{he2016deep, liu2021swin, krizhevsky2012imagenet}, natural language processing \citep{NIPS2017_3f5ee243, devlin2018bert, brown2020language}, and beyond. The extraordinary performance of deep networks is not solely a consequence of their expressive architectures but also of the intricate dynamics that govern their training. Understanding these dynamics is crucial for several reasons. First, it sheds light on the non-convex optimization challenges inherent to deep models. Second, it helps to reveal why and how over-parameterized networks generalize well despite their capacity to fit highly complex datasets. Third, it informs the design of optimization algorithms that balance rapid convergence with robust generalization. While empirical successes abound, theoretical understanding lags behind practice, with critical open questions persisting about: 1) how optimization trajectories navigate high-dimensional non-convex landscapes \citep{zaheer2018adaptive}, 2) why certain algorithms generalize better despite equivalent training performance \citep{allen2019learning}, and 3) how computational constraints shape method design \citep{hu2023optimization}. This chapter synthesizes recent advances across three interconnected themes: 

First, we will introduce the preliminaries notations and concepts in  Section~\ref{sec:preliminaries}, then Section~\ref{sec:dynamics} analyzes DL dynamics through the lens of phase transitions, loss geometry, and implicit regularization. Section~\ref{sec:challenges} then delineates fundamental obstacles in NN optimization, from saddle point proliferation to memory-time tradeoffs. Section~\ref{sec:methods} systematically evaluates optimization paradigms, contrasting zeroth-, first-, and second-order methods through their theoretical guarantees and empirical behavior. Finally,  Section~\ref{sec:emerging} discusses emerging directions that contextualize the design space for modern optimizers.
\section{Preliminaries}\label{sec:preliminaries}
\subsection{Standard Notation}
We include an overview of our notation within this section in addition to covering some fundamental material that we use throughout the thesis, and additional notations will be defined as needed. We write $\mathbb{R}$ for the space of real numbers and $\mathbb{N}$ for the space of natural numbers. Scalars are
typically represented in lower-case, e.g. $x \in \mathbb{R}$. We present vectors in lower-case boldface font, e.g. $\mathbf{x} \in \mathbb{R}^d$. Matrices are written in upper-case font, e.g. $A \in \mathbb{R}^{d\times d}$. We use subscript to denote the entries of vectors and matrices. For example, $v_i$ denotes the $i^{\text{th}}$ entry of the vector $\mathbf{v}$, and $A_{ij}$ denotes the entry of $A$ at position $(i,j)$.
\subsection{Probabilities}
Throughout this thesis, we rely on fundamental tools from probability theory. While a careful, thorough review is out of scope, we present here some key ideas that we need later. To denote the probability of an event $A$ occurring, we write $\mathbb{P}[A]$. We frequently work with random variables, which are defined as a function from a sample space to a measurable space. We focus primarily on real-valued random variables, such that the measurable space is a subset of $\mathbb{R}$.
Our random variables can typically be described via a \emph{probability density function} (PDF), $p(x)$. That is, given a random variable $X$,
\begin{align}
    \mathbb{P}[X \in R] = \int_R p(x)dx, \notag
\end{align}
where $R \in \mathbb{R}$. Given two random variables $X$ and $Y$, we define the conditional distribution of $X$ given $Y$ by
the PDF,
\begin{align}
    p(x|y) = \frac{p(x,y)}{p(y)} \notag
\end{align}
where $p(x,y)$ denotes the joint PDF.
\paragraph{Expected value.} We define the expected value of a real-valued random variable with PDF $p(x)$ as,
\begin{align}
    \mathbb{E}[X] = \int_{\mathbb{R}} xp(x)dx \notag
\end{align}
This definition can be readily extended to vector-valued random variables by integrating element-wise. In the case where we have several samples of a random variable, $x_1,\dots,x_n$, we write the empirical mean as,
\begin{align}
    \bar{x} = \frac{1}{n} \sum_{i=1}^{n} x_i \notag
\end{align}
\paragraph{Covariance Matrices.} Given a vector-valued random variable $x \in \mathbb{R}^d$, its covariance is defined by,
\begin{align}
    \text{Cov}(\mathbf{x}) = \mathbb{E}\left[(\mathbf{x} - \mathbb{E}[\mathbf{x}])(\mathbf{x} - \mathbb{E}[\mathbf{x}])^\top\right]. \notag
\end{align}
Given, observations $\mathbf{x}_i \in \mathbb{R}^{d \times n}$, for $i= 1,\dots,n$ of the r.v $\mathbf{x}$, we define the empirical covariance matrix as,
\begin{align}
    \hat{\Sigma} = \frac{1}{n} \sum_{i=1}^{n} (\mathbf{x}_i - \bar{\mathbf{x}})^\top = \frac{1}{n} (X - \bar{X})(X - \bar{X})^\top, \notag
\end{align}
where the design matrix $X \in \mathbb{R}^{d\times n}$ contains $\mathbf{x}_i$ in its $i^{\text{th}}$ column, and $\bar{X} \in \mathbb{R}^{d\times n}$has $\bar{\mathbf{x}}$ in each column. From this, we can see that the empirical covariance matrix is symmetric and positive semidefinite--the latter meaning that all eigenvalues of $\hat{\Sigma}$ are non-negative.
\paragraph{Latent Variable Models.}
In Chapter~\ref{chap:efficientfisherapprox}, we present a new, efficient approximation of the Fisher Information Matrix—defined below—which naturally arises in the context of latent variable models that require an underlying statistical framework. These are probabilistic graphical models \citep{koller2009probabilistic} where latent variables\footnote{Often referred to as \emph{unobserved} or \emph{hidden} variables} capture features of observed variables. In the simplest setting, the latent variables, $\mathbf{z} \in \mathbb{R}^k$, and the observed variables $\mathbf{x}\in \mathbb{R}^d$ have their joint PDF defined in the following way,
\begin{align}
    p(x,z) = p_{\boldsymbol{\theta}}(\mathbf{x}|\mathbf{z})p(\mathbf{z}), \notag
\end{align}
where $p(\mathbf{z})$ represents a \emph{prior distribution} over the latent variables, and $p_{\boldsymbol{\theta}}(\mathbf{x} |\mathbf{z})$ represents the conditional distribution parameterized by some $\boldsymbol{\theta}$ to be learned from observed data. 
Given some i.i.d. observed data, $\{\mathbf{x}_i\}_{i=1}^n$, we seek to maximize the log marginal likelihood,
\begin{align}
    \sum_{i=1}^n \log p_{\boldsymbol{\theta}}(\mathbf{x}_i) = \sum_{i=1}^n \log \frac{p_{\boldsymbol{\theta}}(\mathbf{x} | \mathbf{z})p(\mathbf{z})}{p(\mathbf{z}|\mathbf{x}_i)}. \notag
\end{align}
Optimizing the parameters, $\boldsymbol{\theta}$, typically requires the computation of (or samples from) the posterior distribution. But this is not generally available in closed form. Several methods exist that allow us to perform inference in this model regardless.

To formalize the notion of the FIM, consider a parametric model $p_{\boldsymbol{\theta}}(\mathbf{x})$ describing observed data $\mathbf{x} \in \mathbb{R}^d$. The FIM with respect to $\boldsymbol{\theta}$ is given by
\begin{align}
    F(\boldsymbol{\theta})
    \;=\;
    \mathbb{E}_{\mathbf{x} \sim p_{\boldsymbol{\theta}}(\mathbf{x})}
    \Bigl[
       \nabla_{\boldsymbol{\theta}} \log p_{\boldsymbol{\theta}}(\mathbf{x})
       \;\nabla_{\boldsymbol{\theta}} \log p_{\boldsymbol{\theta}}(\mathbf{x})^\top
    \Bigr], \label{eq:FIMequation}
\end{align}
i.e., it measures the expected curvature of the log-likelihood and captures how sensitively the model's probability distribution depends on its parameters. Further discussions about the importance of the FIM and its relevance in optimization scenarios will follow later.
\subsection{Multivariate Calculus} \label{subsec:multivariatecalculus}
Throughout this thesis, we extensively employ multivariate calculus and carefully outline the notations and underlying concepts.
Given a function $f \, : \, \mathbb{R}^n \rightarrow \mathbb{R}^m$, we can write its Taylor series expansion about a point $\boldsymbol{\theta}^{(0)} \in \mathbb{R}^n$ as,
\begin{align}
    f(\boldsymbol{\theta}) = f(\boldsymbol{\theta}^{(0)}) + J(\boldsymbol{\theta}^{(0)})(\boldsymbol{\theta} - \boldsymbol{\theta}^{(0)}) + \frac{1}{2}(\boldsymbol{\theta} - \boldsymbol{\theta}^{(0)})^\top H(\boldsymbol{\theta}^{(0)})(\boldsymbol{\theta} - \boldsymbol{\theta}^{(0)}) + o(||\boldsymbol{\theta}-\boldsymbol{\theta}^{(0)}||_2^2), \notag
\end{align}
where $J(\boldsymbol{\theta}^{(0)}) \in \mathbb{R}^{m\times n}$ denotes the \emph{Jacobian matrix} evaluated at $\boldsymbol{\theta}^{(0)}$ and $H(\boldsymbol{\theta}^{(0)}) \in \mathbb{R}^{n\times m\times n}$ is the \emph{Hessian tensor} evaluated at $\boldsymbol{\theta}^{(0)}$. We use the notation,
\begin{align}
    [(\boldsymbol{\theta} - \boldsymbol{\theta}^{(0)})^\top H(\boldsymbol{\theta}^{(0)})(\boldsymbol{\theta} - \boldsymbol{\theta}^{(0)})]_j = \sum_{i=1}^n\sum_{k=1}^n (\boldsymbol{\theta} - \boldsymbol{\theta}^{(0)})_iH(\boldsymbol{\theta}^{(0)})_{ijk}(\boldsymbol{\theta} - \boldsymbol{\theta}^{(0)})_k, \notag
\end{align}
that is, the vector-tensor product is applied on the outer indices.
The Jacobian matrix can be defined as the collection of first-order derivatives between all input-output pairs of the function. Explicitly, writing $f = (f_1,\dots ,f_m)$ and $\boldsymbol{\theta} = (\theta_1,\dots,\theta_n)$ the $ij^{\text{th}}$ entry
of $J$ is given by,
\begin{align}
    J(\boldsymbol{\theta}_0)_{ij} = \frac{\partial f_i}{\partial \theta_j}\Big|_{\boldsymbol{\theta} = \boldsymbol{\theta}^{(0)}}. \notag
\end{align}
The Jacobian matrix is a linear operator that determines the rate of change of the function along some direction.
The Hessian tensor can be defined similarly,
\begin{align}
    H(\boldsymbol{\theta}_0)_{ijk} = \frac{\partial^2 f_j}{\partial\theta_i\partial\theta_k}\Big|_{\boldsymbol{\theta}=\boldsymbol{\theta}^{(0)}}. \notag
\end{align}
The Hessian determines the curvature of the function and is a particularly useful object for optimization. For instance, the Hessian can be used to classify stationary points as saddles, minima, or maxima and measures the conditioning of these stationary points as the ratio of the largest and smallest singular values.
\subsection{Neural Networks}
\label{subsec:neuralnetwor}
This thesis is ultimately concerned with understanding the properties of NN and how we can leverage this knowledge to develop a methodology for enhancing the convergence/generalization. Our investigation of NNs ranges from closed-form theoretical analysis to experimental study. In this section, we briefly introduce the notation that we use to describe NNs throughout this thesis — with an emphasis on the NNs that we focus on within our theoretical analysis.

We consider a supervised learning framework with a dataset $\mathbf{D}$ containing $N$ i.i.d samples, $\mathbf{D} \coloneq \{\mathbf{x}_{n}, \mathbf{y}_{n}\}_{n=1}^{N}$ where $\mathbf{x}_{n} \in \mathbb{R}^{d}$ and $\mathbf{y}_{n} \in \mathbb{R}^C$. Loosely speaking, a NN is a parametric function, $f_{\boldsymbol{\theta}} \, : \, \mathbb{R}^d \rightarrow \mathbb{R}^C$, parametrized by $\boldsymbol{\theta}$ where $\theta_i = \text{concat}(W_{i}, \mathbf{b}_i) \in \mathbb{R}^{P_i}$, and $P_i = P_{i}^{out} \times (P_{i}^{in} + 1)$. Let $\mathcal{L}: \mathbb{R}^C \times \mathbb{R}^C \rightarrow \mathbb{R}$ be the loss function defined by the Negative Log-Likelihood (NLL), i.e. $\mathcal{L}(\mathbf{y}, f_{\boldsymbol{\theta}}(\mathbf{x})) \coloneq - \log p_{\boldsymbol{\theta}}(\mathbf{y} | \mathbf{x})$ where $p_{\boldsymbol{\theta}}(\mathbf{y}|\mathbf{x})$ is the likelihood of the NN $f_{\theta}$. The network computes its output $h_{L} = f_{\boldsymbol{\theta}}(\mathbf{x})$ according to 
\begin{align}
    \mathbf{a}_{i} = \boldsymbol{\theta}_{i} \bar{\mathbf{h}}_{i-1}, \, \, \mathbf{h}_{i} = \phi_{i}(\mathbf{a}_{i}),  \, \, \forall \, i \in \{1, \dots, L\} \, \, | \, \, \mathbf{h}_{0} = \mathbf{x}, \notag
\end{align}
where $\bar{\mathbf{h}}_{i-1} = [\mathbf{h}_{i-1}^\top,\mathbf{1}]^\top \in \mathbb{R}^{P_{i-1}^{in}+1}$, and $\phi_i$ is an element-wise nonlinearity applied at layer $i$. For a given input target pair $(\mathbf{x},\mathbf{y})$, the gradient of the loss $\mathcal{L}(\mathbf{y}, f_{\boldsymbol{\theta}}(\mathbf{x}))$ concerning the weights are computed by the backpropagation algorithm \citep{Lecun2001}. For convenience, we adopt the special symbol $\mathbf{s}_{i} = \nabla_{\mathbf{a}_{i}}\mathcal{L}$ for the pre-activation derivative. Starting from $\nabla_{\mathbf{h}_{L}}\mathcal{L} = \partial_{\mathbf{h}_{L}}\mathcal{L}(\mathbf{y},\mathbf{h}_{L})$, we perform
\begin{align}
    \mathbf{s}_{i} \coloneq \nabla_{\mathbf{a}_{i}}\mathcal{L} = \nabla_{\mathbf{h}_{i}}\mathcal{L} \odot \phi_{i}'(\mathbf{a}_{i}), \, \nabla_{\theta_{i}}\mathcal{L} = \mathbf{s}_{i} \bar{\mathbf{h}}_{i-1}^\top, \, \nabla_{\bar{\mathbf{h}}_{i-1}}\mathcal{L} = \boldsymbol{\theta}_{i}^\top \mathbf{s}_{i} \quad | \, \, \forall i \in \{L, \dots, 1\},  \notag
\end{align}
where $\odot$ denotes the element-wise product. Finally, the gradient $\nabla_{\boldsymbol{\theta}} \mathcal{L}$ is retrieved by: $\nabla_{\boldsymbol{\theta}}\mathcal{L} = [\text{vec}(\nabla_{\theta_{1}}\mathcal{L})^\top,\text{vec}(\nabla_{\theta_{2}}\mathcal{L})^\top, \dots, \text{vec}(\nabla_{\theta_{L}}\mathcal{L})^\top]^\top$; $\text{vec}(\cdot)$ denotes the Kronecker vectorization operator which stacks the columns of a matrix into a vector. Throughout this thesis we will use the notation $\mathbf{g}^{(t)} = \nabla_{\boldsymbol{\theta}} \mathcal{L}(\boldsymbol{\theta}^{(t)})$.
\paragraph{Network Jacobian.} Network Jacobian in Section~\ref{subsec:multivariatecalculus} we defined the Jacobian matrix. Here we write NNs as functions that depend on some parameters $\boldsymbol{\theta}$ and operate on some data $\mathbf{x}$. We typically consider the Jacobian matrix of NNs with respect to their parameters $\boldsymbol{\theta}$, and they are thus written as $J_{\boldsymbol{\theta}}(\mathbf{x})$.
\subsection{Optimization}\label{subsec:optimization}
In this section, we review the background material that is relevant to the optimization components of this thesis. We will formalize the optimization problem considered when training a NN.
\paragraph{Setting.} The optimization problems that we consider can be represented via an objective function, $\mathcal{L}$ (e.g., NLL defined in Section~\ref{subsec:neuralnetwor}). In the settings that we study in this thesis, the objective function is evaluated for a given set of model parameters and observations. In the most general case, we use $\boldsymbol{\theta}$ to denote the model parameters. Note that in this thesis we will consider a supervised learning framework. Typically, the objective function can be decomposed as the average of several independent loss terms,
\begin{align}
    \mathcal{L}(\boldsymbol{\theta}) = \frac{1}{N} \sum_{n=1}^N\ell(\boldsymbol{\theta};\mathbf{x}_n,\mathbf{y}_n) \notag
\end{align}
Our goal is to minimize this objective function with respect to the model parameters. In doing so successfully, we recover a global minimum,
\begin{align}
    \boldsymbol{\theta}^* \in \underset{\boldsymbol{\theta}}{\text{arg min}} \, \, \mathcal{L}(\boldsymbol{\theta}) \notag
\end{align}
Notice the use of $\in$ instead of $=$ operator, this is because of the nature of the problem, which in the case of NN optimization, is highly non-convex. In the typical applications that we care about, such as deep learning optimization, $n$ is far too large to evaluate $\mathcal{L}$ efficiently. In this case, we often resort to stochastic optimization where we obtain statistics of the objective function by evaluating it using only a randomly chosen subset of the observations.
\paragraph{Gradient Descent.} A ubiquitous approach to minimizing $\mathcal{L}$ is to start with some initial guess of the optimal parameters, $\boldsymbol{\theta}_0$, and follow the gradient from $\boldsymbol{\theta}_0$ towards an optimum. This algorithm is known as gradient descent, and can be represented by the following recursive computation,
\begin{align}
    \boldsymbol{\theta}^{(t+1)} = \boldsymbol{\theta}^{(t)} - \alpha^{(t)}\mathbf{g}^{(t)}, \label{eq:gradientdescentupdaterule}
\end{align}
where $\alpha^{(t)}$ denotes the learning rate. This algorithm (and extensions of it) still form the basis of modern deep learning optimization. To better understand it, we will now provide a brief review of convex optimization and the behaviour of gradient descent under convex objectives.
\paragraph{Convex Optimization.} In convex optimization, we restrict our attention to the class of convex objective functions.
\begin{definition}
A function $f\, : \, \mathbb{R}^d \rightarrow \mathbb{R}$ is convex if and only if the following holds. For all $t \in [0,1]$, and all $\mathbf{x},\mathbf{y} \in \mathbb{R}^d$,
\begin{align}
    f(t\mathbf{x} + (1-t)\mathbf{y}) \leq tf(\mathbf{x}) + (1-t)f(\mathbf{y}). \notag
\end{align}
\end{definition}
In general, a convex optimization problem may include a set of constraints such that the feasible set of solutions is a convex set. Unless stated otherwise, we restrict our attention to unconstrained convex optimization.

Perhaps the simplest non-trivial convex function is the \emph{quadratic function}, which (without loss of generality (WLG)) takes the form,
\begin{align}
    f(\mathbf{x}) = (\mathbf{x}-\mathbf{x}^*)^\top A(\mathbf{x} - \mathbf{x}^*), \label{eq:quadratic}
\end{align}
where \(A\) is a symmetric positive semidefinite matrix. Convex quadratic optimization problems form the cornerstone of second-order optimization algorithms, since at each iteration the loss function is approximated by a convex quadratic via a second-order Taylor expansion, thereby making the resulting subproblem significantly easier to solve.

Note that throughout this thesis we consider \emph{unconstrained composite convex optimization problems} which are significantly simpler than their constrained counterparts.
\paragraph{Convergence on convex objectives.} The following properties of (unconstrained) convex optimization are key to our success in optimization \citep{boyd2004convex}:
\begin{itemize}
    \item Every locally optimal point of a convex objective function is globally optimal.
    \item If $f$ is differentiable, then a point $\mathbf{x}$ is optimal if and only if $\nabla_{\mathbf{x}} f = 0$.
\end{itemize}
Importantly, the convergence rate of gradient descent at an optimum of an unconstrained convex optimization problem depends on the \emph{condition number} of the Hessian matrix. That is the ratio of the largest eigenvalue to the smallest eigenvalue of the Hessian matrix. The larger the condition number, the slower the convergence.

Consider a quadratic objective with $\kappa$ denoting the condition number of the Hessian matrix ($H= A$) in Eq. (\ref{eq:quadratic}). Then the convergence rate of gradient descent (with optimal learning rate) is given by,
\begin{align}
    f(\mathbf{x}^{(t)}) - f(\mathbf{x^*)} = \left(1-\frac{1}{\kappa}\right)^t(f(\mathbf{x}_0) - f(\mathbf{x}^*)), \notag
\end{align}
where $x_0$ is the initial point. One commonly used method to improve convergence under ill-conditioned objectives is by utilizing acceleration. These methods typically reduce the dependency on the condition number from $1/\kappa$ to $1/\sqrt{\kappa}$.
\section{Deep Learning Dynamics}
\label{sec:dynamics}
The dynamics of deep learning encompass the evolution of network parameters as a function of training time and the interplay between the optimization method, loss landscape, and model architecture. These dynamics are central to understanding how deep networks transition between different regimes of learning, how they navigate highly non-convex landscapes, and how they converge to solutions with desirable generalization properties. In this section, we review the state-of-the-art (SOTA) in deep learning dynamics with a focus on three aspects: training dynamics and phase transitions, loss landscape geometry, and implicit regularization.
\subsection{Training Dynamics and Phase Transitions}
\label{subsec:training_dynamics}
\paragraph{Neural Tangent Kernel (NTK) Theory.}
A powerful theoretical lens on the training behavior of overparameterized NN is provided by the \emph{Neural Tangent Kernel (NTK)} framework \citep{jacot2018neural}. Consider a NN \( f_{\boldsymbol{\theta}} : \mathbb{R}^{d} \to \mathbb{R} \) (WLG, assume a scalar output) parameterized by \(\boldsymbol{\theta} \in \mathbb{R}^p\). At any parameter setting \(\boldsymbol{\theta}\), the \emph{NTK} between two inputs \(\mathbf{x}, \mathbf{x}' \in \mathbb{R}^d\) is defined as
\begin{align}
    \mathcal{K}_{\boldsymbol{\theta}}(\mathbf{x},\mathbf{x}')
    \;=\;
    \nabla_{\boldsymbol{\theta}} f_{\boldsymbol{\theta}}(\mathbf{x})^\top
    \,\nabla_{\boldsymbol{\theta}} f_{\boldsymbol{\theta}}(\mathbf{x}') \notag
    \,.
\end{align}
In the \emph{NTK regime}---often associated with very wide (infinite-width) NNs---the evolution of network parameters under gradient descent can be approximated by linearizing \(f_{\boldsymbol{\theta}}(\mathbf{x})\) around its initialization \(\boldsymbol{\theta}^{(0)}\). Concretely, writing \(\boldsymbol{\theta}^{(t)}\) for the parameters at iteration \(t\), 
\begin{align}
    f_{\boldsymbol{\theta}^{(t)}}(\mathbf{x})
    \;\approx\;
    f_{\boldsymbol{\theta}^{(0)}}(\mathbf{x})
    \;+\;
    \nabla_{\boldsymbol{\theta}} f_{\boldsymbol{\theta}^{(0)}}(\mathbf{x})^\top
    \bigl(\boldsymbol{\theta}^{(t)} \;-\; \boldsymbol{\theta}^{(0)}\bigr). \notag
\end{align}
Under this approximation, the kernel \(\mathcal{K}_{\boldsymbol{\theta}^{(0)}}(\mathbf{x},\mathbf{x}')\) remains nearly constant throughout training. As a result, the training dynamics reduce to a linearized model whose parameters evolve in a predictable manner. While this perspective sheds light on many salient features of training---including the speed of convergence and the capacity for perfect interpolation in overparameterized networks---it only partially captures the highly non-linear aspects observed in real-world scenarios where finite width, adaptive optimizers, and more complex architectures induce additional effects.
\paragraph{Phase Transitions in Training.}
The training process of deep networks is characterized by complex temporal behavior that often manifests as distinct phases \citep{fort2020deep}. Early in training, the network typically undergoes rapid changes, while later stages are marked by slower, diffusion-like dynamics. Recent studies have systematically mapped out these phases by examining how Hyper-Parameters (HP) such as learning rate, depth, and width influence convergence behavior \citep{erhan2010does, xie2022adaptive}. For example, in \cite{kalra2023phase}, the authors present a phase diagram that distinguishes regimes of fast initial descent from later phases where learning slows, thereby highlighting critical transition points that affect both convergence and generalization. In \citep{Jastrzebski2020The}, the authors analyze the importance of the early phase of training and its critical impact on final performance, arguing that key properties of the loss surface are strongly influenced by initial learning dynamics.

A complementary perspective is provided by theoretical frameworks such as the NTK described above. Although the NTK regime simplifies the analysis by approximating the parameter evolution as quasi-linear, it captures only a fraction of the full non-linear behavior observed in practice. In more realistic settings---where non-linear activations, finite widths, and adaptive optimization methods are prevalent---the training dynamics exhibit richer behavior, including abrupt shifts (or phase transitions) that mark the onset of distinct learning regimes.

The inherent stochasticity in optimization methods like Stochastic Gradient Descent (SGD) \citep{marcus-etal-1993-building} introduces noise that can help escape poor local minima and saddle points. For instance, \cite{jin2017escape} demonstrated that by adding noise at each step, gradient descent can escape saddle points in polynomial time. Moreover, \cite{xie2021a} proposes a diffusion theory for deep learning dynamics in which the stochastic fluctuations inherent in SGD introduce an additional regularization effect, steering the optimization trajectory toward broader, flatter minima. This bias toward flat solutions is argued to underpin the superior generalization performance observed in deep networks, as flatter minima are typically more robust to perturbations and less prone to overfitting \citep{frankle2018the}.
\subsection{Loss Landscape Geometry}
\label{subsec:loss_landscape}
\paragraph{What is a loss landscape?} A loss landscape is determined by the output of the loss function over some subset of the optimization parameters. The loss landscape describes the space explored by an optimization algorithm and ultimately determines the optimizer’s trajectory. Figure~\ref{fig:losslandscape} displays a 2D slice of the loss landscape of a fully-connected network. 
\begin{figure}[!ht]
    \centering
    \includegraphics[width=0.7\linewidth]{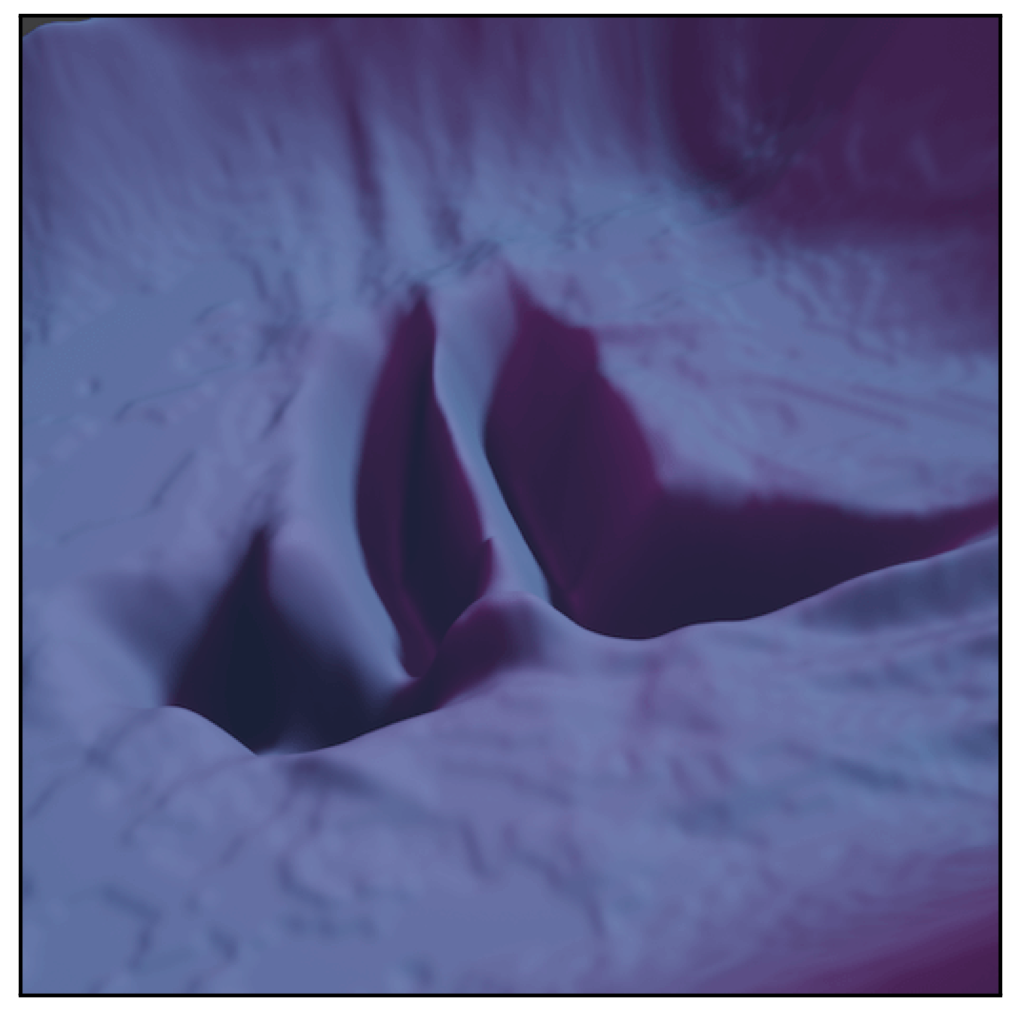}
    \caption{A 2D slice of a loss landscape. This is a 3D rendering of a 2D slice of the loss landscape of a fully-connected network. The slice itself was chosen via gradient descent, using an objective that encouraged this 2D slice to match a target image. Full details are given by \cite{lucas2022optimization}.}
    \label{fig:losslandscape}
\end{figure}
\paragraph{Geometry Intuitions.} The geometry of the loss landscape is a key determinant of both the ease of optimization and the generalization ability of a DNN. Empirical investigations have revealed that the loss surfaces of deep models are \emph{highly non-convex} and characterized by a multitude of local minima and saddle points. Yet, intriguingly, many of these minima are connected via \emph{low-loss pathways}, suggesting that the landscape possesses an underlying structure that facilitates optimization \citep{chiang2023loss, garipov2018loss}. One influential concept in this context is the \emph{Goldilocks zone}, which refers to regions in the parameter space where the curvature is balanced---not too steep and not too flat. Studies such as \cite{fort2019goldilocks} and \cite{vysogorets2024deconstructing} have shown that minima in these zones tend to be flat, and flatness has been empirically associated with better generalization performance. The spectral properties of the Hessian matrix have often been used to characterize these regions: minima with a concentration of small eigenvalues (i.e., flat minima) are typically more robust to perturbations, whereas sharp minima, characterized by large eigenvalues, are more sensitive and prone to overfitting \citep{keskar2016large, dinh2017sharp}. Several works have further illuminated the interplay between optimization dynamics and loss landscape geometry. For instance, \cite{chaudhari2019entropy} introduce Entropy-SGD, which augments the loss function with an entropy term \(-\beta^{-1}\, \mathrm{H}(\boldsymbol{\theta})\) to the loss:
\begin{align}
    \tilde{\mathcal{L}}(\boldsymbol{\theta})
    \;=\;
    \mathcal{L}(\boldsymbol{\theta})
    \;-\;
    \beta^{-1} \,\mathrm{H}(\boldsymbol{\theta}),
    \quad
    \mathrm{H}(\boldsymbol{\theta}) 
    \;=\;
    -\int p(\boldsymbol{\phi} | \boldsymbol{\theta}) \,\ln p(\boldsymbol{\phi} | \boldsymbol{\theta})\,d\boldsymbol{\phi}, \notag
\end{align}
where \(\beta\) is a temperature-like parameter and \(p(\boldsymbol{\phi} |\boldsymbol{\theta})\) is a local Gibbs distribution around \(\boldsymbol{\theta}\).  Similarly, \cite{xie2021a} develops a diffusion theory for deep learning dynamics, modeling the update of parameters \(\boldsymbol{\theta}^{(t)}\) under SGD as
\begin{align}
    \boldsymbol{\theta}^{(t+1)}
    \;=\;
    \boldsymbol{\theta}^{(t)}
    \;-\;
    \alpha \,\nabla_{\boldsymbol{\theta}} \mathcal{L}(\boldsymbol{\theta}^{(t)})
    \;+\;
    \sqrt{2\,\alpha\,T_{\mathrm{eff}}}\,\mathbf{\xi}^{(t)}, \notag
\end{align}
where \(\mathbf{\xi}^{(t)}\) is an isotropic noise term and \(T_{\mathrm{eff}}\) is an effective temperature capturing batch-size and learning-rate dependence. Their analysis shows that the resulting parameter distribution is biased exponentially toward regions where \(H(\boldsymbol{\theta})\) has smaller eigenvalues, thus promoting flatter minima. Their results quantitatively show that SGD exponentially favors flat minima, offering a theoretical underpinning for the empirically observed link between flatness and generalization. The concept of flatness is further enriched by the study of asymmetric valleys. \cite{he2019asymmetric} observe that the local minima in deep networks are often asymmetric: the loss increases abruptly in some directions while rising more gradually in others. Under mild assumptions, they prove that solutions biased toward the flat side of an asymmetric valley generalize better than the exact empirical minimizers. This finding is supported by weight-averaging techniques such as Stochastic Weight Averaging (SWA)~\citep{izmailov2018averaging}, which implicitly guide the model parameters toward flatter regions. Finally, optimization methods that explicitly target flatter regions---such as Sharpness-Aware Minimization (SAM) \citep{foret2021sharpnessaware}---modify the training objective to explicitly penalize local sharpness by modifying the training objective to
\begin{align}
    \mathcal{L}_\mathrm{SAM}(\boldsymbol{\theta})
    \;=\;
    \min_{\boldsymbol{\theta}} \; \max_{\|\delta\|\leq \rho} \mathcal{L}(\boldsymbol{\theta} + \delta), \label{eq:samequation}
\end{align}
with a radius parameter \(\rho > 0\). By ensuring robust performance in a neighborhood around \(\boldsymbol{\theta}\), SAM systematically favors flatter basins of the loss landscape.

Consequently, these studies underscore that a comprehensive understanding of loss landscape geometry—including factors such as curvature, asymmetry, and the connectivity of low-loss regions—is crucial not only for explaining the dynamics of SGD but also for designing algorithms that enhance generalization in deep networks.

\subsection{Implicit Regularization}
\label{subsec:implicit_reg}
Implicit regularization refers to the phenomenon by which the training algorithm itself biases the optimization process towards solutions with desirable generalization properties, even in the absence of explicit regularization terms. A growing body of evidence suggests that the dynamics of methods such as SGD inherently favor flat minima, which are linked to improved robustness and generalization \cite{frankle2018the, kalra2023phase}. This effect arises partly because the noise injected by stochastic sampling encourages exploration of the loss landscape, while the iterative nature of gradient descent effectively smooths out sharp curvatures.

The interplay between implicit regularization and loss landscape geometry is complex. For instance, weight decay—commonly applied as an explicit regularizer—has been shown to interact with the gradient dynamics in ways that further promote the selection of flatter minima \cite{xie2023on}. Moreover, while adaptive optimization methods (e.g., Adam \citep{kingma2017adam}, RAdam \citep{Liu2020On}) often accelerate convergence, they may sometimes undermine the implicit bias towards flatness, leading to a trade-off between optimization speed and generalization quality \citep{Liu2020On}. Recent studies have also drawn connections between implicit regularization and the lottery ticket hypothesis \citep{frankle2018the}, which posits that \emph{sparse subnetworks} within over-parameterized models can perform comparably to the full network when appropriately trained. Such observations highlight the multifaceted role of implicit regularization in deep learning.

In a nutshell, these three facets—training dynamics and phase transitions, loss landscape geometry, and implicit regularization—provide a framework for understanding how deep networks learn and generalize. The following sections will build on this foundation to discuss the key challenges in optimizing deep networks and survey a range of optimization methods that have been developed to tackle these issues.
\section{Key Challenges in Optimizing Deep Networks}
\label{sec:challenges}
The optimization of deep networks presents several interrelated challenges that stem from the inherent complexity of the models and the high-dimensional, non-convex nature of their loss landscapes. In this section, we discuss three core challenges: the difficulties posed by non-convex optimization, the delicate balance between achieving generalization and avoiding overfitting, and the computational constraints that arise when scaling optimization algorithms to large networks.
\subsection{Non-Convex Optimization}
\label{subsec:nonconvex_optimization}
Deep networks are trained by minimizing loss functions that are highly non-convex, characterized by a profusion of local minima, saddle points, and flat regions as illustrated in Figure~\ref{fig:nonconvexillustration}. This non-convexity complicates the task of finding globally optimal solutions and often necessitates reliance on local search methods. SGD and its variants have emerged as the workhorses for deep learning partly because the inherent noise in SGD helps in escaping shallow local minima and saddle points \citep{kalra2023phase, dauphin2014identifying}. Moreover, several studies have shown that the structure of the loss landscape is such that many local minima are of comparable quality in terms of generalization \citep{chiang2023loss, choromanska2015loss}. This observation implies that even if the global optimum is not reached, the solution found by SGD can still generalize well. To further navigate the complex curvature of the loss surface, second-order information is often exploited. Approaches such as Kronecker-Factored Approximate Curvature (K-FAC) \citep{martens2015optimizing} and its efficient variants \citep{9578481, zhang2023eva, benzing2022gradient, mozaffari2023mkor} seek to provide accurate curvature approximations that help steer the optimization process. However, their increased computational overhead remains a significant challenge, especially in large-scale settings. In parallel, classical optimization techniques such as the Limited-memory BFGS algorithm (L-BFGS) \citep{liu1989limited} have also influenced modern algorithmic developments. While L-BFGS is capable of providing precise curvature information, its direct application in stochastic, high-dimensional environments typical of deep learning is hindered by sensitivity to noise and scalability issues. As a result, recent work has focused on hybrid strategies that combine the efficiency of first-order methods with selective second-order insights, aiming to better exploit the geometry of the loss surface without incurring prohibitive computational costs.
\begin{figure}[!ht]
    \centering
    \includegraphics[width=\linewidth]{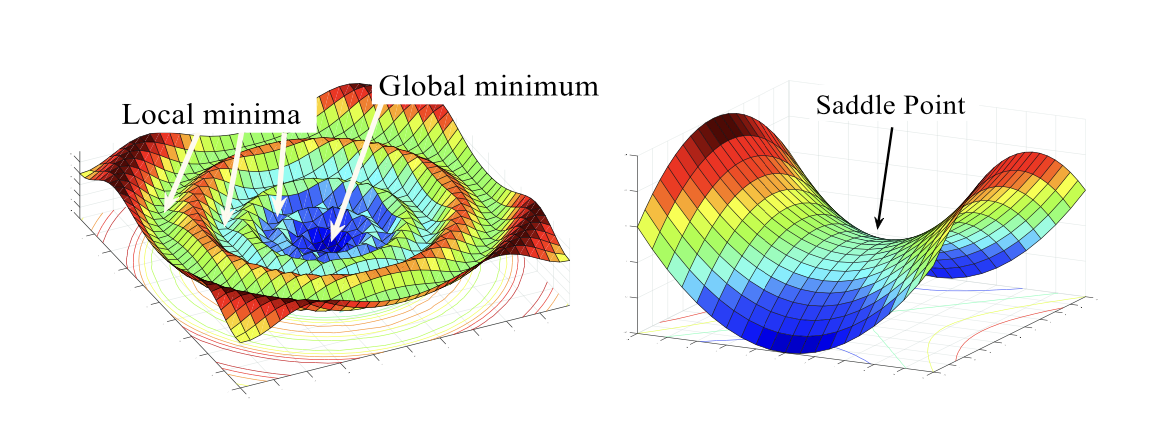}
    \caption{Three types of stationary points in non-convex optimization landscapes:
local minima, global minima, and saddle points.}
    \label{fig:nonconvexillustration}
\end{figure}
\subsection{Generalization vs. Overfitting}
\label{subsec:generalization_overfitting}
A central paradox in deep learning is that over-parameterized models can achieve excellent generalization performance despite their capacity to overfit. The challenge lies in steering the optimization process toward solutions that generalize well rather than merely fitting the training data. Implicit regularization, inherent in methods like SGD, plays a critical role by biasing training towards flatter minima---regions in the loss landscape that are less sensitive to perturbations and are empirically linked to improved generalization \citep{foret2021sharpnessaware, xie2023on, soudry2018implicit}. Explicit regularization techniques such as weight decay and dropout have traditionally been employed to mitigate overfitting. However, recent research suggests that the choice of optimization algorithm itself can exert a significant regularizing effect. For instance, adaptive methods like Adam and its variants---including AdamP and AdaBelief \citep{zhuang2020adabelief}---are known to accelerate convergence; yet, they may sometimes sacrifice the implicit bias toward flat minima that is beneficial for generalization \citep{wilson2017marginal}. Moreover, margin-based analyses have provided theoretical insights into how the implicit regularization of gradient-based methods can yield better generalization bounds even in over-parameterized settings \citep{ bartlett2017spectrally}. The ongoing investigation into the interplay between these dynamics is further enriched by studies on the lottery ticket hypothesis \citep{frankle2018the}, which posit that certain sparse subnetworks are inherently predisposed to generalize well. Collectively, these insights continue to drive the development of optimization strategies that aim to balance fitting accuracy with robust generalization.

\subsection{Computational Constraints}
The high computational cost associated with training deep networks is a persistent challenge, particularly as models continue to scale in size and complexity. While second-order methods, which utilize curvature information, can in principle lead to faster convergence, their direct application is often hindered by prohibitive memory and time requirements. Methods such as Shampoo \citep{gupta2018shampoo} and its variants attempt to incorporate second-order information through Kronecker decompositions, yet these approaches can be computationally intensive, especially for large-scale models.

To address these issues, researchers have proposed a variety of efficient approximations and hybrid methods. For example, efficient implementations of K-FAC \citep{frantar2021mfac, george2018fast, botev2017practical} and recent proposals such as FOOF \citep{benzing2022gradient} and M-FAC \citep{frantar2021mfac} aim to strike a balance between leveraging curvature information and maintaining computational feasibility. Additionally, first-order methods, including adaptive optimizers like RAdam and AdaFactor \citep{shazeer2018adafactor}, continue to be refined to reduce overhead while preserving convergence quality. The constant drive to lower computational burdens has not only spurred algorithmic innovations but also influenced hardware-aware optimizations, ensuring that deep network training remains tractable on modern computational platforms.

Hence, the challenges of non-convex optimization, balancing generalization with overfitting, and managing computational constraints define the current landscape of deep network optimization. Addressing these challenges continues to be a major focus of both theoretical investigations and practical algorithm design, as evidenced by the ongoing contributions to the machine learning community.
\section{Optimization Methods}
\label{sec:methods}
Optimization methods for deep networks can be categorized into four main groups: zeroth-order, first-order, second-order, and hybrid methods. Each category leverages different types of information and offers distinct trade-offs in computational cost, convergence behavior, and memory requirements. In this section, we provide detailed mathematical descriptions of these methods and refer to prominent works published in top conferences.
\subsection{Zeroth-Order Methods}
\label{subsec:zeroth_order}
Zeroth-Order (ZO) methods, also known as \emph{derivative-free} optimization techniques, form a class of algorithms that rely exclusively on function evaluations rather than on explicit gradient information. These approaches prove particularly useful in settings where gradients are unavailable, unreliable, or prohibitively expensive to compute. Applications span a wide range of areas, including black-box optimization, reinforcement learning, adversarial attacks, and other non-differentiable scenarios.
\paragraph{Zeroth-Order Optimization in NNs.} ZO methods address the parameter optimization problem 
\begin{align}
    \min_{\boldsymbol{\theta} \in \mathbb{R}^P}
    \;\mathcal{L}(\boldsymbol{\theta}) \label{eq:minimizationformualtionzeroorder}
\end{align}
for an $L$-layer NN $f_{\boldsymbol{\theta}}$ \emph{without} using backpropagation. Here, $P = \sum_{i=1}^L P_i$ is the total number of parameters. Instead of computing the parameter gradient via automatic differentiation, these methods approximate $\nabla_\theta \mathcal{L}(\boldsymbol{\theta}) $ by sampling \emph{function evaluations} of the loss landscape. When the network is $L$-smooth, meaning there exists a constant $L>0$ such that
\begin{align}
    \|\nabla_{\boldsymbol{\theta}} \mathcal{L}(\boldsymbol{\theta}) \;-\; \nabla_{\boldsymbol{\theta}} \mathcal{L}(\boldsymbol{\theta}')\|
    \;\leq\; 
    L\|\boldsymbol{\theta} \;-\; \boldsymbol{\theta}'\|, \notag
\end{align}
a classical \emph{coordinate-wise} finite-difference estimator takes the form
\begin{align}
    \widehat{\partial_{\theta_i} \mathcal{L}}(\boldsymbol{\theta})
    \;=\;
    \frac{\mathcal{L}(\boldsymbol{\theta} + h\,\mathbf{e_i}) 
          \;-\; 
          \mathcal{L}(\boldsymbol{\theta})}{h}, \notag
\end{align}
where $\mathbf{e_i}$ is the $i^{\text{th}}$ canonical basis vector in $\mathbb{R}^P$, and $h>0$ is a small finite-difference step size (or \emph{perturbation scale}). This method requires $\mathcal{O}(P)$ function evaluations per update, which can be prohibitive for large $P$ \citep{Golovin2020Gradientless}.

Modern implementations often opt for \emph{random directional} derivatives that perturb all parameters simultaneously. A typical estimator is
\begin{align}
    \widehat{\nabla}_{\boldsymbol{\theta}}\,\mathcal{L}(\boldsymbol{\theta})
    \;=\;
    \frac{\mathcal{L}\bigl(\boldsymbol{\theta} + h\,\mathbf{u}\bigr) 
          \;-\;
          \mathcal{L}(\boldsymbol{\theta})}{h}
    \,\mathbf{u},
    \quad
    \mathbf{u}\sim \mathcal{N}(\mathbf{0}, \mathbf{I}_P), \notag
\end{align}
where $\mathbf{u}$ is drawn from a spherically symmetric distribution (e.g., a Gaussian). In expectation, this \emph{directional} scheme optimizes a smoothed version of the loss
\begin{align}
    \mathcal{L}_h(\boldsymbol{\theta})
    \;=\;
    \mathbb{E}_{\mathbf{u}}\bigl[\mathcal{L}(\boldsymbol{\theta} + h\,\mathbf{u})\bigr],
    \quad
    \mathbb{E}_{\mathbf{u}}\bigl[\widehat{\nabla}_{\boldsymbol{\theta}}\,\mathcal{L}(\boldsymbol{\theta})\bigr]
    \;=\;
    \nabla_{\boldsymbol{\theta}} \mathcal{L}_h(\boldsymbol{\theta}), \notag
\end{align}
which provides a form of regularization in high-dimensional spaces \citep{ghadimi2013stochastic}.

The parameter updates mirror those of SGD (see Eq. (\ref{eq:gradientdescentupdaterule})) but employ estimated gradients
\begin{align}
    \boldsymbol{\theta}^{(t+1)} 
    \;=\; 
    \boldsymbol{\theta}^{(t)} 
    \;-\;
    \alpha^{(t)}\,\widehat{\nabla}_{\boldsymbol{\theta}}\,\mathcal{L}\bigl(\boldsymbol{\theta}^{(t)}\bigr). \notag
\end{align}
Under bounded variance, i.e.\ 
\(\mathbb{E}\bigl[\|\widehat{\nabla}_{\boldsymbol{\theta}}\,\mathcal{L}(\boldsymbol{\theta}) - \nabla_{\boldsymbol{\theta}}\,\mathcal{L}(\boldsymbol{\theta})\|^2\bigr] \le \sigma^2\), and assuming $L$-smoothness, the average squared gradient norm over $T$ iterations satisfies
\begin{align}
    \frac{1}{T}\sum_{t=1}^T \mathbb{E}\bigl[\|\nabla_{\boldsymbol{\theta}} \mathcal{L}(\boldsymbol{\theta}^{(t)})\|^2\bigr]
    \;\le\;
    \mathcal{O}\!\Bigl(\frac{L\bigl(\mathcal{L}\bigl(\boldsymbol{\theta}^{(1)}\bigr) - \mathcal{L}^*\bigr)}{T}
    \;+\;
    \frac{P\,L\,\sigma^2}{T}\Bigr), \notag
\end{align}
where $\mathcal{L}^*$ is the global optimum \citep{ghadimi2013stochastic}. Consequently, to achieve an $\epsilon$-stationary solution, one needs $\mathcal{O}\bigl(\frac{P}{\epsilon^2}\bigr)$ iterations, reflecting the \emph{dimension dependence} in zeroth-order estimates. 

Parallelization can partially mitigate this cost. For instance, \cite{lian2016comprehensive} show that with $K$ parallel workers and communication delay bound $\tau$, the iteration complexity improves to $\mathcal{O}\!\bigl(\tfrac{P}{K\,\epsilon^2}\bigr)$ under certain conditions, maintaining a near-linear speedup when $K \leq \mathcal{O}\!\bigl(T^{1/4}\bigr)$. Additionally, \cite{al2020sign} propose \emph{sign-based} compression of the updates
\begin{align}
    \boldsymbol{\theta}^{(t+1)} 
    \;=\;
    \boldsymbol{\theta}^{(t)}
    \;-\;
    \alpha^{(t)}
    \,\mathrm{sign}\!\Bigl(\widehat{\nabla}_{\boldsymbol{\theta}} \mathcal{L}\bigl(\boldsymbol{\theta}^{(t)}\bigr)\Bigr), \notag
\end{align}
which reduces communication overhead. This approach is further motivated by the finding that $\mathrm{sign}\bigl(\nabla_{\boldsymbol{\theta}} \mathcal{L}\bigr)$ often preserves sufficient information about the descent direction, especially when coupled with adaptive step sizes \citep{zhao2025secondorder}. Such sign-based updates highlight the broad flexibility of zeroth-order optimization in large-scale, high-dimensional NN training. 
\paragraph{Recent Advances and Practical Successes.}
Driven by the need to make derivative-free methods more efficient, recent techniques incorporate adaptive sampling and second-order insights. For instance, MeZO \citep{malladi2023fine} circumvents backpropagation by directly estimating gradients through carefully designed perturbations in a \emph{symmetric} manner
\begin{align}
    \widehat{\nabla}_{\boldsymbol{\theta}} \mathcal{L}(\boldsymbol{\theta})
    \;=\;
    \frac{\mathcal{L}(\boldsymbol{\theta} + \epsilon\,\mathbf{z}) - \mathcal{L}(\boldsymbol{\theta} - \epsilon\,\mathbf{z})}{2\,\epsilon}
    \;\mathbf{z}, \notag
\end{align}
where $\mathbf{z} \sim \mathcal{N}(\mathbf{0}, \mathbf{I}_P)$. Under $\mathcal{L}\in C^3$, this estimator yields $\nabla_{\boldsymbol{\theta}} \mathcal{L}(\boldsymbol{\theta})$ up to $\mathcal{O}(\epsilon^2)$ \citep{chen2019zo,malladi2023fine}, and storing random seeds rather than full vectors $\mathbf{z}$ improves memory usage. \cite{zhao2025secondorder} combine variance reduction and adaptive learning rates for faster convergence in high-dimensional settings, extending the work of \cite{ghadimi2013stochastic,lian2016comprehensive} to show that under unbiasedness and bounded-variance assumptions, zeroth-order updates can inherit many classical SGD convergence properties. Other influential contributions have broadened the scope of derivative-free optimization. \cite{Golovin2020Gradientless} highlight the challenges of exponential growth in function evaluations with dimension, emphasizing the importance of efficient sampling schemes. \cite{rando2024optimal} reduce variance via smoothing-based estimators, and \cite{larson2019derivative} address adversarial noise with robust ZO methods. \cite{zhang2022how} propose flexible exploration-exploitation strategies for randomized search. Collectively, these studies underscore that while zeroth-order methods can be a powerful alternative when gradients are unavailable or unreliable, their practical utility depends heavily on \emph{reducing} the number of function evaluations. 

Moreover, other methods such as ZO-SVRG (Zeroth-Order Stochastic Variance Reduced Gradient) \citep{liu2018zeroth} employs control variates to reduce variance in the gradient approximation. Specifically, it defines
\begin{align}
    \mathbf{m}^{(t)} = \widehat{\nabla}_{\boldsymbol{\theta}} \mathcal{L}(\boldsymbol{\theta}^{(t)}) - \widehat{\nabla}_{\boldsymbol{\theta}} \mathcal{L}(\tilde{\boldsymbol{\theta}}) + \tilde{\boldsymbol{\mu}}, \quad \tilde{\boldsymbol{\mu}} = \frac{1}{|\mathcal{B}|}\sum_{i\in \mathcal{B}} \widehat{\nabla}_{\boldsymbol{\theta}} \mathcal{L}_i\bigl(\tilde{\boldsymbol{\theta}}\bigr). \notag
\end{align}
where \(\tilde{\boldsymbol{\theta}}\) is a reference parameter (typically updated periodically), and \(\tilde{\boldsymbol{\mu}}\) is computed as the average of the zeroth-order gradient estimates over a mini-batch \(\mathcal{B}\). This estimator combines the current gradient approximation \(\widehat{\nabla}_{\boldsymbol{\theta}} \mathcal{L}(\boldsymbol{\theta}^{(t)})\) with a correction term that accounts for the discrepancy between the current parameter and the reference parameter, thus reducing the overall variance of the estimate. The update rule for the parameters is then given by 
\begin{align}
\boldsymbol{\theta}^{(t+1)} = \boldsymbol{\theta}^{(t)} - \alpha \,\mathbf{m}^{(t)}, \notag
\end{align}
where \(\alpha\) is the learning rate. By leveraging both the current and historical gradient information, ZO-SVRG is able to reduce the variance from \(\mathcal{O}(d/\sqrt{T})\) to \(\mathcal{O}(d/T + 1/b)\), thereby stabilizing convergence even in high-dimensional settings. 

Furthermore, the adaptive framework has been also used in ZO methods as demonstrated by ZO-AdaMM \citep{chen2019zo} which extends Adam into a zeroth-order framework by iterating
\begin{align}
\mathbf{m}^{(t+1)} =  \beta_1 \,\mathbf{m}^{(t)} + 
(1-\beta_1)\,\widehat{\nabla}_{\boldsymbol{\theta}} \mathcal{L}\bigl(\boldsymbol{\theta}^{(t+1)}\bigr), \quad 
\mathbf{v}^{(t+1)} = \beta_2 \,\mathbf{v}^{(t)} + (1-\beta_2)\,\bigl(\widehat{\nabla}_{\boldsymbol{\theta}} \mathcal{L}(\boldsymbol{\theta}^{(t+1)})\bigr)^{2}, \label{eq:adamzeorhorder}
\end{align}
the update rule is $\boldsymbol{\theta}^{(t+1)} = \boldsymbol{\theta}^{(t)} - \alpha \,\frac{\mathbf{m}^{(t)}}{\sqrt{\mathbf{v}^{(t)}} + \delta}$, where $\beta_1,\beta_2\in(0,1)$ and $\delta>0$ is a regularization constant. This per-parameter adaptation remains valid in a ZO setting provided that the expected magnitude of $(\mathbf{v}^{(t)})^{-1/2}\,\widehat{\nabla}_{\boldsymbol{\theta}} \mathcal{L}(\boldsymbol{\theta}^{(t)})$ remains bounded. 

Integrating second-order free information has also been studied by several works \citep{zhao2025secondorder, bollapragada2023adaptive}, which have explored approximating second-order information without explicit differentiation. In the context of ZO methods, one seeks to recover curvature information through function evaluations rather than analytic derivatives. This is particularly beneficial when gradients or Hessians are either unavailable or too expensive to compute. A common strategy is to estimate Hessian actions via \emph{finite-difference approximations}. For example, a quasi-Newton scheme might estimate the product of the Hessian \(H^{(t)}\) with a vector \(\mathbf{z}\) by using the relation
\begin{align}
H^{(t)}\,\mathbf{z} \;\approx\;
\frac{\mathcal{L}\bigl(\boldsymbol{\theta}^{(t)} + h^{(t)}\,\mathbf{z} + h^{(t)}\,\mathbf{u}\bigr)
      \;-\;
      \mathcal{L}\bigl(\boldsymbol{\theta}^{(t)} + h^{(t)}\,\mathbf{u}\bigr)
     }{(h^{(t)})^{2}}, \notag
\end{align}
where \(\mathbf{u} \sim \mathcal{N}(\mathbf{0}, \mathbf{I}_P)\) serves as a random direction that, when combined with the directional perturbation \(\mathbf{z}\), yields an approximation of the second-order behavior along that direction. The key idea is that the change in the loss function, when evaluated at points that are slightly perturbed in both the desired and a random direction, encodes information about the curvature of the loss surface. This approach circumvents the need for explicit Hessian computation while still capturing essential curvature details. Subsequently, one can employ a BFGS update to build an approximation \(B^{(t)} \approx (H^{(t)})^{-1}\) of the inverse Hessian. This approximated inverse Hessian is then used to precondition the gradient, leading to a curvature-informed update of the form,
\begin{align}
\boldsymbol{\theta}^{(t+1)} = \boldsymbol{\theta}^{(t)} - \alpha^{(t)}\,B^{(t)}\,\widehat{\nabla}_{\boldsymbol{\theta}}\mathcal{L}(\boldsymbol{\theta}^{(t)}), \notag
\end{align}
where \(\widehat{\nabla}_{\boldsymbol{\theta}}\mathcal{L}(\boldsymbol{\theta}^{(t)})\) represents a ZO estimate of the gradient. By adjusting the descent direction based on the local curvature, this method is capable of accelerating convergence, especially in ill-conditioned landscapes where the loss surface may exhibit steep or flat regions. Although these methods are computationally heavier due to the additional function evaluations and the complexity of updating the inverse Hessian approximation, the integration of curvature information can lead to more robust and efficient optimization. In particular, the ability to modulate the step direction and length based on local curvature often results in improved convergence behavior, making these techniques an attractive alternative in settings where traditional gradient-based methods may struggle \citep{shu2023zerothorder}.
\paragraph{Summary and Outlook.}
In summary, zeroth-order methods offer a derivative-free alternative for optimizing NNs, relying solely on function evaluations instead of explicit gradient computations. This characteristic makes them particularly attractive in scenarios where gradients are unavailable, unreliable, or computationally expensive, such as in black-box optimization, reinforcement learning, and adversarial attacks. By approximating gradients through techniques like coordinate-wise finite differences or random directional derivatives, these methods enable optimization in non-differentiable settings, albeit at the cost of increased function evaluations and inherent dimension dependence. Recent advances, including variance reduction schemes such as ZO-SVRG and adaptive frameworks like ZO-AdaMM, have significantly improved the efficiency and stability of zeroth-order optimization. Furthermore, the incorporation of second-order information via finite-difference approximations and quasi-Newton updates has enhanced convergence in ill-conditioned landscapes by providing curvature-aware steps. While challenges remain—particularly regarding the high computational overhead in high-dimensional spaces—the ongoing development of efficient sampling, parallelization, and adaptive techniques holds promise for expanding the practical applicability of derivative-free methods in large-scale deep learning and beyond.
\subsection{First-Order Methods}
\label{subsec:first_order_methods}
First-order methods remain the most widely used optimization techniques in deep learning, as they rely on gradient information to iteratively update model parameters~\citep{goodfellow2016deep}. We consider a function $\mathcal{L}: \, \, \mathbb{R}^d \rightarrow \mathbb{R}$ to be minimized with respect to some parameters $\boldsymbol{\theta}$. In its simplest form, one solves
\begin{align}
    \min_{\boldsymbol{\theta} \in \mathbb{R}^d} \mathcal{L}(\boldsymbol{\theta}),
    \quad
    \mathbf{g}^{(t)} = \nabla \mathcal{L}(\boldsymbol{\theta})\in\mathbb{R}^d, \label{eq:minimizationproblem}
\end{align}
via the \emph{gradient descent} update defined in Eq. (\ref{eq:gradientdescentupdaterule}). Note that the formulation defined in Eq. (\ref{eq:minimizationproblem}) is similar to the one defined in Eq. (\ref{eq:minimizationformualtionzeroorder}), except that we now compute the actual gradient value instead of estimating it. Due to the large scale of modern datasets and models, practitioners typically rely on \emph{mini-batch} SGD where $\mathbf{g}^{(t)}$ is substituted by an unbiased gradient estimate computed on a random mini-batch of training data, i.e. $\widehat{\nabla}_{\boldsymbol{\theta}} \mathcal{L}(\boldsymbol{\theta}^{(t)})$ in Eq. (\ref{eq:gradientdescentupdaterule}). This stochastic approach not only reduces computation but also helps escape sharp minima and saddle points due to the induced noise the computed gradient~\citep{dauphin2015equilibrated, chenconvergence, j.2018on}.

\paragraph{SGD‐Based and Momentum Methods.}
A straightforward yet crucial refinement of SGD is to incorporate \emph{momentum}, a technique originally studied by \cite{polyak1964some} and subsequently refined for NNs \citep{sutskever2013importance}. In its classical form, one initializes \(\boldsymbol{\theta}_0\) and \(\mathbf{m}_0=\mathbf{0}\), then iterates
\begin{align}
    \mathbf{m}^{(t+1)}
    &\;=\;
    \beta\,\mathbf{m}^{(t)}
    \;-\;
    \mathbf{g}^{(t)}, \label{eq:momemtum}\\
    \boldsymbol{\theta}^{(t+1)}
    &\;=\;
    \boldsymbol{\theta}^{(t)}
    \;+\;
    \alpha^{(t)}\,\mathbf{m}^{(t+1)}, \label{eq:updaterulee}
\end{align}
where \(\mathbf{m}^{(t)}\in\mathbb{R}^d\) is a \emph{velocity} term, \(\beta\in(0,1)\) is the damping coefficient, and \(\alpha^{(t)}>0\) denotes the learning rate schedule. By accumulating gradients in \(\mathbf{m}^{(t)},\) momentum can significantly speed up convergence, particularly if \(\beta\) is well-chosen. However, if \(\beta\) is set too small, progress in low-curvature directions may stall, whereas a large \(\beta\) risks instabilities and oscillations in high-curvature regions.

\emph{Nesterov Accelerated Gradient} (NAG) \citep{nesterov1983method, sutskever2013importance} further refines classical momentum by evaluating the gradient at a look-ahead position, effectively correcting errors introduced by moving in the direction of the velocity in Eq. (\ref{eq:momemtum}). One writes
\begin{align}
    \mathbf{m}^{(t+1)}
    \;=\;
    \beta\,\mathbf{m}^{(t)}
    \;-\;
    \nabla_{\boldsymbol{\theta}} \mathcal{L}\bigl(\boldsymbol{\theta}^{(t)} \;+\; \alpha^{(t)}\,\beta\,\mathbf{m}^{(t)}\bigr), \notag
\end{align}
and apply $\boldsymbol{\theta}^{(t+1)} = \boldsymbol{\theta}^{(t)} + \mathbf{m}^{(t+1)}$. Nesterov’s modification helps stabilize updates in non-convex optimization and can accelerate convergence in quadratic settings by implicitly adapting to the curvature.

Beyond these foundational momentum schemes, numerous enhancements address high-dimensional or otherwise challenging NN landscapes. For example, Lion method~\citep{chen2024symbolic} incorporates a diagonal preconditioner into the momentum update. Formally, one can write the update as
\begin{align}
\mathbf{m}^{(t+1)} = \beta \mathbf{m}^{(t)} - (1-\beta) \, P \odot\mathbf{g}^{(t)}, \notag
\end{align}
where \(P\) is a diagonal matrix that rescales the gradient along each coordinate and applies Eq. (\ref{eq:updaterulee}). The intuition behind this approach is to adaptively adjust the learning rate in each dimension based on local curvature, thereby enhancing numerical conditioning and promoting a more balanced descent across the parameter space. GaLore method~\citep{zhao2024galore} adopts a complementary strategy by projecting the gradient into a reduced-dimensional subspace. Let \(U \in \mathbb{R}^{d \times k}\) be an orthonormal matrix whose columns form a basis for a subspace with \(k \ll d\). The projected gradient is then given by
\begin{align}
(\mathbf{g}^{(t)})^{\text{proj}} = U U^\top \mathbf{g}^{(t)}, \label{eq:lion}
\end{align}
and integrating Eq. (\ref{eq:lion}) into Eq. (\ref{eq:momemtum}), the momentum update becomes
\begin{align}
\mathbf{m}^{(t+1)} = \beta \mathbf{m}^{(t)} + (1-\beta) \, (\mathbf{g}^{(t)})^{\text{proj}}. \notag
\end{align}
This projection effectively retains the most critical descent directions while significantly reducing memory usage. The key concept is that even in a high-dimensional setting, the most informative directions for descent may lie on a low-dimensional manifold, thus allowing the method to discard redundant information without compromising convergence. Moreover, WIN method~\citep{zhou2022win} fuses weight decay with a Nesterov-like momentum framework. In this method, the velocity update is modified to incorporate weight regularization directly into the gradient accumulation. i.e.
\begin{align}
\mathbf{m}^{(t+1)} = \beta \mathbf{m}^{(t)} - (1-\beta) \left(\mathbf{g}^{(t)} + \lambda \boldsymbol{\theta}^{(t)} \right), \notag
\end{align}
where \(\lambda\) is the weight decay coefficient. The update rule is similar as Eq. (\ref{eq:updaterulee}). By integrating weight decay into the momentum term, WIN simultaneously enforces regularization and benefits from the anticipatory nature of Nesterov momentum, which results in smoother training trajectories and enhanced convergence behavior.

Despite their diverse implementations, these momentum-oriented methods share a consistent theme: the effective use of a velocity term, which is carefully maintained and updated, can significantly accelerate deep learning optimization. Crucially, the success of these approaches hinges on the appropriate tuning of damping factors and step sizes to balance acceleration with stability.
\paragraph{Adaptive Adam‐Type Approaches.}
While momentum methods amplify or dampen the \emph{direction} of the raw gradient signal, a complementary branch of first-order optimizers focuses on \emph{adaptively} scaling the \emph{magnitude} of each parameter’s update. Early pioneers in this category include Adagrad~\citep{JMLR:v12:duchi11a} and RMSProp~\citep{hinton2012neural}. Adagrad accumulates the squared gradients over time, thus adjusting the learning rate based on the cumulative variance of each parameter’s gradient. Formally, the parameter update then becomes
\begin{align}
    \boldsymbol{\theta}^{(t+1)}
    \;=\;
    \boldsymbol{\theta}^{(t)}
    \;-\;
    \alpha \,\frac{\mathbf{g}^{(t)}}{\sqrt{\sum_{\tau=1}^t (\mathbf{g}^{(\tau)})^2} \;+\;\epsilon}, \notag
\end{align}
where \(\epsilon>0\) avoids division by zero. Although Adagrad excels at handling sparse features by boosting the effective learning rate for infrequently updated parameters, its global learning rate can diminish excessively over time. RMSProp addresses this by introducing an exponential moving average of the squared gradients rather than a strict sum
\begin{align}
    \mathbf{v}^{(t+1)}
    \;=\;
    \beta \,\mathbf{v}^{(t)}
    \;+\;
    (1-\beta)\,(\mathbf{g}^{(t)})^2,
    \quad
    \boldsymbol{\theta}^{(t+1)}
    \;=\;
    \boldsymbol{\theta}^{(t)}
    \;-\;
    \alpha \,\frac{\mathbf{g}^{(t)}}{\sqrt{\mathbf{v}^{(t+1)}} \;+\;\epsilon}, \notag
\end{align}
where \(\beta\in(0,1)\). By continuously discounting older gradients, RMSProp better balances fast adaptation with sustained learning. Adam can be viewed as combining RMSProp’s second-moment tracking with a momentum-like first-moment term, i.e.
\begin{align}
    \mathbf{m}^{(t+1)}
    \;=\;
    \beta_1\,\mathbf{m}^{(t)}
    \;+\;
    (1-\beta_1)\,\mathbf{g}^{(t)},
    \quad
    \mathbf{v}^{(t+1)}
    \;=\;
    \beta_2\,\mathbf{v}^{(t)}
    \;+\;
    (1-\beta_2)\,\bigl(\mathbf{g}^{(t)}\bigr)^2, \label{eq:adam}
\end{align}
where \(\beta_1,\beta_2\in(0,1)\). Note that Eq. (\ref{eq:adamzeorhorder}) is the same as Eq. (\ref{eq:adam}) except that the gradient is estimated rather than explicitly computer. After correcting for initialization bias, the update becomes
\begin{align}
    \boldsymbol{\theta}^{(t+1)}
    \;=\;
    \boldsymbol{\theta}^{(t)}
    \;-\;
    \alpha\,
    \frac{
      \mathbf{m}^{(t+1)}/(1-\beta_1^{(t+1)})
    }{
      \sqrt{\,\mathbf{v}^{(t+1)}/(1-\beta_2^{(t+1)})\,} + \epsilon
    }. \notag
\end{align}
This per-parameter scaling often yields rapid convergence and robustness to HP misconfiguration, making Adam and its variants among the most widely adopted optimizers in deep learning. This per-parameter adaptation often yields faster convergence and greater numerical stability. Building on Adam's popularity, several adaptive methods have been proposed to mitigate specific challenges in large-scale NN optimization. AdaFactor addresses memory constraints by factorizing the second-moment estimates. For a matrix‐valued parameter \(\Theta^{(t)} \in \mathbb{R}^{m \times n}\) with gradient \(G^{(t)}\) (here $\Theta^{(t)} = \text{vec}^{-1}(\boldsymbol{\theta}^{(t)})$ and $G^{(t)} =  \text{vec}^{-1}(g^{(t)})$), instead of storing a full second‐moment tensor, AdaFactor computes two accumulators—a row vector \(\mathbf{r}^{(t)} \in \mathbb{R}^m\) and a column vector \(\mathbf{c}^{(t)} \in \mathbb{R}^n\)—updated as
\begin{align}
\mathbf{r}^{(t+1)}_i = \beta_2\,\mathbf{r}^{(t)}_i + (1-\beta_2) \sum_{j=1}^{n} \left(G_{ij}^{(t+1)}\right)^2, \quad \mathbf{c}^{(t+1)}_j = \beta_2\,\mathbf{c}^{(t)}_j + (1-\beta_2) \sum_{i=1}^{m} \left(G_{ij}^{(t+1)}\right)^2. \notag
\end{align}
The effective second moment for each entry is then approximated via a factorization of these statistics,
\begin{align}
V_{ij}^{(t+1)} = \frac{\mathbf{r}^{(t+1)}_i\,\mathbf{c}^{(t+1)}_j}{\sum_{j=1}^{n} \mathbf{c}^{(t+1)}_j}. \notag
\end{align}
Given this approximation, the update rule for the parameter matrix becomes
\begin{align}
\Theta^{(t+1)} = \Theta^{(t)} - \alpha \,\frac{G^{(t+1)}}{\sqrt{V^{(t+1)}} + \epsilon}.
\end{align}
This update mechanism allows AdaFactor to dramatically reduce memory overhead while still capturing essential second-order information through the factorized approximation, making it particularly attractive for large-scale models such as Transformers \citep{NIPS2017_3f5ee243}. To continue, RAdam aims to improve Adam’s instability during early training by rectifying the variance of the adaptive learning rate. It first computes a statistic
\begin{align}
\mathbf{\rho}^{(t)} = \mathbf{\rho}_\infty - \frac{2t\beta_2^{(t)}}{1-\beta_2^{(t)}}, \qquad \text{with} \qquad \mathbf{\rho}_\infty = \frac{2}{1-\beta_2} - 1, \notag
\end{align}
which indicates whether the variance of the second-moment estimate has stabilized. When \(\mathbf{\rho}^{(t)}\) is sufficiently large, a rectification factor \(\phi(\mathbf{\rho}^{(t)})\) is applied to the update
\begin{align}
\boldsymbol{\theta}^{(t+1)} = \boldsymbol{\theta}^{(t)} - \alpha \, \frac{\phi(\mathbf{\rho}^{(t+1)})\, \mathbf{m}^{(t+1)}}{\sqrt{\mathbf{v}^{(t+1)}} + \epsilon}, \notag
\end{align}
where \(\phi(\mathbf{\rho}^{(t)})\) is defined as
\begin{align}
\phi(\mathbf{\rho}^{(t)})=\begin{cases}
\sqrt{\frac{(\mathbf{\rho}^{(t)}-4)(\mathbf{\rho}^{(t)}-2)\mathbf{\rho}_\infty}{(\mathbf{\rho}_\infty-4)(\mathbf{\rho}_\infty-2)\mathbf{\rho}^{(t)}}}, & \text{if } \mathbf{\rho}^{(t)} > 4,\\[1ex] 
1, & \text{otherwise}.
\end{cases} \notag
\end{align}
This rectification ensures that, during the early stages of training, the update resembles SGD with momentum, and as the second-moment estimate stabilizes, the optimizer gradually transitions to a fully adaptive behavior. Moreover, AdaBound~\citep{luo2018adaptive} mitigates extreme early behavior by dynamically bounding the effective learning rate. In this method, the adaptive learning rate is clipped between two time-dependent bounds \(\alpha_{\mathrm{low}}(t)\) and \(\alpha_{\mathrm{high}}(t)\), leading to the modified step size
\begin{align}
\hat{\alpha}^{(t)} = \mathrm{clip}\left(\frac{\alpha}{\sqrt{\mathbf{v}^{(t)}} + \epsilon},\, \alpha_{\mathrm{low}}(t),\, \alpha_{\mathrm{high}}(t)\right), \notag
\end{align}
with the update rule given by $ \boldsymbol{\theta}^{(t+1)} = \boldsymbol{\theta}^{(t)} - \hat{\alpha}^{(t+1)} \, \mathbf{m}^{(t+1)}$. Here, \(\alpha\) denotes the base learning rate, which sets the overall scale of the update before the adaptive scaling and clipping are applied. As the bounds tighten over time, the algorithm increasingly resembles vanilla SGD, which can yield improved generalization in the later stages of training. Then, AdaBelief modifies Adam’s second-moment estimation by tracking the deviation of the observed gradient from its exponential moving average. The second-moment accumulator is updated as
\begin{align}
\mathbf{v}^{(t+1)} = \beta_2 \mathbf{v}^{(t)} + (1-\beta_2)(\mathbf{g}^{(t+1)} - \mathbf{m}^{(t+1)})^2. \notag
\end{align}
The parameter update follows the familiar form $\boldsymbol{\theta}^{(t+1)} = \boldsymbol{\theta}^{(t)} - \alpha \, \frac{\mathbf{m}^{(t+1)}}{\sqrt{\mathbf{v}^{(t+1)}} + \epsilon}$.
By emphasizing the ''belief'' in the current gradient direction (i.e., penalizing unexpected deviations), AdaBelief tends to yield a more stable and reliable convergence. AdamW~\citep{loshchilov2018decoupled} resolves a conceptual inconsistency in Adam by decoupling weight decay from the gradient-based update. Instead of mixing L2 regularization into the gradient, AdamW applies weight decay as a separate multiplicative factor
\begin{align}
\boldsymbol{\theta}^{(t+1)} = \boldsymbol{\theta}^{(t)} (1-\lambda) - \alpha \, \frac{\mathbf{m}^{(t+1)}}{\sqrt{\mathbf{v}^{(t+1)}} + \epsilon}. \notag
\end{align}
where $\lambda$ is the weight decay HP. This decoupling clarifies the role of regularization and has shown empirical benefits, especially in large-scale language models. Furthermore, NaDam \citep{dozat2016incorporating} is a variant of the Adam optimizer that integrates Nesterov momentum into the adaptive framework. In standard momentum-based methods, the momentum term $\mathbf{m}^{(t+1)}$, is updated as in Eq. (\ref{eq:adam}). In NaDam, rather than using the momentum term alone to update the parameters, the update rule incorporates a weighted combination of both the momentum and the current gradient
\begin{align}
\boldsymbol{\theta}^{(t+1)} = \boldsymbol{\theta}^{(t)} - \alpha \left( \beta_1 \mathbf{m}^{(t+1)} + (1-\beta_1) \mathbf{g}^{(t+1)} \right). \notag
\end{align}
This formulation mimics the essence of NAG where NaDam approximates this look-ahead evaluation. The intuition behind this approach is that the weighted sum $ \beta_1 \mathbf{m}^{(t+1)} + (1-\beta_1) \mathbf{g}^{(t+1)}$ serves as an effective surrogate for the gradient computed at a future point. This anticipatory update direction can lead to a more informed descent step, potentially improving convergence rates by better aligning the parameter update with the long-term trajectory of the optimization. AdaInject~\citep{dubey2022adainject} represents a further evolution by incorporating approximations of second-order information into an Adam-like update. In addition to the standard first and second-moment estimates, AdaInject computes a curvature-aware term \(\mathbf{s}^{(t)}\) that modulates the adaptive learning rate
\begin{align}
\boldsymbol{\theta}^{(t+1)} = \boldsymbol{\theta}^{(t)} - \alpha \, \frac{\mathbf{m}^{(t+1)}}{\sqrt{\mathbf{v}^{(t+1)} + \gamma \mathbf{s}^{(t+1)}} + \epsilon}, \notag
\end{align}
where \(\gamma\) governs the influence of the injected second-order statistics. This integration of partial curvature information aims to combine the simplicity of first-order methods with the robustness of second-order approaches, potentially leading to more effective convergence.
\paragraph{Sharpness‐Aware Extensions.}
While most first‐order methods emphasize convergence speed or adaptive scaling, sharpness‐aware approaches directly target the geometry of the loss surface by controlling the \emph{sharpness} of local minima. SAM reformulates the standard objective into a minimax problem defined in Eq. (\ref{eq:samequation}). This formulation forces the optimization to favor parameter regions where the loss remains stable under small adversarial perturbations, effectively promoting flatter minima that are empirically associated with better generalization. In practice, SAM first computes a tentative update by taking a gradient step from \(\theta\) and then re-evaluates the loss in a local neighborhood around this point, ensuring that the descent direction leads to a region of the loss surface with reduced sharpness.

Several variants extend SAM by integrating adaptive scaling or refined geometric insights. AdaSAM~\citep{sun2024adasam} merges the neighborhood‐based minimax framework of SAM with Adam’s per‐parameter scaling, yielding an approach that leverages local curvature information while retaining the benefits of adaptive learning rates. ASAM~\citep{sun2024adasam} further refines the method by dynamically adjusting the perturbation radius \(\rho\) based on the magnitudes of the gradients, thereby fine‐tuning the sensitivity of the sharpness measure to the local loss landscape. Meanwhile, GSAM~\citep{zhuang2022surrogate} incorporates the top eigenvalue of the local Hessian into the sharpness estimation, offering a more detailed account of the loss curvature. Collectively, these sharpness‐aware extensions demonstrate that by incorporating geometric considerations into the optimization process, purely first‐order methods can be enhanced to locate flatter regions of the loss surface, potentially leading to models with superior generalization.

\paragraph{Summary and Outlook.}
In summary, first-order optimizers in deep learning branch into several intertwined families. Momentum-based methods augment raw SGD to accelerate convergence and reduce erratic updates, while adaptive Adam-type approaches further modulate step sizes on a per-parameter basis, often leading to faster training and improved stability. More recently, sharpness-aware optimizers use local curvature information to search for flatter minima, reflecting broader interest in explaining why certain solutions generalize better. Despite their diversity, these methods all share a reliance on gradient information, underscoring the central role of first-order techniques in modern deep learning optimization. 
\subsection{Second-Order Methods}
\label{subsec:second_order_methods}
Second-order methods incorporate curvature information into the optimization process by leveraging the Hessian matrix or its approximations. This additional information enables more informed parameter updates compared to first-order techniques, often leading to faster convergence and improved performance, especially in regions where the loss landscape exhibits significant curvature. Consider an objective function \(\mathcal{L}: \mathbb{R}^d \rightarrow \mathbb{R}\) that we wish to minimize with respect to the parameters \(\boldsymbol{\theta}\). The classical formulation of the problem is similar to Eq. (\ref{eq:minimizationproblem}), but with the integration of second order information, is defined as 
\begin{align}
    \min_{\boldsymbol{\theta} \in \mathbb{R}^d} \mathcal{L}(\boldsymbol{\theta}),
    \quad
    \mathbf{g}^{(t)} = \nabla_{\boldsymbol{\theta}} \mathcal{L}(\boldsymbol{\theta})\in\mathbb{R}^d, \quad H(\boldsymbol{\theta}) = \nabla_{\boldsymbol{\theta}}^2 \mathcal{L}(\boldsymbol{\theta}). \label{eq:minimizationproblemsecond}
\end{align}
The prototypical second-order method is Newton-Raphson’s method \citep{Dedieu2015}, which updates the parameters according to
\begin{align}
\boldsymbol{\theta}^{(t+1)} = \boldsymbol{\theta}^{(t)} - H(\boldsymbol{\theta}^{(t)})^{-1}\,\mathbf{g}(\boldsymbol{\theta}^{(t)}). \label{eq:updatesecondorderinfo}
\end{align}
While Newton’s method can achieve rapid convergence, the direct computation and inversion of the Hessian matrix \(H(\boldsymbol{\theta}^{(t)})\) becomes prohibitively expensive in high-dimensional settings, such as DNNs. To overcome these limitations, a variety of approximations and alternative strategies have been developed. These methods aim to retain essential curvature information while alleviating the computational burden associated with Hessian inversion. In what follows, we categorize these second-order methods based on the techniques they employ—ranging from quasi-Newton approximations and Hessian-free methods to structured or low-rank representations of the Hessian. This taxonomy highlights the trade-offs between computational efficiency and the fidelity of curvature information, paving the way for more robust and scalable optimization strategies in large-scale deep learning.
\paragraph{Hessian-Based Diagonal and Quasi-Newton Methods.}
Many second-order optimizers aim to capture curvature information without incurring the full cost of computing and inverting the Hessian \(H(\boldsymbol{\theta})\). A common strategy is to approximate the Hessian by either its diagonal or a structured factorization, thereby striking a balance between extracting useful curvature cues and maintaining computational efficiency. For instance, methods such as AdaHessian~\citep{yao2021adahessian} and Apollo~\citep{ma2020apollo} focus on approximating the diagonal elements of the Hessian. In these approaches, for each parameter \(\theta_i\) a scalar curvature estimate is obtained via
\begin{align}
D_i \approx \frac{\partial^2 \mathcal{L}(\boldsymbol{\theta})}{\partial \theta_i^2}, \notag
\end{align}
which is then used to rescale the gradient in the update
\begin{align}
\theta_i^{(t+1)} = \theta_i^{(t)} - \alpha \, \frac{m_i^{(t)}}{\sqrt{D_i^{(t)}} + \epsilon}. \notag
\end{align}
This rescaling captures directional curvature, allowing the optimizer to take larger steps in flat regions and more cautious updates in steep directions, thereby improving convergence behavior. However, while diagonal approximations are computationally efficient compared to the full Hessian, they inherently neglect cross-parameter interactions that can be critical in highly non-isotropic loss landscapes.

Other methods, such as Sophia~\citep{liu2024sophia} and INNA-Prop~\citep{bolte2024second}, also leverage diagonal Hessian approximations or use incremental updates to refine the gradient-based steps. These techniques are often embedded within Adam-like frameworks, where the adaptive scaling is enriched by curvature estimates, thus enabling more nuanced adjustments that better reflect the underlying loss surface geometry. In contrast, M-FAC focuses on efficiently computing inverse Hessian-vector products. Instead of explicitly forming or inverting \(H(\boldsymbol{\theta})\), M-FAC approximates its action on a vector by representing the Hessian as a low-rank or structured factorization,
\begin{align}
H(\boldsymbol{\theta}) \approx Q \Lambda Q^\top, \notag
\end{align}
where \(Q\) contains dominant eigenvectors and \(\Lambda\) is a diagonal matrix of eigenvalues. The inverse Hessian is then approximated by
\begin{align}
H^{-1}(\boldsymbol{\theta}) \approx Q \Lambda^{-1} Q^\top, \notag
\end{align}
which is used in the update defined in Eq. (\ref{eq:updatesecondorderinfo}). This approach allows the optimizer to incorporate global curvature information while avoiding the full cost of Hessian inversion.

Classic quasi-Newton methods, such as L-BFGS, remain influential by iteratively constructing an approximation of the inverse Hessian from successive gradient differences. In L-BFGS, one collects updates \(\mathbf{s}^{(t)} = \boldsymbol{\theta}^{(t+1)} - \boldsymbol{\theta}^{(t)}\) and \(\mathbf{y}^{(t)} = \mathbf{g}^{(t+1)} - \mathbf{g}^{(t)}\) to update the inverse Hessian estimate \(B^{(t)}\) through formulas such as
\begin{align}
B^{(t+1)} = (V^{(t)})^\top B^{(t)} V^{(t)} + \boldsymbol{\rho}^{(t)} \mathbf{s}^{(t)} (\mathbf{s}^{(t)})^\top, \label{eq:quasinewton}
\end{align}
with \(\boldsymbol{\rho}^{(t)} = \frac{1}{(\mathbf{y}^{(t)})^\top \mathbf{s}^{(t)}}\) and \(V^{(t)} = I - \boldsymbol{\rho}^{(t)} \mathbf{y}^{(t)} (\mathbf{s}^{(t)})^\top\). This quasi-Newton strategy efficiently captures curvature information from recent iterations, but its performance in deep learning is often limited by the high dimensionality of the parameter space and the stochasticity introduced by mini-batch training.

The central intuition behind these Hessian-based methods is to allow the optimizer to adjust step sizes based on the local geometry of the loss landscape. By rescaling the gradient using curvature information, these methods enable larger updates in flat regions and more conservative moves in steep directions, which can lead to faster convergence and improved stability. Nevertheless, challenges remain. Diagonal approximations may oversimplify the curvature by ignoring off-diagonal interactions, while low-rank or quasi-Newton approximations can be sensitive to noise and may incur non-negligible computational overhead. Balancing the fidelity of curvature information with computational tractability is thus a critical design consideration in the development of effective second-order methods for deep learning.
\paragraph{Generalized Gauss-Newton (GGN) and Hybrid Approaches.}
A further category of second-order techniques leverages the GGN matrix or other surrogates for the Hessian, providing a more tractable means of incorporating curvature information into the optimization process. In many NN settings, the Hessian is both expensive to compute and can be indefinite. The GGN offers an attractive alternative by approximating the Hessian using the model’s Jacobian \(J(\boldsymbol{\theta})\) and the Hessian of the loss with respect to the network outputs. Specifically, the GGN is defined as
\begin{align}
G(\boldsymbol{\theta}) = J(\boldsymbol{\theta})^\top\,\nabla_{f(\boldsymbol{\theta})}^2 \mathcal{L}\bigl(f(\boldsymbol{\theta})\bigr)\,J(\boldsymbol{\theta}), \notag
\end{align}
For many losses that are locally quadratic, such as least-squares losses, this Hessian is \emph{positive semidefinite}, ensuring that the curvature information provided by \(G(\boldsymbol{\theta})\) is both meaningful and stable.

Building on this idea, methods such as SOFO~\citep{yu2024secondorder} approximate curvature via the GGN to precondition the gradient efficiently. For instance, one might update parameters as
\begin{align}
\boldsymbol{\theta}^{(t+1)} = \boldsymbol{\theta}^{(t)} - \alpha\,\left(G(\boldsymbol{\theta}^{(t)}) + \lambda I\right)^{-1}\,\mathbf{g}(\boldsymbol{\theta}^{(t)}), \notag
\end{align}
where \(\lambda>0\) is a damping factor that ensures numerical stability and \(I\) is the identity matrix. This update captures essential curvature while sidestepping the prohibitive cost of a full Hessian inversion.

Hybrid approaches further integrate second-order insights into first-order frameworks to enhance both convergence speed and generalization. LocoProp~\citep{amid2022locoprop} exemplifies this strategy by introducing a local second-order correction within a predominantly first-order update scheme. Rather than computing a full curvature matrix, LocoProp performs a localized approximation around the current iterate, adjusting the gradient step based on this curvature estimate. Similarly, FOSI~\citep{sivan2024fosi} combines standard gradient information with a curvature correction term. A representative update rule in such hybrid methods is
\begin{align}
\boldsymbol{\theta}^{(t+1)} = \boldsymbol{\theta}^{(t)} - \alpha\left(\mathbf{g}(\boldsymbol{\theta}^{(t)}) + \beta\,\Delta\boldsymbol{\theta}_{\text{curv}}\right), \notag
\end{align}
where \(\Delta\boldsymbol{\theta}_{\text{curv}}\) embodies the second-order correction derived from partial curvature information and \(\beta\) modulates its influence. This blend of first- and second-order cues enables the optimizer to make more informed updates without incurring the full cost of second-order methods.

The intuition behind these GGN and hybrid approaches is to harness curvature information to achieve a more adaptive step size—allowing for larger steps in flat regions and more cautious moves in steep directions—thus accelerating convergence and potentially improving generalization. Nevertheless, challenges remain. While the GGN matrix is more computationally tractable than the full Hessian, its computation still incurs significant overhead in very large models, and the quality of the curvature approximation can be sensitive to the network architecture and the loss function. Moreover, hybrid methods require a delicate balance between first- and second-order contributions; an overly aggressive curvature correction may lead to instability, whereas insufficient correction may fail to realize the benefits of second-order information. Overall, these methods reflect a growing trend towards embedding partial curvature insights into existing optimization frameworks, striving for an optimal trade-off between computational cost and the accuracy of curvature information.
\paragraph{K-FAC and Its Extensions.}

\emph{Natural gradient descent} (NGD), introduced by Amari \cite{Amari2000MethodsOI}, leverages the FIM as a \emph{Riemannian metric} to perform parameter updates that are invariant to reparameterization, thereby often leading to more efficient convergence than standard gradient descent. In this framework, the update is given by
\begin{align}
\boldsymbol{\theta}^{(t+1)} = \boldsymbol{\theta}^{(t)} - \alpha\, (F^{(t)})^{-1} \mathbf{g}^{(t)},\label{eq:FIMeuqation}
\end{align}
where \(F\) is the FIM. However, directly computing and inverting \(F\) is generally intractable for high-dimensional models. To address this issue, K-FAC  approximates the FIM by exploiting the layer-wise structure of deep networks. Specifically, using Eq. (\ref{eq:FIMequation}) the FIM is given by
\begin{align}
F = \sum_{n=1}^{N} \mathbb{E}_{\mathbf{y} \sim p(\mathbf{y}|f_{\boldsymbol{\theta}}(\mathbf{x}_n))} \left[ \nabla_{\boldsymbol{\theta}}\log p_{\boldsymbol{\theta}}(\mathbf{y}|\mathbf{x}_n)\nabla_{\boldsymbol{\theta}}\log p_{\boldsymbol{\theta}}(\mathbf{y}|\mathbf{x}_n)^\top\right] = \mathbb{E}\left[\nabla_{\boldsymbol{\theta}}\mathcal{L}\, (\nabla_{\boldsymbol{\theta}}\mathcal{L})^\top\right] = \mathbb{E}[\mathbf{g}\mathbf{g}^\top], \notag
\end{align}
where \(F\) quantifies the expected information that an observable \(\mathbf{y}\) conveys about the parameter \(\boldsymbol{\theta}\). For simplicity, we write \(\mathbb{E}\) in place of the full expectation \(\mathbb{E}_{\mathbf{y} \sim p(\mathbf{y}|f_{\boldsymbol{\theta}}(\mathbf{x}_n))}\). K-FAC further simplifies FIM computation by adopting a block-diagonal approximation tailored to deep neural networks. By decomposing \(F\) into layer-specific blocks and employing a Kronecker-factorization based on the identity \(\text{vec}(uv^\top) = v \otimes u\), K-FAC efficiently approximates each block, thereby enabling scalable natural gradient updates. K-FAC capitalizes on the structure of DNNs by simplifying the computation of the FIM via a block-diagonal approximation. In this framework, the full FIM is viewed as a block matrix with \(L \times L\) blocks (for a network with \(L\) layers). The \((i,j)\)th block is defined by
\begin{align}
F_{i,j} = \mathbb{E}\!\left[\text{vec}(g_i)\,\text{vec}(g_j)^\top\right],\notag
\end{align}
where \(g_i\) denotes the gradient with respect to the weights of the \(i^{\text{th}}\) layer. Using the Kronecker-vectorization identity, \(\text{vec}(\mathbf{u}\mathbf{v}^\top) = \mathbf{v} \otimes \mathbf{u}\) (see, e.g., \citep{Petersen2008}), we can express the gradient for layer \(i\) as
\begin{align}
g_i = \mathbf{s}_{i}\,\bar{\mathbf{h}}_{i-1}^\top \quad \Longrightarrow \quad \text{vec}(g_i) = \bar{\mathbf{h}}_{i-1} \otimes \mathbf{s}_{i},
\end{align}
where \(\bar{\mathbf{h}}_{i-1}\) represents the activations (augmented with a bias term) from the previous layer, and \(\mathbf{s}_{i}\) represents the sensitivity or pre-activation derivatives at layer \(i\).

Segmenting the FIM into layer-specific blocks, we have
\begin{align}
\hat{F}_{i,j} = \mathbb{E}\!\left[\text{vec}(g_i)\,\text{vec}(g_j)^\top\right] = \mathbb{E}\!\left[\bar{\mathbf{h}}_{i-1}\bar{\mathbf{h}}_{j-1}^\top \otimes \mathbf{s}_{i}\mathbf{s}_{j}^\top\right]. \notag
\end{align}
Under the assumption that the activations \(\bar{\mathbf{h}}_{i-1}\) and the pre-activation derivatives \(\mathbf{s}_{i}\) are statistically independent, this expectation can be approximated as the Kronecker product of individual expectations
\begin{align}
\hat{F}_{i,j} \approx \mathbb{E}\!\left[\bar{\mathbf{h}}_{i-1}\bar{\mathbf{h}}_{j-1}^\top\right] \otimes \mathbb{E}\!\left[\mathbf{s}_{i}\mathbf{s}_{j}^\top\right] \;=\; \mathcal{H}_{i-1,j-1} \otimes \mathcal{S}_{i,j}, \notag
\end{align}

Where $\mathcal{H}_{i-1,j-1} = \mathbb{E}\!\left[\bar{\mathbf{h}}_{i-1}\bar{\mathbf{h}}_{j-1}^\top\right]$ and $ \mathcal{S}_{i,j} =  \mathbb{E}\!\left[\mathbf{s}_{i}\mathbf{s}_{j}^\top\right]$ are the Kronecker Factors (KF). In practice, a further simplification is often made by assuming that the weight derivatives for different layers are uncorrelated. This leads to a block-diagonal approximation of the FIM, denoted as the Empirical FIM (EFIM)
\begin{align}
\hat{F} = \text{diag}\left(\hat{F}_{1,1}, \ldots, \hat{F}_{L,L}\right) = \begin{bmatrix}
    \hat{F}_{1}& & \\
    &\hat{F}_{2}&\\
    & &\ddots &\\
    &&&\hat{F}_{L}
\end{bmatrix}. \notag
\end{align}
This Kronecker-factorization significantly reduces the computational burden: instead of inverting a huge matrix of size proportional to the total number of parameters, one only needs to invert the much smaller matrices \(\mathcal{H}_{i-1,i-1}\) and \(\mathcal{S}_{i,i}\) for each layer. The resulting efficient inversion is based on the property $\left(A \otimes B\right)^{-1} = A^{-1} \otimes B^{-1}$. Consequently, for a given layer $i$, the parameter update rule of K-FAC is expressed as
\begin{align}
\theta_i^{(t+1)} = \theta_i^{(t)} - \alpha\, \left((\mathcal{H}_{i-1,i-1}^{(t)})^{-1} \otimes (\mathcal{S}_{i,i}^{(t)})^{-1}\right) \nabla_{\theta_i} \mathcal{L}(\theta_i^{(t)}). \notag
\end{align}
Figure~\ref{fig:background_saved} illustrates the computation of the EFIM via K-FAC for a given layer \(i\).
\begin{figure}[!ht]
    \centering
    \includegraphics[width=0.7\linewidth]{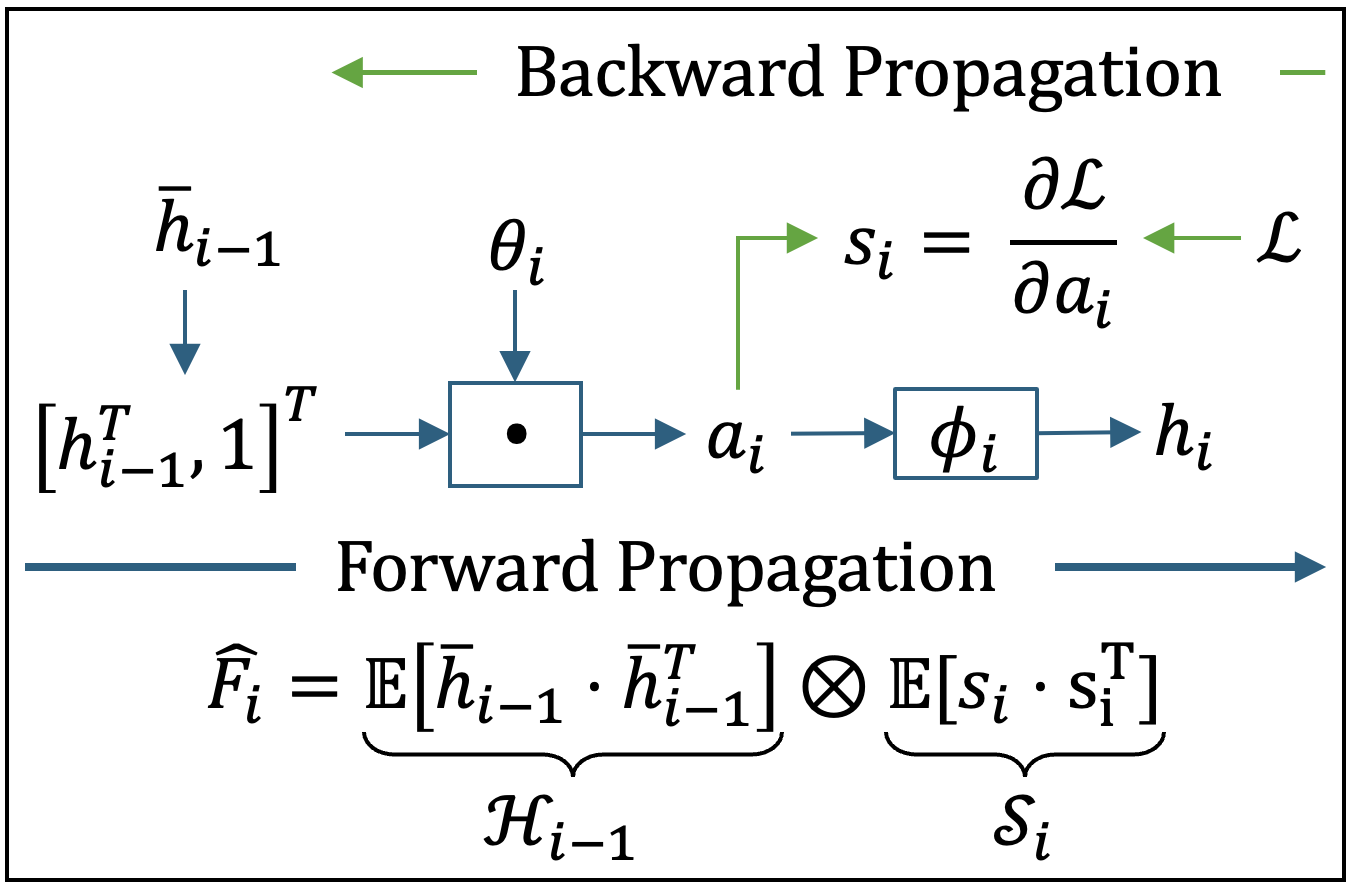}
    \caption{Illustration of EFIM computation using K-FAC for a given layer $i$.}
    \label{fig:background_saved}
\end{figure}

Building upon the foundational K-FAC framework, several extensions have been proposed to address its computational and approximation challenges. For example, S-KFAC~\citep{9578481} introduces strategies to further reduce computational overhead by incorporating sparsity and more aggressive damping, modifying the update to
\begin{align}
\theta_i^{(t+1)} = \theta_i^{(t)} - \alpha\, \left((\mathcal{H}_{i-1,i-1}^{(t)} + \lambda I)^{-1} \otimes (\mathcal{S}_{i,i}^{(t)} + \lambda I)^{-1}\right) \nabla_{\theta_i} \mathcal{L}( \theta_i^{(t)}). \notag
\end{align}
where \(\lambda\) is a damping parameter that enhances numerical stability. Eva~\citep{zhang2023eva} refines the estimation of the covariance matrices \(\mathcal{H}\) and \(\mathcal{S}\) by using adaptive averaging techniques, which leads to a more accurate curvature approximation and, consequently, to faster convergence and improved generalization.

Other methods, such as FOOF and MKOR~\citep{mozaffari2023mkor}, adopt a block-diagonal strategy that partitions the parameter space into smaller, more manageable sub-blocks, thereby reducing memory requirements and computational complexity while still capturing essential second-order information within each block. Quasi-Newton extensions such as K-BFGS~\citep{goldfarb2020practical} and KFRA~\citep{botev2017practical} integrate the Kronecker-factorized approach with iterative updates reminiscent of the BFGS algorithm. In these methods, one maintains an estimate of the inverse curvature matrix \(B^{(t)}\) and updates it using the standard quasi-Newton formula defined in Eq. (\ref{eq:quasinewton}). The novelty in K-BFGS and KFRA lies in organizing the updates so that the Kronecker structure is preserved, which allows these methods to capture curvature more precisely while still benefiting from the efficiency of factorization. EKFAC~\citep{george2018fast} further extends the K-FAC paradigm by incorporating an eigenbasis decomposition of the covariance matrices:
\begin{align}
\mathcal{H} = Q_\mathcal{H} \Lambda_\mathcal{H} Q_\mathcal{H}^\top \quad \text{and} \quad \mathcal{S} = Q_\mathcal{S} \Lambda_\mathcal{S} Q_\mathcal{S}^\top. \notag
\end{align}
In this framework, the approximated FIM is expressed as
\[
\hat{F} \approx (Q_\mathcal{H} \otimes Q_\mathcal{S})(\Lambda_\mathcal{H} \otimes \Lambda_\mathcal{S})(Q_\mathcal{H} \otimes Q_\mathcal{S})^\top,
\]
and its inversion is straightforward since the Kronecker product of the eigenvalue matrices is diagonal. This eigenbasis approach offers a diagonalized structure that balances fidelity to the true Hessian with the need for scalability in large models.

Collectively, these K-FAC extensions and related methods embody a common goal: to efficiently incorporate curvature information into the training of DNNS. By leveraging the inherent layer-wise structure and exploiting the properties of the Kronecker product, they enable second-order updates that are both computationally tractable and effective at capturing the nuanced geometry of the loss landscape. Despite their advances, challenges remain in terms of the assumptions required (e.g., independence of activations and pre-activation derivatives) and the computational overhead involved in maintaining and updating the necessary covariance estimates. Nevertheless, these methods continue to evolve, offering promising directions for scalable second-order optimization in deep learning.
\paragraph{Shampoo-Based Approaches.}
Shampoo is an influential second-order optimizer that constructs layer-wise preconditioners by leveraging the tensor structure of weight matrices. Unlike methods that restrict themselves to diagonal approximations of the curvature, Shampoo collects full second-moment statistics along different dimensions of a weight tensor. Shampoo computes two second-moment matrices,
\begin{align}
G_r = \sum_{t} \nabla_{\boldsymbol{\theta}} \mathcal{L}(\boldsymbol{\theta}^{(t)})\, \nabla_{\boldsymbol{\theta}} \mathcal{L}(\boldsymbol{\theta}^{(t)})^\top \quad \text{and} \quad G_c = \sum_{t} \nabla_{\boldsymbol{\theta}} \mathcal{L}(\boldsymbol{\theta}^{(t)})^\top\, \nabla_{\boldsymbol{\theta}} \mathcal{L}(\boldsymbol{\theta}^{(t)}), \notag
\end{align}
which capture the curvature information along the row and column dimensions, respectively. The core idea is to form a preconditioner as a Kronecker product of the inverses of fractional powers of these matrices, leveraging the similar property that K-FAC leverages, i.e. $\left(G_r \otimes G_c\right)^{-1} = G_r^{-1} \otimes G_c^{-1}$. In practice, Shampoo typically employs a fractional exponent (often \(-\tfrac{1}{4}\)) for numerical stability, leading to the update rule
\begin{align}
\boldsymbol{\theta}^{(t+1)} = \boldsymbol{\theta}^{(t)} - \alpha\, G_r^{-1/4} \, \nabla_{\boldsymbol{\theta}} \mathcal{L}(\boldsymbol{\theta}^{(t)}) \, G_c^{-1/4}, \notag
\end{align}
This update effectively preconditions the gradient by incorporating a full-matrix approximation of the curvature, thereby allowing for more adaptive step sizes that account for the local geometry of the loss surface.

Building on the Shampoo framework, subsequent approaches have sought to integrate additional first-order insights and further reduce computational overhead. For instance, SOAP~\citep{vyas2025soap} merges Shampoo’s second-order preconditioning with adaptive mechanisms similar to those found in Adam, blending the robustness of curvature information with the flexibility of adaptive learning rates. Meanwhile, a 4-bit variant of Shampoo~\citep{wang2024bit} addresses memory limitations by quantizing both the parameters and the preconditioners. This quantization reduces the memory footprint dramatically, making it feasible to apply second-order updates to very large models where storage and computational resources are at a premium.

The key intuition behind Shampoo and its extensions is to capture richer, multidimensional curvature information than what is available through simple diagonal approximations. By preconditioning the gradient with matrices that reflect the full second-moment structure along different dimensions, these methods enable more nuanced adjustments: taking larger steps in flatter directions and smaller, more cautious steps in regions of high curvature. Although this enhanced curvature awareness can lead to faster convergence and better adaptation to complex loss landscapes, it comes at the cost of increased computational and memory requirements. Extensions like SOAP and the 4-bit variant are motivated by the need to mitigate these challenges, offering scalable implementations that make the advantages of second-order information accessible even in massive deep learning models.
\paragraph{Summary and Outlook.}
Overall, second-order methods aim to utilize curvature information—via the Hessian, Fisher Information Matrix, or generalized approximations—to guide more informed and often more stable parameter updates. Approaches like K-FAC and Shampoo factorize large matrices to preserve crucial curvature cues without incurring prohibitive computational costs. Meanwhile, diagonal or block-diagonal Hessian approximations enable partial second-order adaptation that is more practical for massive models. Although these techniques have evolved significantly, challenges remain in scaling them to the largest networks and reconciling second-order insights with the stochasticity inherent in modern deep learning. Ongoing research continues to push the boundaries, as evidenced by the growing variety of second-order and hybrid optimizers, each offering a unique balance between computational feasibility and the richness of curvature information.
\section{Emerging Directions}
\label{sec:emerging}
The field of deep network optimization continues to evolve rapidly, with novel methods emerging to tackle both longstanding theoretical challenges and pressing practical constraints. Recent work has pushed the boundaries in several key areas, opening up exciting new avenues for research and application.
\paragraph{Integration of Curvature and Adaptive Strategies.}
A prominent trend in recent research is the seamless blending of adaptive first-order techniques with more refined curvature approximations. New hybrid methods, such as INNA-Prop and FOSI, aim to combine the low computational overhead and robustness of first-order updates with the enhanced directionality offered by partial second-order information. These methods typically incorporate curvature-aware preconditioners into adaptive frameworks, effectively adjusting the step sizes based not only on gradient magnitudes but also on local curvature estimates. The overarching goal is to achieve a more balanced optimization process that is both efficient and sensitive to the underlying loss landscape. While early results are promising, challenges remain in accurately estimating curvature in noisy, high-dimensional settings and in determining the optimal balance between adaptive scaling and curvature corrections.
\paragraph{Memory-Efficient Architectures.}
As NN models continue to grow in size, memory efficiency becomes an increasingly critical factor. Recent innovations such as 4-bit Shampoo and GaLore exemplify this trend by employing quantization and low-dimensional projection techniques to compress both parameters and the auxiliary statistics required for second-order updates. These methods make it feasible to deploy complex second-order optimizers even in resource-constrained environments or when dealing with models that contain hundreds of millions or even billions of parameters. By reducing the memory footprint without severely compromising the quality of curvature estimates, such techniques promise to bridge the gap between theoretical advances and practical deployment on modern hardware accelerators.
\paragraph{Robustness and Generalization in the Context of Large Models.}
Generalization remains a central challenge as models become larger and more complex. Insights from studies on the loss landscape, including works like \cite{chiang2023loss}, \cite{fort2019goldilocks}, and \cite{vysogorets2024deconstructing}, have underscored the importance of flat minima in achieving robust generalization. Building on these insights, methods such as SAM and its derivatives (e.g., AdaSAM) are being refined to not only accelerate convergence but also to guide the optimizer towards regions of the parameter space that generalize well. Recent research has further explored how phase transitions in deep learning dynamics (as discussed in \cite{kalra2023phase}) might be harnessed to design optimizers that are inherently more robust to overfitting while still maintaining computational efficiency.
\paragraph{Optimization for Fine-Tuning Large Language Models.}
The rapid rise of Large Language Models (LLM) has introduced unique optimization challenges, particularly in the fine-tuning stage. New methods, such as HiZOO, are designed to leverage Hessian information to capture fine-grained local curvature properties. This is crucial for stable fine-tuning, where the risk of drastic parameter changes can lead to overfitting or catastrophic forgetting. These approaches often integrate second-order insights into scalable frameworks that can handle the massive parameter counts typical of transformer architectures. Future research in this area is expected to further refine these techniques, ensuring that fine-tuning processes are both robust and efficient.
\paragraph{Theoretical Advances and Practical Algorithms.}
There is a growing effort to reconcile theoretical optimization guarantees with the practical requirements of deep learning. Recent studies have investigated the implicit regularization effects of SGD~\cite{frankle2018the} and the role of curvature in shaping generalization properties~\cite{xie2023on}. These investigations are guiding the development of new algorithms that seek to balance exploration—via stochasticity and adaptive updates—with exploitation—through careful curvature estimation. By grounding practical algorithms in strong theoretical foundations, researchers aim to develop optimizers that not only converge quickly but also offer provable guarantees on generalization performance.
\paragraph{Summary and Outlook.}
In summary, the emerging directions in deep network optimization are multifaceted and dynamic, addressing issues of efficiency, robustness, scalability, and theoretical rigor. By integrating advanced curvature approximations with adaptive strategies, embracing memory-efficient architectures, and targeting both robustness and fine-tuning challenges in large models, the field is moving toward a new generation of optimizers. These innovations promise to make deep learning training more efficient and reliable, even as model sizes and dataset complexities continue to increase.
\section{Concluding Remarks}
In this chapter, we have provided a comprehensive survey of the literature on deep network optimization, weaving together theoretical insights, practical algorithms, and emerging research directions. Our review has spanned the full spectrum of techniques and challenges that define the SOTA in deep learning optimization. We began by establishing the foundational concepts and notation in Section~\ref{sec:preliminaries}, setting the stage for a rigorous exploration of deep learning dynamics. In Section~\ref{sec:dynamics}, we examined the evolution of training processes, including phase transitions and the critical role of loss landscape geometry. These elements, along with implicit regularization mechanisms, have been shown to play pivotal roles in guiding training toward solutions with strong generalization properties.

The discussion then transitioned to the practical challenges of optimizing deep networks in Section~\ref{sec:challenges}. Here, we highlighted the inherent difficulties posed by non-convex optimization, the trade-offs between overfitting and generalization, and the significant computational burdens that arise as models grow in scale and complexity. Our in-depth review of optimization methods in Section~\ref{sec:methods} revealed a rich and diverse landscape. We explored:
\begin{itemize}
    \item \textbf{Zeroth-Order Methods} (\ref{subsec:zeroth_order}), which rely solely on function evaluations and finite-difference approximations to operate in gradient-free settings.
    \item \textbf{First-Order Methods} (\ref{subsec:first_order_methods}), such as SGD and adaptive variants like Adam and RAdam, which leverage gradient information to form the backbone of deep network training.
    \item \textbf{Second-Order Methods} (\ref{subsec:second_order_methods}), including Hessian- and Fisher-based approaches like K-FAC and Shampoo, which incorporate curvature information to achieve more informed and potentially faster convergence, albeit at a higher computational cost.
\end{itemize}
Furthermore, Section~\ref{sec:emerging} outlined emerging directions that promise to reshape the field. Researchers are increasingly integrating adaptive strategies with refined curvature approximations, developing memory-efficient architectures tailored to the demands of massive models, and designing specialized algorithms for fine-tuning large-scale transformers and language models. These trends are driven not only by the quest for enhanced performance but also by the need for robust, scalable, and theoretically grounded optimization techniques.

Overall, the insights gathered throughout this chapter not only highlight the remarkable progress achieved in deep network optimization but also underscore the exciting opportunities that lie ahead. By bridging theoretical foundations with practical implementations, the methods reviewed here pave the way for next-generation optimizers that can handle the complexities of modern NNs. As research continues to push the boundaries, future work will likely focus on further integrating adaptive and curvature-aware strategies, enhancing computational efficiency, and developing algorithms that deliver both strong convergence guarantees and superior generalization. In sum, the landscape of deep network optimization is rich with innovation and promise. The diverse approaches surveyed in this chapter provide a robust framework for understanding the multifaceted challenges of modern deep learning, and they set the stage for continued advancements in the field.

\section*{Bridging the Literature Review with AdaFisher}
\label{sec:bridging}

The comprehensive survey presented in this chapter has illuminated the rich tapestry of approaches in deep network optimization—from the efficiency of first-order adaptive methods to the nuanced curvature approximations offered by second-order techniques. Methods such as Adam and RAdam have set the benchmark for adaptive scaling, while curvature-based strategies like K-FAC and Shampoo have demonstrated the power of leveraging the FIM and its approximations to guide optimization. These approaches underscore a central theme: integrating adaptive behavior with curvature-aware preconditioning can significantly enhance both convergence speed and generalization performance.

AdaFisher \citep{gomes2025adafisher} optimization method builds directly on these insights. By fusing adaptive strategies with Fisher-based curvature information, AdaFisher seeks to overcome several limitations identified in the literature. Specifically, it addresses the challenges of computational scalability and stability by dynamically constructing efficient, layer-wise preconditioners derived from the FIM. In doing so, AdaFisher not only inherits the robustness of adaptive optimizers but also benefits from the precise curvature estimates provided by second-order approximations.

In the subsequent Chapters, we detail the theoretical underpinnings of AdaFisher, outline its algorithmic framework, and present empirical evaluations that demonstrate its superior performance on large-scale deep learning tasks. This work represents a natural evolution of the ideas surveyed in this chapter, aiming to bridge the gap between first-order adaptability and second-order precision in a single, scalable optimization method.
\chapter{Efficient Approximation of the FIM} 
\label{chap:efficientfisherapprox}
The FIM is fundamental to statistical estimation and natural gradient descent, providing critical curvature information about the loss landscape in DL. However, directly computing the FIM is infeasible for modern deep networks due to its high dimensionality and complexity. This challenge has spurred efficient approximations—most notably, the K-FAC method—which decomposes the FIM into tractable Kronecker product factors. In this chapter, we investigate the structural properties of the FIM and its approximations. We first review the supervised learning framework and the role of the FIM in natural gradient descent, then examine K-FAC and its extensions across various layer types (fully connected, convolutional, and normalization layers). Empirical analyses reveal that the Kronecker factors exhibit a strong diagonal concentration even under noise, which motivates our novel diagonal approximation of the FIM that significantly reduces computational overhead while preserving essential curvature information.

The chapter is organized as follows. Section~\ref{sec:background} provides the necessary theoretical background. Section~\ref{sec:kroneckerdetail} presents an in-depth study of the diagonal concentration of Kronecker factors, and Section~\ref{sec:effcomputFIM} outlines our efficient computation method for the FIM. Together, these contributions form a scalable, curvature-aware optimization framework for DL.
\section{Background} \label{sec:background}
In this section, we briefly review the core concepts and notation that form the foundation of our methodology for efficient FIM approximation. For a more comprehensive treatment, we strongly recommend first reading Section~\ref{sec:preliminaries} and Section~\ref{subsec:second_order_methods}, where these ideas are explored in depth. The discussion below assumes familiarity with the detailed exposition provided earlier. We consider a supervised learning framework with a dataset \(\mathbf{D} \coloneq \{(\mathbf{x}_{n}, \mathbf{y}_{n})\}_{n=1}^{N},\)
where \(\mathbf{x}_{n} \in \mathbb{R}^{d}\) and \(\mathbf{y}_{n} \in \mathbb{R}^{C}\). Let \(f_{\boldsymbol{\theta}}: \mathbb{R}^{d} \rightarrow \mathbb{R}^{C}\) be an \(L\)-layer NN parameterized by \(
\boldsymbol{\theta} = \text{vec}\left(\{\theta_i\}_{i=1}^{L}\right)\), where \( \theta_i = \text{concat}(W_i, \mathbf{b}_i) \in \mathbb{R}^{P_i}\), where \( P_i = P_{i}^{out}\times (P_{i}^{in}+1)\). The network computes its output through successive layers via
\[
\mathbf{a}_{i} = \theta_{i}\bar{\mathbf{h}}_{i-1}, \quad \mathbf{h}_{i} = \phi_{i}(\mathbf{a}_{i}), \quad \text{with } \mathbf{h}_{0} = \mathbf{x}_{n} \text{ and } \bar{\mathbf{h}}_{i-1} = [\mathbf{h}_{i-1}^\top, \mathbf{1}]^\top.
\]
For a given input-target pair \((\mathbf{x},\mathbf{y})\), the loss is defined as the negative log-likelihood, \(\mathcal{L}(\mathbf{y}, f_{\boldsymbol{\theta}}(\mathbf{x})) \coloneq -\log p_{\boldsymbol{\theta}}(\mathbf{y}|\mathbf{x})\), and the gradients are computed via backpropagation. We denote the pre-activation derivatives by  \(\mathbf{s}_{i} = \nabla_{\mathbf{a}_{i}}\mathcal{L}\),
and the gradient with respect to the weights of layer \(i\) is given by \(\nabla_{\theta_i}\mathcal{L} = \mathbf{s}_{i}\,\bar{\mathbf{h}}_{i-1}^\top\). To capture curvature information, we use the FIM as an approximation to the Hessian
\[
F = \mathbb{E}\left[\nabla_{\boldsymbol{\theta}}\log p_{\boldsymbol{\theta}}(\mathbf{y}|\mathbf{x})\,\nabla_{\boldsymbol{\theta}}\log p_{\boldsymbol{\theta}}(\mathbf{y}|\mathbf{x})^\top\right] = \mathbb{E}[\mathbf{g}\mathbf{g}^\top], \quad \mathbf{g} = \nabla_{\boldsymbol{\theta}} \mathcal{L}(\boldsymbol{\theta})
\]
The K-FAC approach further simplifies the FIM by exploiting the layer-wise structure of deep networks. Viewing \(F\) as a block matrix with blocks\( F_{i,j} = \mathbb{E}[\text{vec}(g_i)\,\text{vec}(g_j)^\top]\), and using the Kronecker-vectorization identity \(\text{vec}(\mathbf{u}\mathbf{v}^\top) = \mathbf{v} \otimes \mathbf{u}\), we express \(\text{vec}(g_i) = \bar{h}_{i-1} \otimes s_{i}\). Assuming that activations and pre-activation derivatives are mutually independent, each block is approximated as
\[
\hat{F}_{i,j} \approx \mathbb{E}[\bar{\mathbf{h}}_{i-1}\bar{\mathbf{h}}_{j-1}^\top] \otimes \mathbb{E}[\mathbf{s}_{i} \mathbf{s}_{j}^\top] = \mathcal{H}_{i-1,j-1} \otimes \mathcal{S}_{i,j}.
\]
By further assuming that weight derivatives across distinct layers are uncorrelated, the FIM is approximated by its block-diagonal form: \( \hat{F} = \text{diag}\left(\hat{F}_1,\ldots,\hat{F}_L\right), \quad \text{with} \quad \hat{F}_i \approx \mathcal{H}_{i-1,i-1} \otimes \mathcal{S}_{i,i}\).
\section{Diagonal Concentration of KFs}
\label{sec:kroneckerdetail}
Inspired by the Gershgorin circle theorem \citep{Horn_Johnson_2012}, we empirically study the diagonal concentration of Kronecker Product Factors (KFs) within DNNs. Our focus centers on the eigenvalue distribution and perturbation behavior of weight matrices, with a particular emphasis on the $37$th layer of ResNet-18 \citep{he2016deep} after 50 epochs of training on the CIFAR-10 dataset \citep{krizhevsky2009learning}. Figure~\ref{fig:gershgorin_and_perturbation} illustrates the eigenvalue spectrum (red crosses) alongside Gershgorin discs (black circles), revealing that the eigenvalues cluster near the diagonal elements of the matrix. This observation underscores a pronounced diagonal dominance, which the Gershgorin circle theorem formalizes by asserting that every eigenvalue \(\lambda\) of a complex square matrix \(\mathcal{A}\) resides within at least one Gershgorin disc \(D(a_{ii}, R_i)\), where \(R_i = \sum_{j \neq i} |a_{ij}|\) quantifies the row-based off-diagonal magnitude. To investigate the stability of this diagonal dominance under stochastic disturbances, we inject Gaussian noise \(\mathcal{N}(0, \sigma^2)\) with \(\sigma = 10^{-3}\) into the off-diagonal elements. The perturbed matrix \(\hat{\mathcal{M}}\) is then given by
\[
\hat{\mathcal{M}} = \mathcal{A} + \mathcal{E}, \quad \text{where } \mathcal{E} = [e_{ij}], \, e_{ij} \sim \mathcal{N}(0, \sigma^2) \text{ for } i \neq j.
\]
Our perturbation experiments show that the principal eigenvalues satisfying the Kaiser criterion are minimally affected \citep{braeken2017empirical}, implying that the bulk of the matrix's energy is concentrated along the diagonal. Both the Gershgorin disc visualization and the eigenvalue perturbation analysis confirm a high degree of \textbf{diagonal dominance}.
\begin{figure}[!h]
    \centering
    \includegraphics[width=\linewidth]{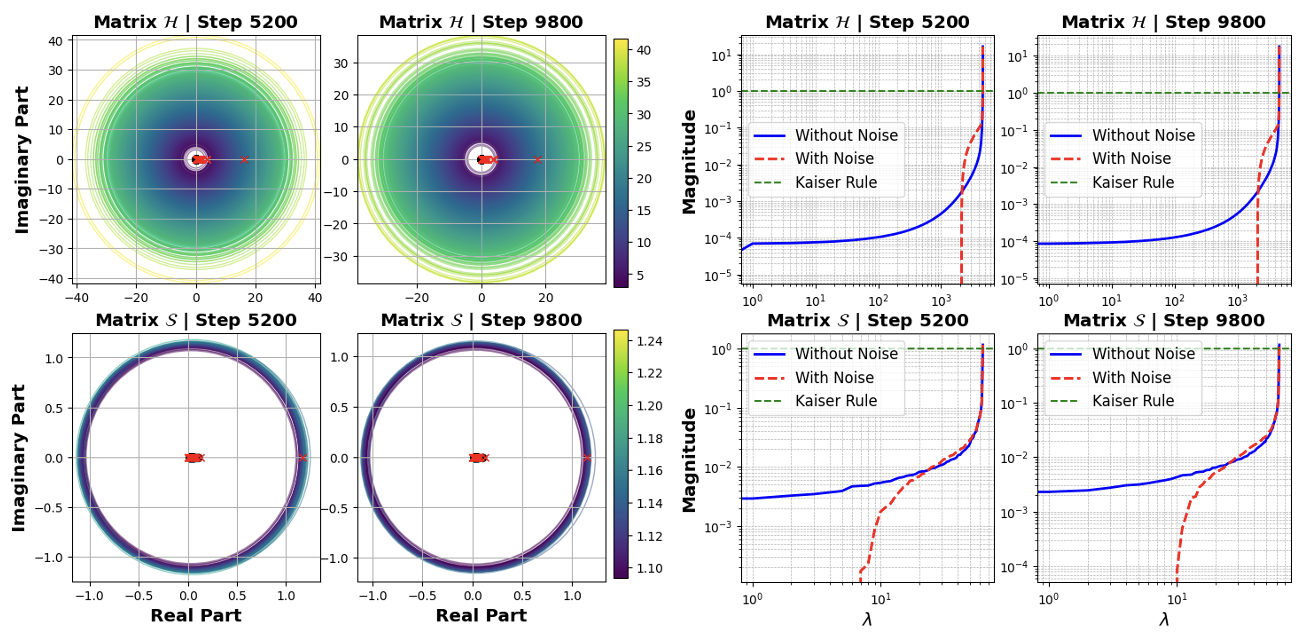}
    \caption{Gershgorin discs and eigenvalue perturbations for the $37$th convolutional layer of ResNet-18 at steps 5200 (middle of training) and 9800 (end of training). Left: Gershgorin discs in the complex plane; Right: Eigenvalue spectrum with and without Gaussian noise added to off-diagonal entries.}
    \label{fig:gershgorin_and_perturbation}
\end{figure}
\paragraph{Extending Gershgorin Analysis to Linear Layers.}
This diagonal concentration is not confined to convolutional layers alone. We observe an analogous pattern in the $41$st (linear) layer of ResNet-18, as shown in Figure~\ref{fig:gershgorin_and_perturbation_lin}. Once again, the eigenvalues (red crosses) lie predominantly within the Gershgorin discs, centered along the diagonal (black circles), underscoring the pervasiveness of this diagonal dominance across diverse layer types.
\begin{figure}[!h]
\centering
\includegraphics[width=\textwidth]{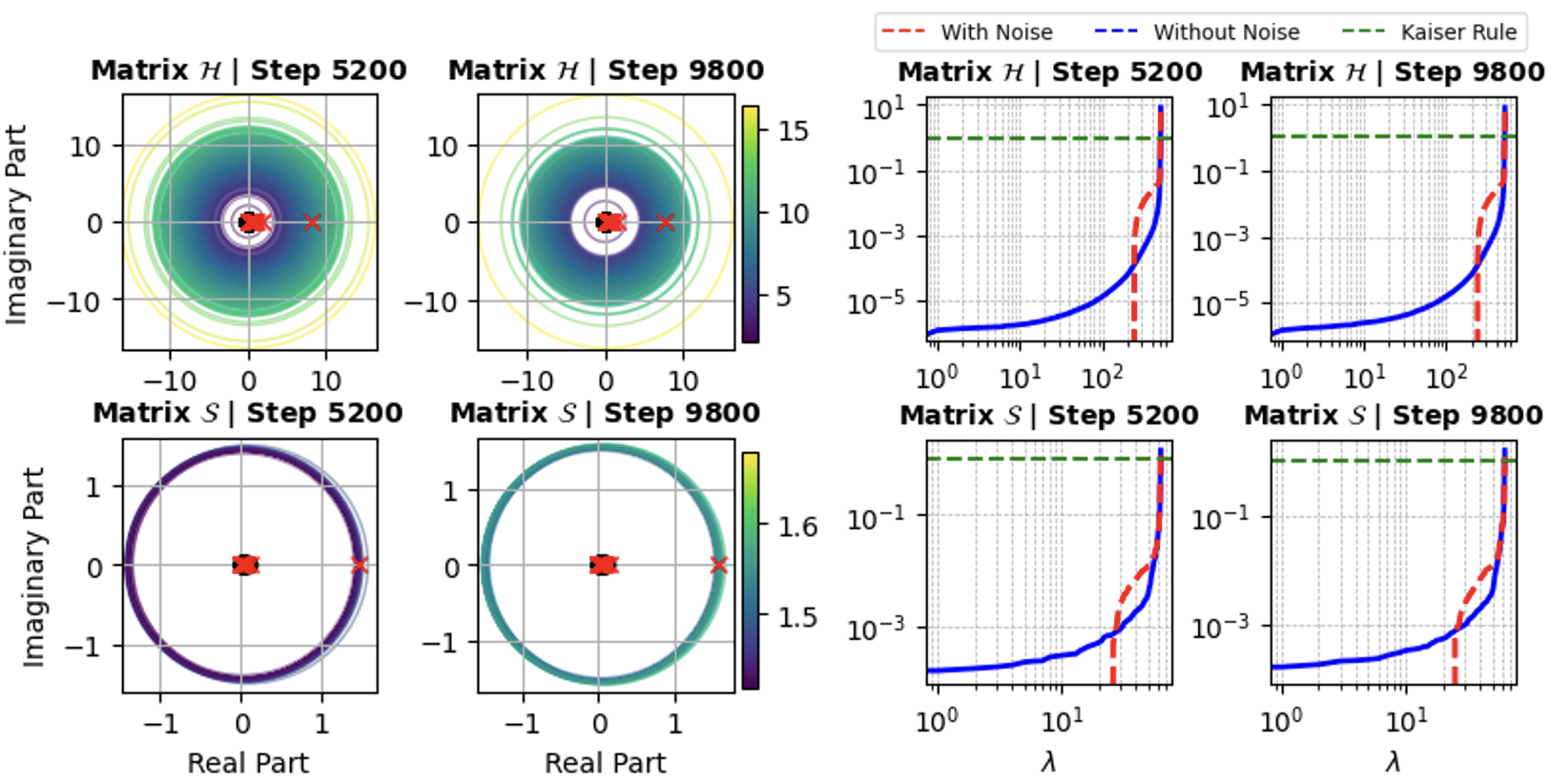}
\caption{Gershgorin discs and eigenvalue perturbation analysis for matrices \(\mathcal{H}\) and \(\mathcal{S}\) at training steps 5200 and 9800 in the linear ($41$st) layer of ResNet-18. The left panel displays Gershgorin discs in the complex plane, while the right panel depicts the eigenvalue spectra with and without Gaussian noise.}
\label{fig:gershgorin_and_perturbation_lin}
\end{figure}
\paragraph{Spectral Insights via FFT.}
Moving beyond eigenvalue-based diagnostics, we also perform a frequency-domain examination using the Fast Fourier Transform (FFT) to study how noise injection impacts the KFs \(\mathcal{H}\) and \(\mathcal{S}\). Let \( A \in \mathbb{C}^{m \times n} \) be a matrix, whose FFT is defined by
\[
\mathcal{F}(A)_{kl} \;=\; \sum_{p=0}^{m-1} \sum_{q=0}^{n-1} A_{pq} \cdot e^{-2\pi i \left(\frac{pk}{m} + \frac{ql}{n}\right)}.
\]
Figure~\ref{fig:fft_conv37} showcases the FFT magnitude plots for both noise-free and noisy conditions at training steps 5200 (middle of training) and 9800 (end of training). Even under Gaussian perturbations, the primary diagonal structure of the KFs remains conspicuous in the frequency domain, signifying that the principal information is predominantly concentrated along the diagonal.
\begin{figure}[!h]
\centering
\includegraphics[width=\textwidth]{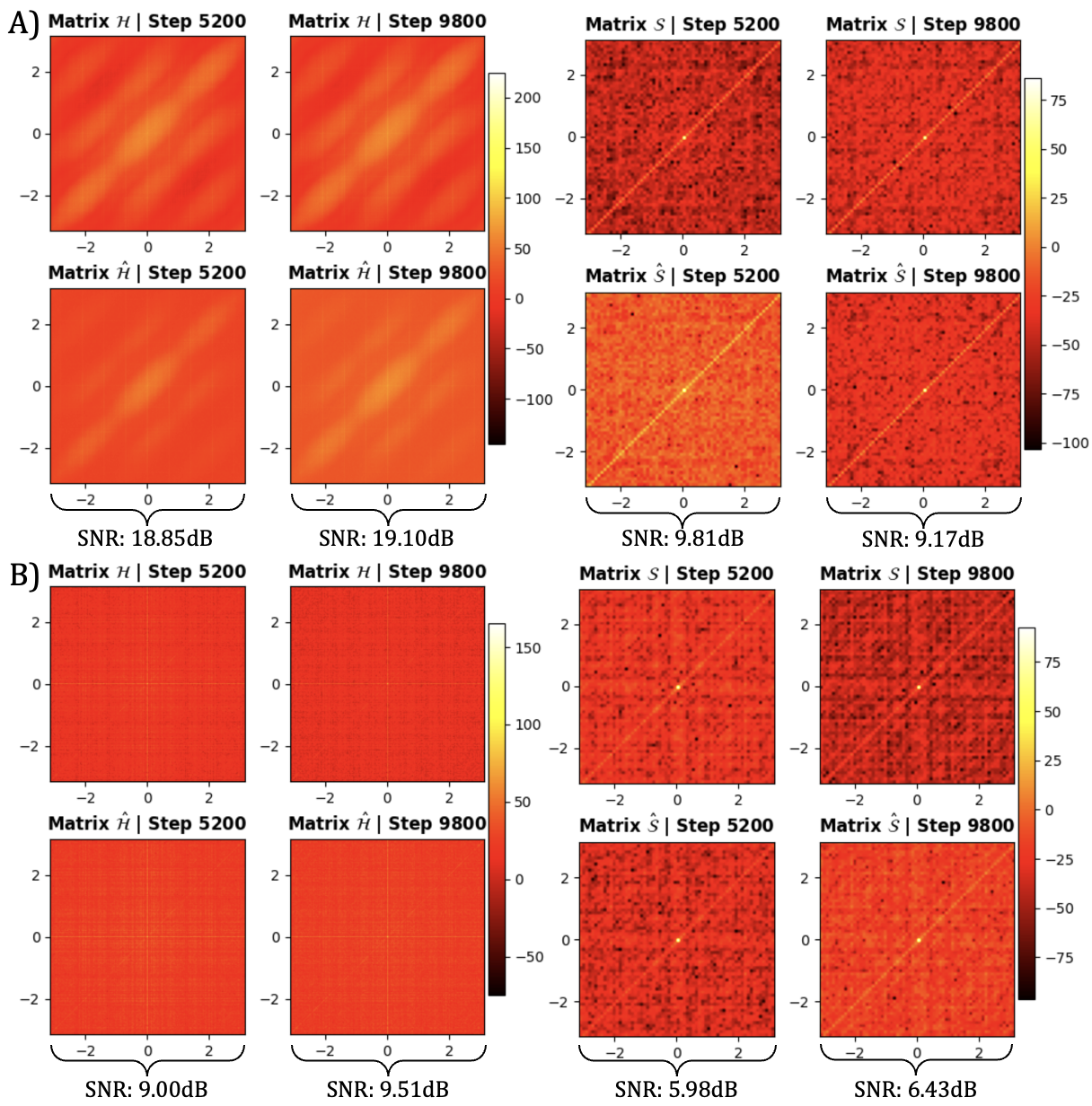}
\caption{FFT-based spectral analysis of KFs \(\mathcal{H}\) and \(\mathcal{S}\) for convolutional (37th layer) and linear (41st layer) segments of a ResNet-18 at iterations 5200 and 9800. (A) Noise-free (top) vs. noisy (bottom) FFT results for \(\mathcal{H}\) from the convolutional layer; (B) analogous results for \(\mathcal{S}\) in the linear layer.}
\label{fig:fft_conv37}
\end{figure}
In addition, we compute the Signal-to-Noise Ratio (SNR) to quantify the effect of noise on off-diagonal entries. For a matrix \(\mathcal{M}\) and its perturbed version \(\hat{\mathcal{M}}\), the SNR is defined by
\begin{equation}
\text{SNR} = 10 \cdot \log_{10}\left(\frac{\sum_{i=1}^{N} |\mathcal{M}_{ii}|^2}{\sum_{j>i}^{N} |\hat{\mathcal{M}}_{ij}|^2}\right). \notag 
\end{equation}
We observe that the SNR remains sufficiently high across training steps, implying minimal eigenvalue displacement and reinforcing our diagonal-dominance hypothesis.
\paragraph{Matrix Visualization and Diagonal Energy.}
Figure~\ref{fig:conv37} provides direct visualizations of the KFs \(\mathcal{H}\) and \(\mathcal{S}\) at steps 5200 and 9800 for both the convolutional ($37$th) and linear ($41$st) layers. Each subplot reveals a pronounced diagonal band, further highlighting the structural consistency of these factors over the course of training. The resilience of this diagonal concentration against off-diagonal noise underscores a form of robust feature extraction, where the primary informational energy resides along the main diagonal.
\begin{figure}[!h]
\centering
\includegraphics[width=\textwidth]{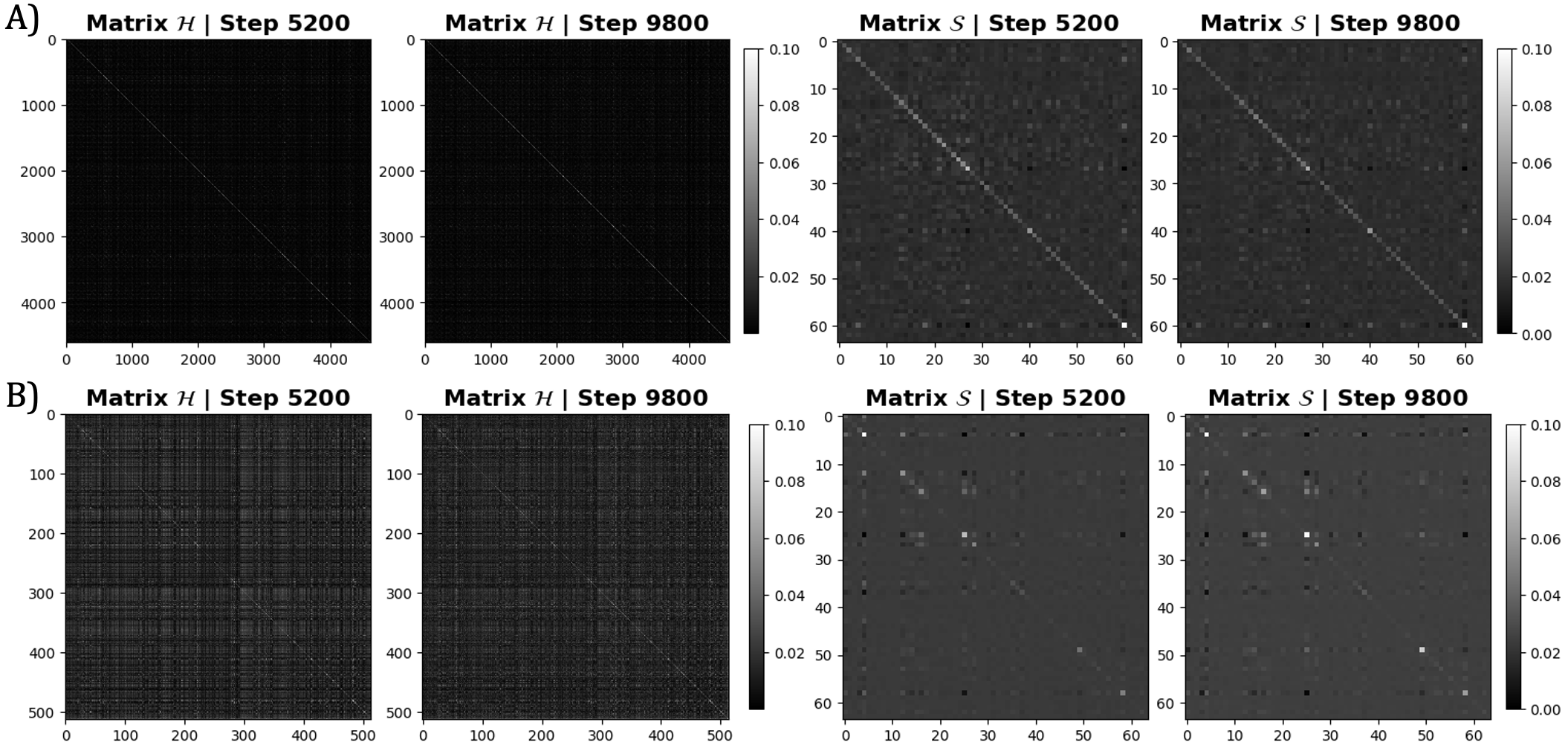}
\caption{Visualization of KFs \(\mathcal{H}\) and \(\mathcal{S}\) in convolutional (A) and linear (B) layers of ResNet-18 at different iteration steps (5200 and 9800). The first two plots in (A) depict factor \(\mathcal{H}\) at the two training stages, while the next two plots illustrate factor \(\mathcal{S}\). Analogously, (B) shows the evolution of \(\mathcal{H}\) and \(\mathcal{S}\) in the linear layer. The diagonal salience persists throughout training, attesting to a stable, diagonal-centric structure.}
\label{fig:conv37}
\end{figure}
\paragraph{Conclusion.}
Both Gershgorin disc analysis and perturbation experiments converge on the finding that the KFs in DNNs, particularly in ResNet-18 layers, exhibit strong diagonal dominance. Noise introduced into off-diagonal elements has a minimal impact on the eigenvalues—especially those surpassing the Kaiser criterion—indicating that essential information is heavily concentrated along the diagonal. This property remains stable over successive training iterations, further corroborating the structural resilience of these matrices. Our Fourier-domain investigation reinforces these observations, revealing that even in the frequency spectrum, the pivotal data of the KFs is localized near the diagonal, thereby reinforcing the robustness and tenacity of Kronecker structures under noisy conditions.
\section{Efficient Computation of the FIM} \label{sec:effcomputFIM}
In the realm of optimization, NGD offers a geometrically nuanced adaptation of the classical steepest descent approach (in Euclidean space), transitioning the focus from parameter space to the model's distribution space underpinned by the adoption of a \emph{Riemannian metric}, \cite{Amari2000MethodsOI}. Using Eq. (\ref{eq:FIMeuqation}), the formulation of the \emph{preconditioned} gradient $\bar{g}^{(t)}$ given the NGD method is articulated as
\begin{align}\label{eq:NGD}
\bar{\mathbf{g}}^{(t)} = (F^{(t)})^{-1} \mathbf{g}^{(t)},
\end{align}
One of the distinguishing features of NGD within this framework is its re-parametrization invariance, a direct consequence of leveraging the model's distribution properties rather than its parameters. Nevertheless, the direct FIM computation is highly demanding, and we solve this by adopting the diagonal approximation of the KFs as supported by our analyses from Section~\ref{sec:kroneckerdetail}. In addition, a critical component of modern DNNs is known to be the normalization of the layers by introducing scale and shift parameters (e.g. batch-normalization \citep{ioffe2015batch}, layer-normalization \citep{ba2016layer}). This is to adjust the network's dynamics (e.g., reducing covariance shift) in a non-trivial way \cite{Huang2023NormalizationTechniques} where the lack of FIM approximation on such normalization layers can lead to suboptimal preconditioning. Therefore, we introduce a method for calculating the KFs for normalization layers detailed in Proposition~\ref{prop:proposition_normalization}.
\begin{proposition}[EFIM for normalization layer]
\label{prop:proposition_normalization}
Let $(\boldsymbol{\nu}_i, \boldsymbol{\beta}_i) \in \mathbb{R}^{C_i}$ be the scale and shift parameters of a normalization layer $i$. The empirical KFs for the FIM approximation are
\begin{align}
\mathcal{H}_{i-1}\Bigr|_{\nu_i} = \frac{1}{|\mathcal{T}_i|} \sum_{x \in \mathcal{T}_i} \mathbf{h}_{i-1,x}\mathbf{h}_{i-1,x}^\top, \, \mathcal{H}_{i-1}\Bigr|_{\beta_i} = \mathbf{1}\mathbf{1}^\top, \quad   
\mathcal{S}_i = \frac{1}{|\mathcal{T}_i|} \sum_{x \in \mathcal{T}_i} \mathbf{s}_{i,x}\mathbf{s}_{i,x}^\top, \notag
\end{align}
where $\mathbf{h}_{i-1}, \mathbf{s}_i \in \mathbb{R}^{C_i \times |\mathcal{T}_i|}$ represent the pre-normalized activations and gradients, respectively. Here, $\mathcal{T}_i$ is the set of dimensions over which normalization statistics are computed, and $C_i$ is the channels/features size.
\end{proposition}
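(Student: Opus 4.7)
The plan is to derive the EFIM for a normalization layer by casting its forward pass as a position-wise affine map, then applying the K-FAC decomposition strategy detailed in Section~\ref{sec:background}---treating the pair $(\boldsymbol{\nu}_i, \boldsymbol{\beta}_i)$ in analogy with the weights and bias of a fully-connected layer, but with the parameters \emph{shared} across the normalization set $\mathcal{T}_i$ in the same spirit as the convolutional K-FAC extension. This recasting is crucial because normalization layers have only two $C_i$-dimensional parameter vectors, so without the sharing perspective, there is no obvious ``activation''/``sensitivity'' split on which to hang a Kronecker factorisation.

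As a first step, I would write the forward pass explicitly: for each $x \in \mathcal{T}_i$, $\mathbf{a}_{i,x} = \boldsymbol{\nu}_i \odot \mathbf{h}_{i-1,x} + \boldsymbol{\beta}_i$, and apply the chain rule to obtain $\nabla_{\boldsymbol{\nu}_i}\mathcal{L} = \sum_{x \in \mathcal{T}_i} \mathbf{s}_{i,x}\odot \mathbf{h}_{i-1,x}$ and $\nabla_{\boldsymbol{\beta}_i}\mathcal{L} = \sum_{x \in \mathcal{T}_i} \mathbf{s}_{i,x}$. These are exactly the gradients of a ``virtual linear layer'' with diagonal weight matrix $\mathrm{diag}(\boldsymbol{\nu}_i)$ and bias $\boldsymbol{\beta}_i$, where the effective input feeding $\boldsymbol{\nu}_i$ is $\mathbf{h}_{i-1,x}$, while the effective input feeding $\boldsymbol{\beta}_i$ is the constant vector $\mathbf{1}$. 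Identifying this virtual-layer structure is what makes the K-FAC machinery applicable.

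Next I would plug these gradients into $F = \mathbb{E}[\mathbf{g}\mathbf{g}^\top]$ block-wise, invoke the standard K-FAC independence assumption between activations and pre-activation derivatives to split each expectation into a product of covariances, and replace population expectations by the corresponding empirical averages over $\mathcal{T}_i$. For $\boldsymbol{\nu}_i$ this produces the input-side factor $\mathcal{H}_{i-1}\bigr|_{\nu_i} = |\mathcal{T}_i|^{-1}\sum_x \mathbf{h}_{i-1,x}\mathbf{h}_{i-1,x}^\top$; for $\boldsymbol{\beta}_i$ the input is constant, so the same procedure collapses the input covariance to $\mathbf{1}\mathbf{1}^\top$. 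On the sensitivity side, both parameters share the pre-activation derivative $\mathbf{s}_{i,x}$, yielding the common factor $\mathcal{S}_i = |\mathcal{T}_i|^{-1}\sum_x \mathbf{s}_{i,x}\mathbf{s}_{i,x}^\top$.

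The principal obstacle will be justifying the collapse of the doubly-indexed sum $\sum_{x,y\in\mathcal{T}_i}$---which genuinely arises from the sharing of $(\boldsymbol{\nu}_i, \boldsymbol{\beta}_i)$ across the normalization set---into a single-indexed empirical mean, mirroring the ``spatial homogeneity'' approximation used when extending K-FAC to convolutional layers. I would therefore state this assumption explicitly (cross-location terms $x \neq y$ are treated as uncorrelated in expectation), verify that under it the off-diagonal contributions drop out, and then collect the surviving diagonal-in-$x$ terms into the empirical factors above. Assembling the two parameter blocks then delivers the EFIM decomposition claimed in the proposition.
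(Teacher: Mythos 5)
Your proposal follows essentially the same route as the paper's proof: derive $\nabla_{\boldsymbol{\nu}_i}\mathcal{L}=\sum_{x}\mathbf{h}_{i-1,x}\odot\mathbf{s}_{i,x}$ and $\nabla_{\boldsymbol{\beta}_i}\mathcal{L}=\sum_{x}\mathbf{s}_{i,x}$, plug them into $\mathbb{E}[\mathbf{g}\mathbf{g}^\top]$ block-wise, invoke the K-FAC independence assumption to factor each block into the empirical activation covariance (collapsing to $\mathbf{1}\mathbf{1}^\top$ for the shift parameters) and the shared sensitivity factor $\mathcal{S}_i$, and drop the off-diagonal parameter blocks. Your explicit statement that cross-location terms $x\neq y$ in the doubly-indexed sum are treated as uncorrelated is in fact a slightly more careful articulation of the approximation the paper subsumes under its ``K-FAC independence assumption'' step, so the argument is sound and matches the paper's.
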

The proof of this proposition is given in Appendix~\ref{chap:ApdxA}. The KFs for other type of layers are computed following methodologies similar to those described in \cite{grosse2016kroneckerfactored, pmlr-v37-martens15}. This section revisits the key equations used for this computation.
For a given layer $i$ in a NN, the empirical KFs are computed as follows:
\begin{itemize}
    \item For \textbf{fully connected layers}, the KFs are:
    \begin{align*}
    \mathcal{H}_{D_{i-1}} =\text{diag}(\bar{\mathbf{h}}_{i-1}\bar{\mathbf{h}}_{i-1}^\top), \quad \mathcal{S}_{D_{i}} = \text{diag}(\mathbf{s}_{i}\mathbf{s}_{i}^\top);
    \end{align*}
    \item For \textbf{convolutional layers}, the computation accounts for the spatial positions within the layer, denoted as $\mathcal{T}$:
    \begin{align*}
    \mathcal{H}_{D_{i-1}} = \text{diag} \left(\frac{\llbracket \bar{\mathbf{h}}_{i-1} \rrbracket \llbracket \overline{\mathbf{h}}_{i-1} \rrbracket^\top}{|\mathcal{T}|}\right), \quad \mathcal{S}_{D_{i}} = \text{diag} \left(\frac{\mathbf{s}_{i}\mathbf{s}_{i}^\top}{|\mathcal{T}|} \right);
    \end{align*}
     The algorithm employs the expansion operation denoted by $\llbracket \cdot \rrbracket$~\citep{grosse2016kroneckerfactored}. \textit{This operation essentially takes the patches surrounding spatial locations, stretches them into vectors, and compiles these vectors into a matrix}.
    \item For \textbf{Normalization layers} (BatchNorm \& LayerNorm) refer to Proposition. \ref{prop:proposition_normalization}
    \item For all \textbf{other type of layers} the KFs are:
     \begin{align*}
    \mathcal{H}_{D_{i-1}} = \textbf{I}_{P_{i-1}^{out}+1}, \quad \mathcal{S}_{D_{i}} = \textbf{I}_{P_{i}^{out}};
    \end{align*}
\end{itemize} 
Note that in the context of online and stochastic optimization, the KFs for a given layer \( i \) can be estimated using an Exponentially Moving Average (EMA) scheme across batches defined by
\begin{align}
    \mathcal{H}_{i-1}^{(t)} = \gamma \mathcal{H}_{i-1}^{(t)} + (1-\gamma) \mathcal{H}_{i-1}^{(t-1)}, \, \mathcal{S}_{i}^{(t)} = \gamma \mathcal{S}_{i}^{(t)} + (1-\gamma) \mathcal{S}_{i}^{(t-1)}, \label{eq:expkronfactors}
\end{align}
where \( 0 < \gamma \leq 1 \) is the exponential decay factor at step time $t$, $ \mathcal{H}_{i-1}^{(t)}$ and $\mathcal{S}_{i}^{(t)}$ in the right-hand side are new KFs calculated during each forward and backward pass computation. This EMA scheme is commonly used in methods involving diagonal or block-diagonal approximations to the curvature matrix (e.g. \cite{8c8eccbbe8a040118afa8f8423da1fe2, PARK2000755, schaul2013pesky}). Such schemes have the desirable property that they allow the curvature estimation to depend on much more data than what can be reasonably processed in a single mini-batch.

Our study from Section~\ref{sec:kroneckerdetail} suggests that the FIM's critical information predominantly resides along its diagonal. Building upon this, we propose a novel approximation for the FIM, described in Proposition~\ref{prop:proposition1}, that conceptualizes the KFs as diagonal matrices denoted as $\tilde{F}_{D_{i}}$ for layer $i$.
\begin{proposition}[Efficient EFIM] \label{prop:proposition1}
Assume that $\mathcal{H}_{i-1}$ and $\mathcal{S}_{i}$ can be closely approximated by diagonal matrices, denoted by $\mathcal{H}_{D_{i-1}}$ and $\mathcal{S}_{D_{i}}$ respectively at layer $i$, such that $\mathcal{H}_{D_{i-1}} = \text{Diag}(\mathcal{H}_{i-1})$, $\mathcal{S}_{D_{i}} = \text{Diag}(\mathcal{S}_{i})$ where $\text{Diag}$ denote the diagonal of a matrix. Accordingly, the Empirical FIM is defined by
\begin{align}
\Tilde{F}_{D_{i}} \triangleq \mathcal{H}_{D_{i-1}}' \otimes \mathcal{S}_{D_{i}}' + \lambda \mathbf{I},\label{eq:FIMDiag}
\end{align}
where $\mathcal{H}_{D_{i-1}}'$ and $\mathcal{S}_{D_{i}}'$ denote the Min-Max normalization of \(\mathcal{H}_{D_{i-1}}\) and \(\mathcal{S}_{D_{i}}\) \citep{patro2015normalization} and $\lambda$ is a regularization parameter. 
\end{proposition}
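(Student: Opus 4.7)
The plan is to derive the efficient EFIM formula by threading three steps together: (i) invoking the block-Kronecker decomposition already established in Section~\ref{sec:background}, (ii) substituting the diagonal approximation that the proposition explicitly assumes, and (iii) justifying the two post-hoc modifications (Min-Max normalization and $\lambda \mathbf{I}$ damping) as principled design choices motivated by numerical conditioning.

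First, I would start from the layer-wise block of the K-FAC approximation, namely $\hat{F}_i \approx \mathcal{H}_{i-1,i-1} \otimes \mathcal{S}_{i,i}$, which follows from the independence of pre-activations and pre-activation derivatives together with the identity $\mathrm{vec}(\mathbf{u}\mathbf{v}^\top) = \mathbf{v} \otimes \mathbf{u}$. Under the proposition's hypothesis, $\mathcal{H}_{i-1} \approx \mathrm{Diag}(\mathcal{H}_{i-1}) = \mathcal{H}_{D_{i-1}}$ and $\mathcal{S}_{i} \approx \mathrm{Diag}(\mathcal{S}_{i}) = \mathcal{S}_{D_i}$, a hypothesis that is not assumed blindly but is instead supported by the diagonal-dominance evidence compiled in Section~\ref{sec:kroneckerdetail} (Gershgorin analysis, perturbation robustness, FFT spectra, and SNR measurements). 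Substituting directly yields $\hat{F}_i \approx \mathcal{H}_{D_{i-1}} \otimes \mathcal{S}_{D_i}$. Because the Kronecker product of two diagonal matrices is itself diagonal, the resulting block is a diagonal matrix of length $P_i$, which is the computational payoff that the proposition is after.

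Second, I would justify the Min-Max normalization step. Across layers the magnitudes of $\mathrm{Diag}(\mathcal{H}_{i-1})$ and $\mathrm{Diag}(\mathcal{S}_{i})$ can span several orders of magnitude, and when these quantities eventually appear in a preconditioner $\tilde{F}_{D_i}^{-1}$, unbounded scale differences would translate into poorly-conditioned, layer-dependent step sizes. Applying the Min-Max rescaling $\mathcal{H}_{D_{i-1}}'$ and $\mathcal{S}_{D_i}'$ \citep{patro2015normalization} maps each factor into a common, bounded range without affecting the relative ordering of diagonal entries, so the curvature information encoded along the diagonal is preserved while cross-layer comparability is restored. Finally, the additive $\lambda \mathbf{I}$ is a standard Tikhonov-style damping: it keeps $\tilde{F}_{D_i}$ strictly positive definite (hence invertible) even when some entries of $\mathcal{H}_{D_{i-1}}' \otimes \mathcal{S}_{D_i}'$ are near zero, which is the same role damping plays in K-FAC \citep{martens2015optimizing}.

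The main obstacle is not the algebra — which is essentially immediate once the diagonal assumption is accepted — but rather \emph{legitimizing the approximation step}. Specifically, the clean equality $\hat{F}_i = \mathcal{H}_{D_{i-1}} \otimes \mathcal{S}_{D_i}$ would only hold if $\mathcal{H}_{i-1}$ and $\mathcal{S}_i$ were exactly diagonal, which they are not; what we really have is $\hat{F}_i - \mathcal{H}_{D_{i-1}} \otimes \mathcal{S}_{D_i} = \mathcal{H}_{D_{i-1}} \otimes \mathcal{S}_i^{\text{off}} + \mathcal{H}_{i-1}^{\text{off}} \otimes \mathcal{S}_{D_i} + \mathcal{H}_{i-1}^{\text{off}} \otimes \mathcal{S}_i^{\text{off}}$, where the off-diagonal residuals $\mathcal{H}_{i-1}^{\text{off}}, \mathcal{S}_i^{\text{off}}$ are small in the sense quantified by the Gershgorin and SNR analysis of Section~\ref{sec:kroneckerdetail}. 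I would therefore frame the proof as establishing the \emph{definition} of $\tilde{F}_{D_i}$ under the stated assumption, while referring back to the empirical diagonal-dominance results to argue that the induced approximation error is controlled and does not meaningfully distort the eigenstructure that the preconditioner depends on.
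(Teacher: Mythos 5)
Your proposal is correct and follows essentially the same route as the paper's own proof: the paper likewise argues that the Kronecker product of the (assumed) diagonal factors is itself diagonal, cites the empirical diagonal-dominance evidence of Section~\ref{sec:kroneckerdetail} as justification, and then treats the Min-Max normalization plus the $\lambda\mathbf{I}$ term as a conditioning/Tikhonov argument, making the spectrum bound $\lambda \leq \lambda_k(\tilde{F}_{D_i}) \leq 1+\lambda$ explicit and noting an $\mathcal{O}(\epsilon+\lambda)$ relative-error control analogous to your off-diagonal residual discussion. No substantive gap; your explicit decomposition of the off-diagonal error is a slightly more detailed phrasing of what the paper absorbs into its $\epsilon$ term.
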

The proof of this proposition is given in Appendix~\ref{chap:ApdxA}. This approximation strikes a balance between computational time and space complexity and the accuracy of performance, as discussed in Chapter~\ref{chap:experiments}. We set the regularization parameter \(\lambda = 0.001\), which acts as a damping factor following the Tikhonov regularization principle \cite{pmlr-v37-martens15}, enhancing computational stability and conditioning of the FIM. The closed-form solution for the preconditioned gradient \(\bar{\mathbf{g}}^{(t)}\) is derived from the diagonal approximation of the FIM, given by 
\begin{align}
    \bar{\mathbf{g}}^{(t)} = (\tilde{F}_{D}^{(t)})^{-1} \mathbf{g}^{(t)},
\end{align}
for time step $t$. Notice this is similar to the NGD update defined in Eq. (\ref{eq:NGD}), except that we approximate the FIM with a novel efficient method. This represents the AdaFisher, which will be presented in Chapter~\ref{chap:adafisher}, augmented gradient and incorporates local loss \emph{curvature information}. It focuses on the diagonal elements to reduce computational overhead while maintaining a reasonable FIM approximation. This simplification enhances the efficiency of the optimization process, which is crucial for training DNNs where computational resources are limited.
\section{Concluding Remarks}
\label{sec:concluding_remarks_chapter_fisher}
In this chapter, we explored the empirical FIM within DNNs and investigated efficient approximations that facilitate second-order optimization. Our analysis began with a review of the negative log-likelihood loss in supervised learning and demonstrated how the FIM, closely related to the Hessian of the log-likelihood, captures essential curvature information for parameter updates. We then focused on K-FAC methods, which decompose the FIM into KFs for tractability in large-scale networks. By dissecting the diagonal concentration of these factors using Gershgorin circle theorem, we empirically observed a pronounced diagonal dominance across both convolutional and linear layers of a ResNet-18, even under noise perturbations. The spectral and Fourier-domain analyses further corroborated this diagonal-centric structure, indicating that injecting Gaussian noise into the off-diagonal entries yields only marginal shifts in the eigenvalues and minimal deterioration in signal-to-noise ratios. Such observations underscore the stability and robustness of the KFs, suggesting that the critical information for optimization resides predominantly along their diagonals. Building on these insights, we proposed a diagonal approximation of the KFs in Proposition~\ref{prop:proposition1}, offering a simplified yet effective route to approximate the FIM in DL. This diagonalization strategy significantly reduces computational overhead while preserving key curvature cues, aligning with the practical demands of large-scale optimization. 

Additionally, we extended K-FAC to normalization layers by establishing empirical factors for scale and shift parameters, thereby ensuring comprehensive coverage of modern deep network architectures. Our results revealed that even these normalization-centric matrices exhibit strong diagonal behavior, reinforcing the broader applicability of our diagonal approximation across diverse layer types. The final step involved deriving a closed-form preconditioned gradient under the diagonal FIM approximation, leading to an \emph{AdaFisher} update, which will be presented in Chapter~\ref{chap:adafisher}, that incorporates local curvature without incurring the full cost of second-order methods. Overall, the work in this chapter highlights how diagonal-dominant structures in the FIM can be exploited to achieve computationally efficient second-order optimization. This diagonalization perspective not only aligns with empirical observations of robust eigenvalue spectra but also provides a principled foundation for designing fast, curvature-aware training algorithms. Subsequent chapters will build upon these insights, evaluating the performance and convergence properties of diagonal-based Fisher approximations in both theoretical and experimental settings.

\chapter{AdaFisher: An Adaptive Second order Optimization via Fisher Information}\label{chap:adafisher}
In this chapter, we introduce \emph{AdaFisher}, a novel adaptive second-order optimization method that leverages the FIM to guide DNN training. Building on the adaptive framework established by Adam, AdaFisher employs a refined diagonal block-Kronecker approximation of the FIM, leading to more accurate curvature estimation. This enhancement not only accelerates convergence but also steers the optimizer toward flatter minima, thereby improving model generalization.

We begin by outlining how the FIM is integrated into the adaptive optimization framework and discuss the key differences between AdaFisher and conventional optimizers. Next, we present the distributed implementation of AdaFisher, demonstrating its scalability and efficiency in multi-GPU environments. Finally, we analyze the convergence properties of AdaFisher in both convex and non-convex settings, supported by theoretical insights and empirical results.

Overall, this chapter provides a comprehensive overview of the theoretical foundations and practical benefits of incorporating second-order information via the FIM into modern DL optimization.
\begin{table}[hpt]
\centering
\caption{Summary of the first moment (\(m^{(t)}\)), second moment (\(v^{(t)}\)), regret bounds, and applicability used in various optimizers for updating model parameters \(\theta^{(t+1)} = \theta^{(t)} - \alpha\, m^{(t)}/\sqrt{v^{(t)}}\). Here, \(\beta_1\) and \(\beta_2\) denote the hyperparameters for the first and second moments, respectively; \(L^{(t)}\) and \(R^{(t)}\) refer to the preconditioning matrices used in Shampoo \cite{gupta2018shampoo}; \(g^{(t)} = \text{vec}(G^{(t)})\); and \(T\) is the total number of steps. For AdaFactor, \(r^{(t)}\) and \(c^{(t)}\) denote the row and column accumulators that aggregate squared gradients to yield the factored second moment estimate. For Sophia, \(\hat{h}^{(t)}\) is an approximation of the Hessian diagonal (with decay \(\rho\) and regularization \(\epsilon\)). For SOAP, \(Q^{(t)}\) denotes the eigenvector matrix of Shampoo's preconditioner, used to transform the gradients into its eigenbasis for Adam-like updates. Please refer to Chapter~\ref{chap:literaturereview} for more details.}
\label{tab:summaryopt}
\resizebox{\textwidth}{!}{
\begin{tabular}{|l|c|c|c|c|c|}
\hline
\textbf{Optimizer} & \(m^{(t)}\) & \(v^{(t)}\) & \textbf{Regret Bound} & \multicolumn{2}{c|}{\textbf{Applicability}} \\
\cline{5-6}
 & & & & CNNs & Transf. \\
\hline
Adam & \(\frac{(1-\beta_1) \sum_{i=1}^t \beta_1^{t-i} g_i}{1-\beta_1^t}\) & \(\left(\frac{(1-\beta_2) \sum_{i=1}^t \beta_2^{t-i} g_i g_i}{1-\beta_2^t}\right)^{1/2}\)  & \(O(\log T\sqrt{T})\) & \(\checkmark\) & \(\checkmark\) \\
AdaHessian  & \(\frac{(1-\beta_1) \sum_{i=1}^t \beta_1^{t-i} g_i}{1-\beta_1^t}\) & \(\left(\frac{(1-\beta_2) \sum_{i=1}^t \beta_2^{t-i} D_i^{(s)} D_i^{(s)}}{1-\beta_2^t}\right)^{1/2}\) &  \(O(\log T\sqrt{T})\) & \(\checkmark\) & \(\checkmark\) \\
K-FAC  & \((\hat{F}^{(t)})^{-1} \mathbf{g}^{(t)}\) & \(1\) & \(O(\sqrt{T})\) & \(\checkmark\) & \(\times\) \\
Shampoo & \((L^{(t)})^{\frac{-1}{4}} G^{(t)} (R^{(t)})^{\frac{-1}{4}}\) & \(1\) & \(O(\sqrt{T})\) & \(\checkmark\) & \(\times\) \\
Sophia & \(\frac{(1-\beta_1) \sum_{i=1}^t \beta_1^{\,t-i} g_i}{1-\beta_1^t}\) & \(\rho\, \hat{h}^{(t)} + \epsilon\) & \(O(\log T\sqrt{T})\) & \(\checkmark\) & \(\checkmark\) \\
SOAP & \(\frac{(1-\beta_1)\sum_{i=1}^{t} \beta_1^{\,t-i} Q^{(t)\top} g_i}{1-\beta_1^t}\) & \(1\) & \(O(\sqrt{T})\) & \(\checkmark\) & \(\times\)\\
AdaFactor & \(\frac{(1-\beta_1) \sum_{i=1}^t \beta_1^{t-i} g_i}{1-\beta_1^t}\) & \(\frac{r^{(t)}_i\, c^{(t)}_j}{\sum_k r^{(t)}_k}\) & \(O(\log T\sqrt{T})\) & \(\checkmark\) & \(\checkmark\) \\
\hline
\rowcolor{PaleGreen}
AdaFisher    & \(\frac{(1-\beta_1) \sum_{i=1}^t \beta_1^{t-i} g_i}{1-\beta_1^t}\) & \(\tilde{F}_{D}^{(t)}\) & \(O(\log T\sqrt{T})\) & \(\checkmark\) & \(\checkmark\) \\
\hline
\end{tabular}
}
\end{table}
\begin{figure}[!h]
    \centering
    \includegraphics[width=\textwidth]{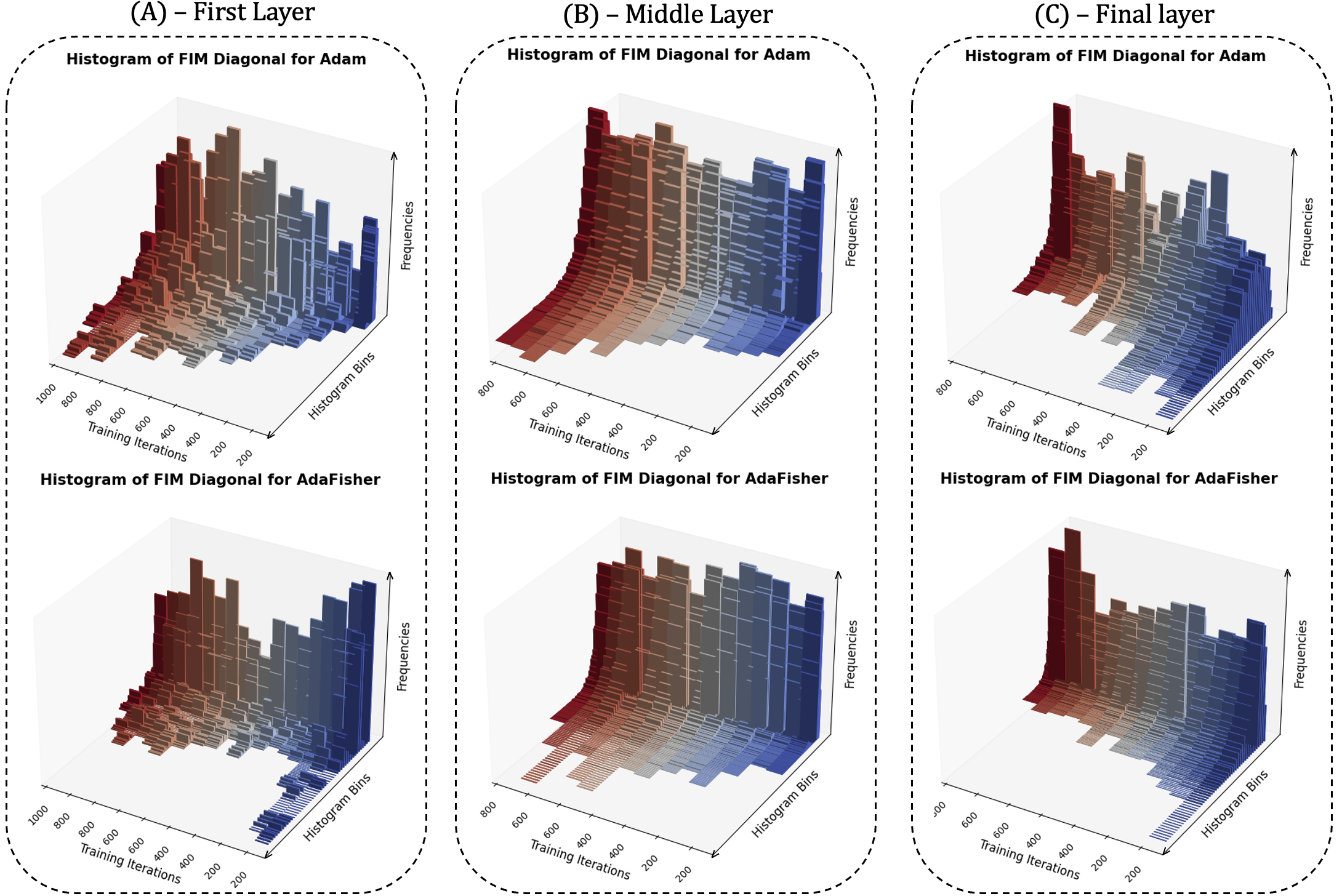}
    \caption{Comparison of FIM Diagonal Histograms during ResNet18 Training on CIFAR10 with Adam and AdaFisher over 1,000 training iterations. Panel (A) displays the FIM diagonal elements for the first convolutional layer; Panel (B) illustrates the FIM diagonal elements for the middle convolutional layer; Panel (C) shows the FIM diagonal elements for the last Linear layer.}
    \label{fig:hist_fisher}
\end{figure}
\section{Integrating FIM into Adaptive Optimization Framework} \label{sec:augmentingFIMadam}
Following the spirit of the adaptive optimization framework from Adam, which combines momentum and RMSProp principles \citep{pmlr-v28-sutskever13}, the parameters are updated via
\begin{align*}
    \boldsymbol{\theta}^{(t+1)} = \boldsymbol{\theta}^{(t)} - \alpha \frac{\mathbf{m}^{(t)}}{\mathbf{v}^{(t)}}.
\end{align*} 
Here, $\alpha$ represents the learning rate, while $\mathbf{m}^{(t)}$ and $\mathbf{v}^{(t)}$ denote the first and second moment estimates, respectively, for time step $t$. Although Adam is widely used, its approximation of the second moment using simple diagonal elements of second-order statistics through squared gradients can mirror stability challenges \citep{NEURIPS2019_46a558d9}. We overcome these challenges by utilizing a more refined diagonal block-Kronecker approximation of the FIM introduced in Section \ref{sec:effcomputFIM}, a more precise approximation of the Hessian from Taylor series expansion viewpoint as discussed in Section~\ref{subsec:multivariatecalculus}. This structured alternative to Adam’s diagonal approximation enables precise curvature estimation, mitigating stability issues and improving convergence in non-convex settings.  AdaFisher distinguishes itself from Adam by providing a higher-fidelity approximation of the FIM, which enhances both optimization efficiency and model robustness throughout training. As shown in Figure~\ref{fig:hist_fisher}, the FIM values computed by AdaFisher exhibit reduced variance and lower mean values during training—indications that the optimizer is converging toward flatter minima and inducing an implicit regularization effect. This behavior is especially pronounced during the final training stages, where AdaFisher outperforms its counterparts (see Chapter~\ref{chap:experiments}). The improvement stems from the FIM’s dual role: in the early and mid-training phases, it approximates the Hessian (acting similarly to the Generalized Gauss-Newton Matrix~\citep{eschenhagen2024kronecker}), and near local minima, it aligns more precisely with the Hessian~\citep{martens2020new}, thereby accelerating convergence. Our analysis of the diagonal block-Kronecker FIM distribution during ResNet18 training on CIFAR10 further corroborates these findings. Across various network layers—including the first (Panel A) and middle (Panel B) convolutional layers, as well as the final linear layer (Panel C)—AdaFisher consistently produces narrower FIM distributions with lower values compared to Adam, confirming its tendency to converge toward flatter local minima. This characteristic ultimately leads to more stable generalization when transitioning from training to testing distributions~\citep{cha2021swad}. 

Additionally, AdaFisher eliminates the square root and the conventional EMA on the second moment. This is because the FIM naturally incorporates an EMA of its KFs (see Eq. (\ref{eq:expkronfactors})). Removing the square root is also consistent with the theoretical foundations of second-order methods, which rely on a second-order Taylor expansion approximation rather than a square-root transformation. A comparative summary of various moment estimates, $\mathbf{m}^{(t)}$ and $\mathbf{v}^{(t)}$, along with their regret bounds and applicability to different optimizers, is provided in Table~\ref{tab:summaryopt}. Notably, while SGD-based optimizers such as K-FAC or Shampoo exhibit a regret bound of $O(\sqrt{T})$, AdaFisher incurs a regret bound of $O(\log T\,\sqrt{T})$ as it belongs to the Adam's family, which is theoretically higher. However, this theoretical disadvantage is offset by AdaFisher's superior practical convergence. Specifically, AdaFisher leverages an approximate FIM to precondition its update steps, thereby reducing the variance of the gradients and adapting more accurately to the local curvature of the loss surface. In real-world scenarios—where the loss landscape is nonconvex and the worst-case assumptions underlying regret bounds are overly pessimistic—this refined curvature estimation facilitates more stable and efficient progress during training. Consequently, even though the worst-case bound for AdaFisher is higher, its enhanced exploitation of second-order information consistently results in faster and more robust convergence compared to its SGD-based counterparts.

Finally, inspired by AdamW—which improves Adam by directly integrating weight decay into the weight update to counteract suboptimal decay behavior and boost performance—we introduce AdaFisherW. This variant leverages the curvature information from AdaFisher within the AdamW framework to further enhance optimization. The implementation for both AdaFisher variants is delineated in the pseudo-code presented in Algorithm~\ref{alg:adaFisher1}. 
\begin{algorithm}[!h]
   \caption{AdaFisher optimization algorithm. Good default settings for the tested machine learning problems are $\alpha = 0.001$ (learning rate), $\lambda = 0.001$ (Tikhonov damping parameter),$\gamma = 0.8$ (Exponentially decaying factor). [Default parameters are: $\beta = 0.9$ (Exponentially decaying factor of Adam), $\kappa$ (weight decay) (\citet{kingma2017adam}, \citet{loshchilov2018decoupled})].}
   \label{alg:adaFisher1}
\begin{algorithmic}[1]
    \REQUIRE Step size $\alpha$; Exponential decay rate for KFs $\gamma \in [0,1)$; Tikhonov damping parameter $\lambda$; Exponential decay rate for first moments $\beta$ in $[0,1)$; Initial parameters $\theta$ \\
    \textbf{Initialize} 1st moment variable $m = 0$;  FIM  $\Tilde{F}_{D_{i}} = \mathbf{I}$; time step $t = 0$\\
    \WHILE{stopping criterion not met}
    \STATE Sample a minibatch of $M$ examples from the training set $\{(x_n,y_n)\}_{n=1}^{M}$ \\
    \vspace{1mm}
    \STATE Compute $\mathcal{H}_{D_{i-1}}$, $\mathcal{S}_{D_{i}}$ for $i \in \{1, \dots, L\}$ using Section~\ref{sec:effcomputFIM} (notice that: $\mathcal{H}_{D_{0}} = x$) \\
    \vspace{1mm}
    \STATE  Compute EMAs of $\mathcal{H}_{D_{i-1}}$ and $\mathcal{S}_{D_{i}}$ using Eq. (\ref{eq:expkronfactors})\\
    \vspace{1mm}
    \STATE Compute $\tilde{F}_{D_{i}}$ for $i \in \{1, \dots, L\}$ using Eq. (\ref{eq:FIMDiag})\\
    \vspace{1mm}
    \STATE $g^{(t)} \leftarrow \frac{1}{M} \sum_{n} \nabla_{\theta^{(t)}}\mathcal{L}(f(x_n; \theta^{(t)}), y_n)$ (Compute gradient) \\
    \vspace{1mm}
    \STATE $m^{(t+1)} \leftarrow \frac{\beta m^{(t)} + (1 - \beta)h^{(t)}}{1 - \beta^{t}}$ (Update and correct biased first moment) \\
    \vspace{1mm}
    \STATE \textbf{Case AdaFisher:} $\Delta \theta^{(t)} = - \alpha (\tilde{F}_{D}^{(t)})^{-1}m^{(t)}$\\ \textbf{Case AdaFisherW:} $\Delta \theta^{(t)} = - \alpha \left((\tilde{F}_{D}^{(t)})^{-1}m^{(t)} + \kappa \theta^{(t)}\right)$ \\
    \vspace{1mm}
    \STATE  $\theta^{(t+1)} \leftarrow \theta^{(t)} + \Delta \theta^{(t)}$ (Apply update)\\
    \vspace{1mm}
    \STATE $t \leftarrow t + 1$ \\
    \ENDWHILE
\end{algorithmic}
\end{algorithm}
\section{Distributed Implementation and Algorithm Overview} \label{sec:distributedImplementation}
To complement the integration of the FIM into the adaptive optimization framework, we now detail the algorithmic implementation of AdaFisher and its compatibility with distributed multi-GPU environments. This section provides a concise overview of the core update rules, the mechanism for aggregating curvature information across GPUs, and a summary of the pseudo-code for AdaFisher. In distributed settings, aggregating these KFs across GPUs is crucial before updating model parameters. For a training setup with $K$ GPUs and for any given layer $i$, the KFs are computed and aggregated as follows
\begin{align}
\mathcal{H}_{D{i-1}}^{(\text{SUM})} = \frac{1}{K} \sum_{k=1}^{K} \mathcal{H}_{D{i-1}}^{(k)}, \quad \mathcal{S}_{D_i}^{(\text{SUM})} = \frac{1}{K} \sum_{k=1}^{K} \mathcal{S}_{D_i}^{(k)} \label{eq:distributedkf}
\end{align}
The theoretical justification for this aggregation lies in the linearity of expectation and the unbiasedness of the local KF estimates. Specifically, if each $\mathcal{H}_{D_{i-1}}^{(k)}$ and $\mathcal{S}_{D_i}^{(k)}$ are unbiased estimators of their respective true factors $\mathcal{H}_{D_{i-1}}$ and $\mathcal{S}_{D_i}$ for $k=1, \dots, K$, then the averaged factors $\mathcal{H}_{D_{i-1}}^{\text{(SUM)}}$ and $(\mathcal{S}_{D_i})^{\text{(SUM)}}$ remain unbiased estimators of $\mathcal{H}_{D_{i-1}}$ and $\mathcal{S}_{D_i}$. Consequently, using Eq. (\ref{eq:distributedkf}), the aggregated EFIM for layer $i$ can be calculated as
\begin{align*}
    \tilde{F}_{D_i}^{\text{SUM}} = \mathcal{H}_{D_{i-1}}'^{(\text{SUM})} \otimes \mathcal{S}_{D_i}'^{(\text{SUM})} + \lambda \mathbf{I}
\end{align*}
where $\lambda$ is a regularization parameter that ensures numerical stability. This strategy allows each GPU to contribute effectively to the model update, thereby enhancing convergence and performance in large-scale distributed training. More details, results and ablation about the multi-GPU environments are detailed in Chapter~\ref{chap:experiments}.
\section{Convergence Analysis} \label{sec:convergenceanalysis}
In this section, we provide a theoretical analysis of AdaFisher's convergence in both convex optimization and non-convex stochastic optimization. We first present a standard convergence behavior of Eq. (\ref{eq:NGD}) for a simple strongly convex and strictly smooth function $f(J)$.  
\begin{proposition}[Convergence in convex optimization]\label{prop:convex-case}
For the FIM defined in Eq. (\ref{eq:FIMDiag}), the updating scheme $\boldsymbol{\theta}^{(t+1)} = \boldsymbol{\theta}^{(t)} -\alpha  (\tilde{F}_{D}^{(t)})^{-1} \nabla J(\boldsymbol{\theta}^{(t)})$ converges. Moreover, if $\nabla J$ is Lipschitz, i.e., $||\nabla J(\boldsymbol{\theta}) - \nabla J(\boldsymbol{\theta}')||_2 \leq L ||\boldsymbol{\theta} - \boldsymbol{\theta}'||$ for any $\boldsymbol{\theta} $ and $ \boldsymbol{\theta}'$, then for the $k$-step iteration with a fixed step size $\alpha\leq 1/L$, then 
\begin{equation*}
J(\boldsymbol{\theta}^{(k)}) - J(\boldsymbol{\theta}^*) \leq \frac{||\boldsymbol{\theta}^{(0)} - \boldsymbol{\theta}^*||_2^2}{2\alpha k},
\end{equation*}
where $J(\boldsymbol{\theta}^*)$ is the optimal value.  
\end{proposition}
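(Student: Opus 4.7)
The plan is to adapt the classical $O(1/k)$ convergence analysis for gradient descent on $L$-smooth convex functions to the preconditioned update $\boldsymbol{\theta}^{(t+1)} = \boldsymbol{\theta}^{(t)} - \alpha P^{(t)} \nabla J(\boldsymbol{\theta}^{(t)})$, where I write $P^{(t)} = (\tilde{F}_{D}^{(t)})^{-1}$ and $\mathbf{g}^{(t)} = \nabla J(\boldsymbol{\theta}^{(t)})$. The key structural observation I would exploit is that, by construction in Eq. (\ref{eq:FIMDiag}), $\tilde{F}_D^{(t)}$ is diagonal and positive definite: its entries are Kronecker products of Min-Max normalized KFs (each in $[0,1]$) plus a strictly positive Tikhonov shift $\lambda I$. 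Consequently the spectrum of $\tilde{F}_D^{(t)}$ lies in $[\lambda,\,1+\lambda]$ and the spectrum of $P^{(t)}$ in $[1/(1+\lambda),\,1/\lambda]$. This uniform spectral control is the essential ingredient: it guarantees that the preconditioner neither degenerates nor blows up across iterations, and that the preconditioned step is comparable to a vanilla gradient step up to constants.

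First, I would establish the bare convergence claim by a monotone-descent argument. Applying the descent lemma for an $L$-smooth function along the update direction yields an inequality of the form $J(\boldsymbol{\theta}^{(t+1)}) \leq J(\boldsymbol{\theta}^{(t)}) - \alpha \langle \mathbf{g}^{(t)}, P^{(t)} \mathbf{g}^{(t)} \rangle + \tfrac{L\alpha^2}{2} \|P^{(t)} \mathbf{g}^{(t)}\|_2^2$. Using positive-definiteness of $P^{(t)}$, the spectral bounds above, and the hypothesis $\alpha \leq 1/L$, the right-hand side reduces to a strict decrease proportional to $-\|\mathbf{g}^{(t)}\|_2^2$. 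Since $\{J(\boldsymbol{\theta}^{(t)})\}$ is then nonincreasing and bounded below by $J(\boldsymbol{\theta}^*)$, it converges, and summability of $\|\mathbf{g}^{(t)}\|_2^2$ forces $\mathbf{g}^{(t)} \to 0$.

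Next, to obtain the explicit $O(1/k)$ rate, I would use a Lyapunov argument on $\Phi^{(t)} = \|\boldsymbol{\theta}^{(t)} - \boldsymbol{\theta}^*\|_2^2$. Expanding gives $\Phi^{(t+1)} = \Phi^{(t)} - 2\alpha \langle \mathbf{g}^{(t)}, P^{(t)}(\boldsymbol{\theta}^{(t)} - \boldsymbol{\theta}^*)\rangle + \alpha^2 \|P^{(t)} \mathbf{g}^{(t)}\|_2^2$. Combining this with the convexity inequality $\langle \mathbf{g}^{(t)}, \boldsymbol{\theta}^{(t)} - \boldsymbol{\theta}^* \rangle \geq J(\boldsymbol{\theta}^{(t)}) - J(\boldsymbol{\theta}^*)$ and the descent lemma established above, I would telescope over $t = 0, \ldots, k-1$ to obtain an inequality of the shape $\sum_{t=0}^{k-1} [J(\boldsymbol{\theta}^{(t)}) - J(\boldsymbol{\theta}^*)] \leq \tfrac{1}{2\alpha}\,\Phi^{(0)}$. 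Finally, invoking monotonicity of $\{J(\boldsymbol{\theta}^{(t)})\}$ from the first step to lower-bound the left-hand side by $k\,[J(\boldsymbol{\theta}^{(k)}) - J(\boldsymbol{\theta}^*)]$ delivers the claimed bound.

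The main obstacle is reconciling the time-varying, non-identity preconditioner $P^{(t)}$ with the stated rate, whose form $\frac{\|\boldsymbol{\theta}^{(0)} - \boldsymbol{\theta}^*\|_2^2}{2\alpha k}$ is precisely the one produced by unpreconditioned gradient descent. The cross-term $\langle \mathbf{g}^{(t)}, P^{(t)}(\boldsymbol{\theta}^{(t)} - \boldsymbol{\theta}^*)\rangle$ is not directly the convexity slack $J(\boldsymbol{\theta}^{(t)}) - J(\boldsymbol{\theta}^*)$, so the telescoping needs to be carried out either in the preconditioned inner product $\langle \cdot,\cdot \rangle_{P^{(t)-1}}$ (tracking the drift of this metric between iterations through the fixed $\lambda$ and the normalized factors) or by absorbing the preconditioner-dependent constants into a refined smoothness modulus for the transformed problem. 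Tracking the exact constants on the right-hand side is the delicate step; the qualitative $O(1/k)$ rate itself follows almost immediately from the uniform spectral bounds on $P^{(t)}$ once the descent lemma is in hand.
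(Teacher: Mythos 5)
Your route (descent lemma plus a Euclidean Lyapunov/telescoping argument) is genuinely different from the paper's, which assumes strong convexity $\alpha I \preceq \nabla^2 J \preceq \beta I$ together with a two-sided spectral bound on $\tilde{F}^{(t)}$, derives a Newton-style sufficient-decrease inequality $J(\boldsymbol{\theta}^{(t+1)}) - J(\boldsymbol{\theta}^{(t)}) \leq -\tfrac{\alpha}{2\beta^2}\|\mathbf{g}^{(t)}\|_2^2$ following the AdaHessian/Boyd damped-Newton analysis, and then simply cites Ryu and Boyd for the $O(1/k)$ rate rather than deriving it. However, your sketch has two genuine gaps. First, the descent step does not go through as you state it: with the spectral bounds you yourself derive, $\tilde{F}_D^{(t)}$ has eigenvalues in $[\lambda, 1+\lambda]$, so $P^{(t)}$ has eigenvalues up to $1/\lambda$ (here $\lambda = 10^{-3}$, so up to $10^3$). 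Writing $g^\top (P^{(t)})^2 g \leq \|P^{(t)}\|_2\, g^\top P^{(t)} g \leq \lambda^{-1} g^\top P^{(t)} g$, the descent lemma gives a decrease of at least $\alpha\bigl(1 - \tfrac{L\alpha}{2\lambda}\bigr) g^\top P^{(t)} g$, which is \emph{not} guaranteed positive under $\alpha \leq 1/L$; you would need $\alpha \lesssim \lambda/L$, or the additional structural assumption $\tilde{F}_D^{(t)} \succeq I$ (so that the preconditioned step is never longer than the plain gradient step), which the Min-Max construction does not provide. So the claim that the hypothesis $\alpha \leq 1/L$ "reduces to a strict decrease proportional to $-\|\mathbf{g}^{(t)}\|_2^2$" is false as written.

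Second, the telescoping you defer to "the delicate step" is not a constant-tracking issue but the missing idea itself. In the Euclidean metric the cross term $\langle \mathbf{g}^{(t)}, P^{(t)}(\boldsymbol{\theta}^{(t)}-\boldsymbol{\theta}^*)\rangle$ cannot be lower-bounded by the convexity gap $J(\boldsymbol{\theta}^{(t)})-J(\boldsymbol{\theta}^*)$ without eigenvalue losses; in the weighted metric $\|\cdot\|^2_{\tilde{F}_D^{(t)}}$ the cross term is fine, but the metric changes with $t$, and each iteration contributes a drift term $(\boldsymbol{\theta}^{(t+1)}-\boldsymbol{\theta}^*)^\top(\tilde{F}_D^{(t+1)}-\tilde{F}_D^{(t)})(\boldsymbol{\theta}^{(t+1)}-\boldsymbol{\theta}^*)$ that does not telescope away without an extra monotonicity assumption on the preconditioners (analogous to the condition $\tilde{F}_D^{(t-1)}/\eta^{(t-1)} \leq \tilde{F}_D^{(t)}/\eta^{(t)}$ that the paper imposes only in the non-convex proposition). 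Even then the resulting bound is in the $\tilde{F}_D^{(0)}$-weighted initial distance with $\lambda$-dependent constants, not the preconditioner-free bound $\|\boldsymbol{\theta}^{(0)}-\boldsymbol{\theta}^*\|_2^2/(2\alpha k)$ claimed in the statement. In short, your plan is the natural one for the stated rate, but as it stands it neither establishes monotone descent under $\alpha \leq 1/L$ nor produces the claimed constant; to close it you must either restrict $\alpha$ by a factor of $\lambda$, add a hypothesis of the form $\tilde{F}_D^{(t)} \succeq I$ (or monotone $\tilde{F}_D^{(t)}$), or accept a bound with $\lambda$-dependent constants.
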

For non-convex cases, we adopt the similar derivations of \cite{chen2018on} since AdaFisher belongs to the family of generalized Adam-type methods. 

\begin{proposition}[Convergence in non-convex stochastic optimization] \label{prop:non-convex} Under the assumptions:\\
(i) $f$ is lower bounded and differentiable; $||\nabla J(\boldsymbol{\theta}) - \nabla J(\boldsymbol{\theta}')||_2\leq L ||\boldsymbol{\theta} - \boldsymbol{\theta}'||_2$, $||\tilde{F}_{D}^{(t)}||_\infty<L,\, \forall t, \boldsymbol{\theta}, \boldsymbol{\theta}'$.\\
(ii) Both the true and stochastic gradient are bounded, i.e. $||\nabla J(\boldsymbol{\theta}^{(t)})||_2 \leq \lambda$ and $||g_t||_2 \leq \lambda$, $\forall t$ for some $\lambda>0$.\\
(iii) Unbiased and independent noise in $\mathbf{g}^{(t)}$, i.e. $\mathbf{g}^{(t)} = \nabla J(\boldsymbol{\theta}^{(t)}) + \zeta^{(t)}$, $\mathbb{E}[\zeta^{(t)}] = 0$, and $\zeta^{(t)}\perp\zeta^{(t)}$, $\forall i \neq j$.

Assume $\eta^{(t)} = \frac{\eta}{\sqrt{t}}$, $\beta^{(t)}\leq \beta\leq 1$ is non-increasing, $\frac{\tilde{F}_{D}^{(t-1)}[j]}{\eta^{(t-1)}}\leq \frac{\tilde{F}_{D}^{(t)}[j]}{\eta^{(t)}}$, $\forall t\in [T], j\in [d]$, we then have 
\begin{equation*}
\min_{t\in [T]}\mathbb{E}[||\nabla J(\boldsymbol{\theta}^{(t)})||_2^2] \leq \frac{L}{\sqrt{T}}(C_1 \eta^2 \lambda^2(1+ \log T) + C_2 d\eta + C_3 d\eta^2 + C_4)
\end{equation*}
where $C_1, C_2, C_3$ are constants independent of $d$ and $T$, $C_4$ is a constant independent of $T$, the expectation is taken w.r.t all the randomness corresponding to $\{g^{(t)}\}$.
\end{proposition}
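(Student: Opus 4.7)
The plan is to adapt the general framework of \cite{chen2018on} for Adam-type algorithms to the AdaFisher update $\boldsymbol{\theta}^{(t+1)} = \boldsymbol{\theta}^{(t)} - \eta^{(t)} (\tilde{F}_{D}^{(t)})^{-1} \mathbf{m}^{(t)}$, treating the preconditioner $(\tilde{F}_{D}^{(t)})^{-1}$ as the adaptive step-size matrix and the bias-corrected momentum $\mathbf{m}^{(t)}$ as the first-moment estimate. The starting point is the standard $L$-smoothness descent inequality
\[
J(\boldsymbol{\theta}^{(t+1)}) \leq J(\boldsymbol{\theta}^{(t)}) + \langle \nabla J(\boldsymbol{\theta}^{(t)}),\, \boldsymbol{\theta}^{(t+1)} - \boldsymbol{\theta}^{(t)}\rangle + \frac{L}{2}\|\boldsymbol{\theta}^{(t+1)} - \boldsymbol{\theta}^{(t)}\|_2^2,
\]
into which the AdaFisher update is substituted so that the analysis reduces to controlling an inner-product term and a squared-norm term summed over $t=1,\dots,T$.

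First, I would decompose the inner product as $-\eta^{(t)}\langle \nabla J(\boldsymbol{\theta}^{(t)}),\,(\tilde{F}_{D}^{(t)})^{-1}\mathbf{m}^{(t)}\rangle$ and, following the auxiliary-sequence trick of \cite{chen2018on}, add and subtract $\eta^{(t)}\langle \nabla J(\boldsymbol{\theta}^{(t)}),\,(\tilde{F}_{D}^{(t-1)})^{-1}\mathbf{m}^{(t-1)}\rangle$ to separate a dominant negative term $-\eta^{(t)}\|(\tilde{F}_{D}^{(t)})^{-1/2}\nabla J(\boldsymbol{\theta}^{(t)})\|_2^2$ from a telescoping error whose non-negativity relies on the hypothesis $\tilde{F}_{D}^{(t-1)}[j]/\eta^{(t-1)} \leq \tilde{F}_{D}^{(t)}[j]/\eta^{(t)}$. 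The quadratic term is controlled via $\|(\tilde{F}_{D}^{(t)})^{-1}\mathbf{m}^{(t)}\|_2^2 \leq \lambda^{-2}\|\mathbf{m}^{(t)}\|_2^2$ using the damping $\lambda$ that lower-bounds $\tilde{F}_{D}^{(t)}$, together with the uniform bound $\|\mathbf{m}^{(t)}\|_2\leq \lambda$ that follows from $\|\mathbf{g}^{(t)}\|_2\leq \lambda$ and the convex-combination structure in Algorithm~\ref{alg:adaFisher1}.

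Next, I would take expectations with respect to the natural filtration, use assumption (iii) ($\mathbb{E}[\zeta^{(t)}]=0$ and independence across $t$) to cancel the martingale cross-terms, and telescope from $t=1$ to $T$. The $\log T$ factor emerges from $\sum_{t=1}^T (\eta^{(t)})^2 = \eta^2\sum_{t=1}^T 1/t = O(\eta^2\log T)$, producing the $C_1\eta^2\lambda^2(1+\log T)$ contribution; the $C_2 d\eta$ and $C_3 d\eta^2$ pieces come from summing the per-coordinate variations of the monotone sequence $\tilde{F}_{D}^{(t)}[j]/\eta^{(t)}$, whose total variation is $O(d)$ by assumption (i) which caps $\|\tilde{F}_{D}^{(t)}\|_\infty$; and $C_4$ absorbs the initial optimality gap $J(\boldsymbol{\theta}^{(1)})-\inf J$. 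Using the lower bound $\eta^{(t)}\geq \eta/\sqrt{T}$ on the left-hand side and converting the telescoped sum into $\min_{t\in[T]}\mathbb{E}[\|\nabla J(\boldsymbol{\theta}^{(t)})\|_2^2]$ by the elementary inequality $\min_t a_t \leq \frac{1}{T}\sum_t a_t$ yields the stated bound.

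The main obstacle will be the statistical coupling between $\mathbf{m}^{(t)}$ and $(\tilde{F}_{D}^{(t)})^{-1}$: both depend on the entire history of stochastic gradients, so the cross term $\langle \nabla J(\boldsymbol{\theta}^{(t)}),\,(\tilde{F}_{D}^{(t)})^{-1}\mathbf{m}^{(t)}\rangle$ does not factor under expectation, and the standard proof for SGD cannot be invoked directly. Resolving this cleanly requires the comparison with the shifted auxiliary sequence above, which in turn depends crucially on verifying (or enforcing via the Tikhonov damping and the EMA decay $\gamma$) the monotonicity assumption $\tilde{F}_{D}^{(t-1)}[j]/\eta^{(t-1)} \leq \tilde{F}_{D}^{(t)}[j]/\eta^{(t)}$ for every coordinate $j$. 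Establishing this monotonicity for our diagonal block-Kronecker estimator, given that the Min-Max normalization in Eq.~(\ref{eq:FIMDiag}) can in principle reduce individual diagonal entries between steps, is the delicate point that distinguishes the AdaFisher analysis from a verbatim transcription of the Adam-type proof and is where the bulk of the technical work will be concentrated.
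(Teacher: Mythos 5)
Your proposal follows essentially the same route as the paper: the paper's proof simply invokes the master inequality for Adam-type methods from \cite{chen2018on} as a black box and then bounds the three sums exactly as you describe ($\sum_{t}(\eta^{(t)})^2=\eta^2\sum_t 1/t$ giving the $\log T$ term, coordinate-wise telescoping of $\eta^{(t)}/\tilde{F}_{D}^{(t)}$ under the monotonicity hypothesis giving the $d\eta$ and $d\eta^2$ terms, and a lower bound on the left-hand side via $\|\tilde{F}_{D}^{(t)}\|_\infty\leq L$ and $\sum_t 1/\sqrt{t}\geq\sqrt{T}$ to extract the minimum gradient norm). The only differences are ones of depth rather than route—you re-derive the master inequality from the descent lemma and the auxiliary-sequence trick instead of citing it—and note that the condition $\tilde{F}_{D}^{(t-1)}[j]/\eta^{(t-1)}\leq \tilde{F}_{D}^{(t)}[j]/\eta^{(t)}$ is an explicit hypothesis of the proposition, so it need not be verified for the Min-Max normalized estimator (your concern there is about the plausibility of the assumption, not a gap in the proof).
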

The proofs of propositions~\ref{prop:convex-case} and \ref{prop:non-convex} are given in Appendix~\ref{chap:ApdxA}. Proposition \ref{prop:non-convex} implies the convergence rate for AdaFisher in the non-convex case is at $O(\log T /\sqrt{T})$, which is similar to Adam-type optimizers. While DNNs often include non-smooth components like ReLU and max pooling, which create non-differentiable points in the loss landscape, optimizers like AdaFisher handle these cases effectively, as shown by our results in Chapter~\ref{chap:experiments}. We further demonstrate AdaFisher's convergence on a simple model and dataset described in Appendix~\ref{chap:Appendixvisualization}. As illustrated in Figure~\ref{fig:weighttrajectories}, across various seeds, AdaFisher consistently outperforms the baseline optimizer by converging toward the optimal local minima in the loss landscape with respect to parameters \(W_1\) and \(W_2\). These results further validate the convergence analysis discussed above.
\begin{figure}[!h]
    \centering
    \includegraphics[width=\linewidth]{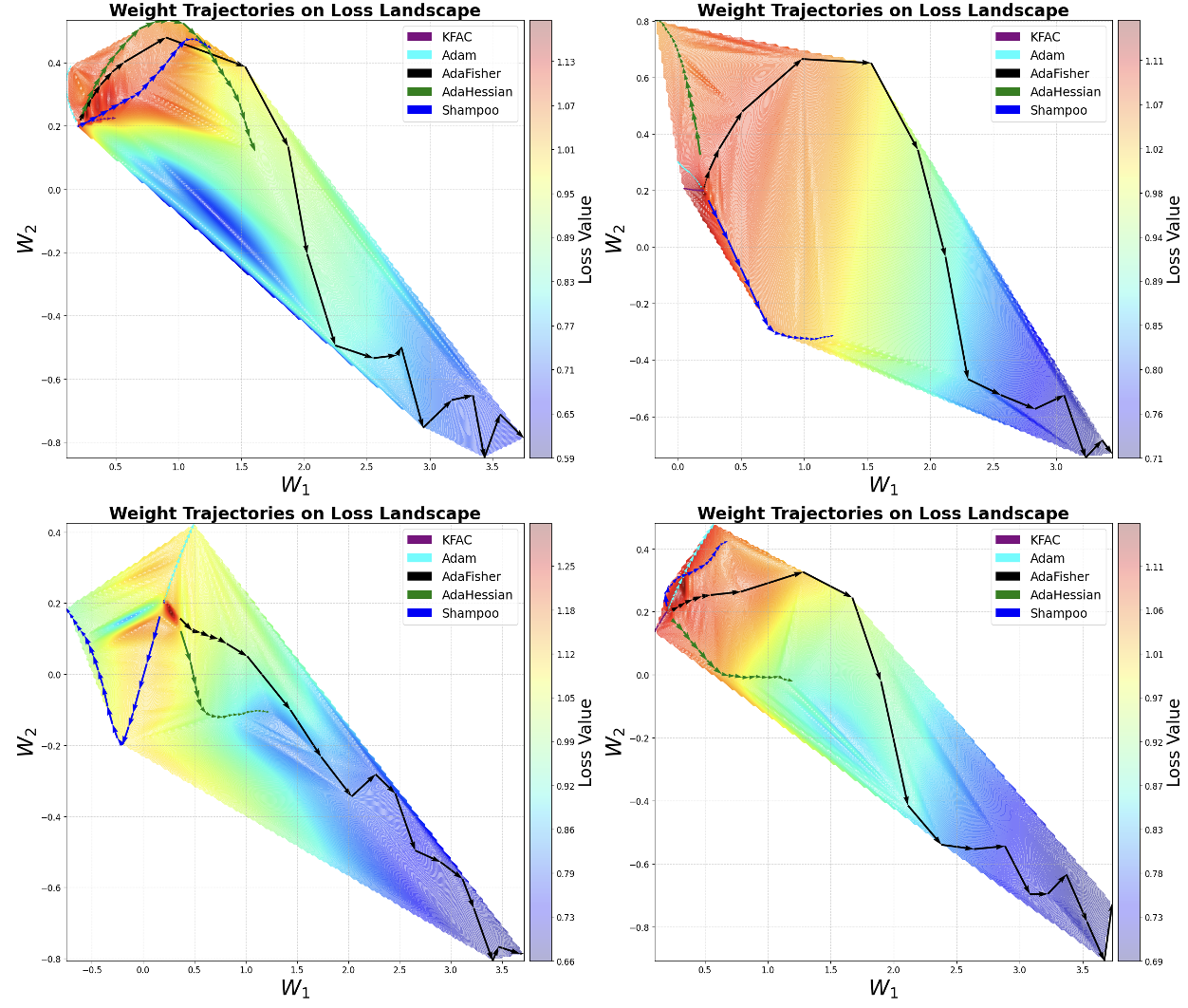}
    \caption{Weight trajectories within different loss landscapes (evaluated using four seeds) of a toy model for different optimizers.}
    \label{fig:weighttrajectories}
\end{figure}
\section{Concluding Remarks}
In this chapter, we introduced \emph{AdaFisher}, a novel adaptive second-order optimization method that leverages refined approximations of the FIM to enhance deep network training. By integrating a diagonal block-Kronecker approximation into an adaptive framework inspired by Adam, AdaFisher offers a higher-fidelity estimation of curvature compared to conventional methods, thus enabling more accurate and stable updates. Our approach not only accelerates convergence by effectively capturing local curvature information but also steers the optimization trajectory toward flatter minima, which is beneficial for generalization. We detailed the theoretical foundations underlying AdaFisher, including its derivation from natural gradient descent principles and the approximation of the FIM using layer-wise Kronecker factorization. The distributed implementation further demonstrates that AdaFisher scales efficiently in multi-GPU environments, making it practical for large-scale deep learning applications. Our convergence analysis, covering both convex and non-convex settings, shows that AdaFisher attains convergence rates comparable to other SOTA Adam-type optimizers while benefiting from a more robust curvature estimation mechanism.

From a theoretical standpoint, AdaFisher distinguishes itself by achieving convergence rates comparable to Adam-type methods—specifically, an $O(\log T/\sqrt{T})$ rate in non-convex settings—while incorporating more precise curvature information through a diagonal block-Kronecker approximation of the FIM. This refined curvature estimation not only improves the stability of updates by mitigating the effects of noisy gradients but also guides the optimization trajectory toward flatter minima, which are empirically linked to better generalization. In contrast, first-order methods like Adam rely solely on momentum and squared gradient accumulation, potentially compromising convergence in regions of high curvature. On the other hand, second-order methods such as K-FAC and Shampoo offer rapid convergence through full curvature information but often incur substantial computational overhead and reduced scalability. Hence, AdaFisher provides a compelling balance, leveraging enhanced second-order insights to maintain robust convergence while remaining computationally efficient for large-scale DL applications.

Empirical results, as summarized in Table~\ref{tab:summaryopt} and illustrated in Figure~\ref{fig:hist_fisher}, provide strong evidence that AdaFisher yields reduced variance in the Fisher estimates and drives the optimizer toward regions of the loss landscape that generalize well. Furthermore, the introduction of AdaFisherW, which integrates weight decay in the spirit of AdamW, underscores the flexibility and practical relevance of our method. Overall, the contributions presented in this chapter highlight a promising direction for the integration of second-order information into adaptive optimization. By bridging the gap between theoretical rigor and practical efficiency, AdaFisher paves the way for future research aimed at developing even more robust, scalable, and generalizable optimization methods for DNNs.
\chapter{Experiments}
\label{chap:experiments}
In this chapter, we present a comprehensive evaluation of AdaFisher across a range of tasks in both CV and NLP. Our experiments span image classification (using CIFAR-10, CIFAR-100 \citep{krizhevsky2009learning}, Tiny ImageNet \citep{le2015tiny}, and ImageNet-1k \citep{5206848}) and language modeling (using WikiText-2 \citep{merity2017pointer} and Penn Treebank (PTB) \citep{marcus-etal-1993-building}). We compare AdaFisher with several baseline optimizers, including SGD, Adam/AdamW, K-FAC, AdaHessian, and Shampoo. In addition, we perform a transfer learning task using pretrained ImageNet-1k weights \citep{paszke2019pytorch}. We finalize this section with a thorough ablation and stabilization studies of AdaFisher.

\section{Hyper-Parameter Optimization} \label{sec:HPdetailed}
To ensure a fair comparison, we carefully tuned HP for all optimizers across the different architectures and datasets. The experiments were conducted on six benchmark datasets, with HP tuning performed separately for CNN-based models and Vision Transformers (ViTs). Below, we detail our approach. The baseline optimizers compared include SGD, Adam/AdamW, AdaHessian, K-FAC, and Shampoo. While we conducted a single experiment for ImageNet-1k and Tiny ImageNet datasets due to computation time requirements, the CIFAR results (with or without pretrained weights) represent averaged performance metrics across five independent runs. We conducted one experiment for each language modeling datasets.
\paragraph{Hardware.} In total, we had a server with 6 NVIDIA RTX 6000 Ada Generation GPUS with 48 gigabytes of VRAM and 128 gigabytes of RAM available for all experiments. All experiments described in this report were conducted on a system equipped with a single NVIDIA RTX 6000 Ada Generation GPU and 64 gigabytes of RAM, except for training AdaFisher on ImageNet-1k with batch sizes of 512 and 1024, where four GPUs were utilized.
\subsection{Image Classification Experiments}  
We provide further results and detailed descriptions of our image classification experiments in this section. We conducted five trials with random initializations for the CIFAR experiments and one trial each for Tiny ImageNet and ImageNet-1k, presenting the mean and standard deviation of the results for these trials.

\textbf{Note on training time.} Given that various optimizers demonstrate significantly different epoch durations, we have standardized our comparisons by restricting training to the total wall-clock time (WCT) consumed by 200 epochs using AdaFisher for both CIFAR and Tiny ImageNet experiments. Conversely, for ImageNet-1k, we report the results based on 90 WCT training epochs using Adam, as, surprisingly, AdaFisher and Adam exhibited the same duration in this experiment. The final selected number of epochs for each optimizer is detailed in Table~\ref{num_epochs}. Note that we were unable to train AdaHessian on ImageNet-1k due to the significant computational resources required by this optimizer.

\begin{table*}[!ht]
    \centering
    \caption{Comparison of the final epoch counts for various optimizers across different datasets.}
    \scriptsize
    \setlength{\tabcolsep}{2pt}
    \begin{tabular}{c|c|c|c|c|c|c||c|c|c|c}
         & \multicolumn{6}{c||}{CIFAR10/100 \& Tiny ImageNet} & \multicolumn{4}{c}{ImageNet-1k} \\ \midrule
         Optimizers & SGD & Adam/AdamW & AdaHessian & K-FAC & Shampoo & AdaFisher/AdaFisherW & Adam & K-FAC & Shampoo & AdaFisher  \\
         \midrule
         Epochs & 226 & 210 & 89 & 107 & 36 & 200 & 90 & 60 & 26 & 90 
         \label{num_epochs}
    \end{tabular}
\end{table*}

\paragraph{HP Tuning.} Effective hyperparameter (HP) tuning is crucial for optimizing the performance of deep learning models. In this study, we systematically explored various hyperparameters for both CNNs and ViTs across multiple image classification tasks. The following sections detail the tuning strategies employed for each model architecture and dataset.

\paragraph{CNNs.} For all image classification tasks involving CNNs, we utilized ResNet18 as the backbone architecture and evaluated its performance on the CIFAR-10 dataset with a fixed batch size of 256, trained for 50 epochs. The HP tuning process encompassed several components. First, for optimizer selection and learning rate tuning, each optimizer was fine-tuned using ResNet18 on CIFAR-10. We performed a grid search to identify the optimal learning rate from the set \(\{0.0001, 0.0003, 0.0005, 0.0009, \dots, 0.1, 0.3, 0.5, 0.9\}\). Second, a cosine annealing learning rate decay strategy was employed, aligning with the number of training epochs specified for each optimizer in Table~\ref{num_epochs}. This approach follows the methodology proposed by \citet{loshchilov2018decoupled} and was determined to be optimal for our experimental setup. Third, weight decay was applied uniformly at \(5 \times 10^{-4}\) across all optimizers for CIFAR-10 and Tiny ImageNet; an exception was made for MobileNetV3, where the weight decay was set to \(1 \times 10^{-5}\), while for experiments on ImageNet-1k the weight decay was established at \(1 \times 10^{-4}\). Fourth, for damping parameter tuning, different strategies were applied. For AdaFisher, K-FAC, and Shampoo, the damping parameter for K-FAC and AdaFisher was searched within the set \(\{0.0001, 0.0003, 0.0005, 0.0009, 0.001, 0.003,\) \newline \(0.005, 0.009, 0.01, 0.03, 0.05, 0.09\}\), a range chosen based on prior research \citep{pmlr-v37-martens15} and our own experiments that indicated optimal damping values around \(1 \times 10^{-3}\). For Shampoo, the damping parameter was tuned within the set \(\{1 \times 10^{-6}, 3 \times 10^{-6}, 5 \times 10^{-6}, 9 \times 10^{-6}, 1 \times 10^{-5}, 3 \times 10^{-5}, 5 \times 10^{-5}, 9 \times 10^{-5}, 1 \times 10^{-4}, 3 \times 10^{-4}, 5 \times 10^{-4}, 9 \times 10^{-4}\}\), as optimal values typically reside around \(1 \times 10^{-5}\). In addition, for AdaHessian the Hessian power was tuned within the range \(\{0.1, 0.2, \dots, 0.9, 1.0\}\); for SGD the momentum was tuned within the range \(\{0.1, 0.2, \dots, 0.9, 1.0\}\); and for AdaFisher the decay factor \(\gamma\) was tuned within the set \(\{0.1, 0.2, \dots, 0.9, 0.99\}\), with the optimal value determined to be \(\gamma = 0.8\). Finally, for implementation details, we utilized the ASDL library as implemented in PyTorch provided by \citet{osawa2023asdl} for both the Shampoo and K-FAC optimizers.

\paragraph{ViTs.} For ViT-based image classification tasks, we employed the Tiny Swin Transformer on the CIFAR-10 dataset with a batch size of 256. The HP tuning strategy for ViTs included several elements. Weight decay values were set as indicated in the respective original publications for each model: \(1 \times 10^{-2}\) for Tiny Swin, \(5 \times 10^{-2}\) for FocalNet, and \(6 \times 10^{-2}\) for CCT-2/3$\times$2. For learning rate tuning, a grid search was conducted over the set \(\{0.3, 0.15, 0.1, 0.05, 0.03, 0.015, 0.01, 0.005,\)\newline \(0.003, 0.0015, 0.001\}\) for optimizers such as SGD, AdaFisher, AdaHessian, K-FAC, and Shampoo, since these optimizers typically operate with higher learning rates compared to Adam-based optimizers. For AdamW, the learning rates were adopted from the original publications: \(1 \times 10^{-4}\) for Tiny Swin and FocalNet, and \(5.5 \times 10^{-5}\) for CCT-2/3$\times$2. The same grid search approach was applied for tuning the damping parameter for K-FAC, Shampoo, and AdaFisher, as well as for the Hessian power for AdaHessian, the momentum for SGD, and the decay factors for AdaFisher, as explained in the CNNs section. 

This meticulous HP tuning process ensures that each optimizer is optimally configured for the respective model architectures and datasets, thereby facilitating a fair and comprehensive comparison of their performance across different image classification tasks. The final learning rates for all optimizers and models are detailed in Table~\ref{learning_rate}.
\begin{table*}[!ht]
    \centering
    \caption{Final selected learning rates for each optimizer, tuned using ResNet18 (for CNN) and Tiny Swin (for ViT) on CIFAR10 using a batch size of 256. We selected based on final validation top-1 accuracy.}
    \scriptsize
    \setlength{\tabcolsep}{7pt}
    \begin{tabular}{c|c|c|c|c|c|c|c|c}
    Architecture & SGD & Adam & AdamW & AdaHessian & K-FAC & Shampoo & AdaFisher  & AdaFisherW\\
    \midrule
    CNNs & 0.1& 0.001 & - & 0.15 & 0.3 & 0.3 & 0.001 & -\\
    ViTs & 0.01& - & 0.0001/0.000055 & 0.01 & 0.003 & 0.003 & - & 0.001
    \label{learning_rate}
    \end{tabular}
\end{table*}

\paragraph{Dataset Details and Data Augmentation.} The CIFAR-10/100 datasets contain 50k training and 10k test images. We use a batch size of 256. Data augmentation includes \(32 \times 32\) random-resize cropping, random horizontal flipping, and Cutout \citep{devries2017improved} (see \cite{Takahashi_2020} for details). Tiny ImageNet Comprises 100k training and 10k test images, we perform \(64 \times 64\) random-resize cropping and random horizontal flipping with a batch size of 256. Finally, ImageNet-1k With 1,281,167 training and 150k test images, we set a batch size of 256. Training employs random resized cropping to \(224 \times 244\) and random horizontal flipping, while testing uses a resize to \(256 \times 256\) followed by \(224 \times 224\) center cropping.
\paragraph{Transfer Learning Experiments.} For transfer learning, model weights are initialized using publicly available PyTorch checkpoints, except for the first convolutional layer of ResNet and the final dense layers, which are randomly initialized. Models are trained with a weight decay of \(1 \times 10^{-4}\) (or \(1 \times 10^{-5}\) for MobileNetV3). A grid search over learning rates in \(\{0.3, 0.15, 0.1, 0.03, 0.015, 0.01, \dots,\) \newline \( 1 \times 10^{-5}\}\) was performed, with the final selections provided in Table~\ref{learning_rate_pretrained}. A batch size of 256 and cosine learning rate decay were used. Final epoch counts for each optimizer are listed in Table~\ref{table:epoch_num_pretrained}.

\begin{table*}[!ht]
    \centering
    \caption{Final selected learning rates for each optimizer in the transfer learning task, tuned using ResNet50 on CIFAR-10 (batch size 256).}
    \scriptsize
    \setlength{\tabcolsep}{20pt}
    \begin{tabular}{c|c|c|c|c|c}
    SGD & Adam & AdaHessian & K-FAC & Shampoo & AdaFisher \\
    \midrule
    0.01 & 0.0001 & 0.15 & 0.3 & 0.03 & 0.001 
    \end{tabular}
    \label{learning_rate_pretrained}
\end{table*}

\begin{table*}[!ht]
    \centering
    \caption{Final epoch counts for various optimizers in the transfer learning task.}
    \scriptsize
    \setlength{\tabcolsep}{15pt}
    \begin{tabular}{c|c|c|c|c|c}
    SGD & Adam/AdamW & AdaHessian & K-FAC & Shampoo & AdaFisher/AdaFisherW\\
    \midrule
    58 & 55 & 22 & 27 & 18 & 50
    \end{tabular}
    \label{table:epoch_num_pretrained}
\end{table*}

\subsection{Language Modeling Experiments}

For language modeling, we utilize a streamlined GPT-1 architecture, which incorporates four self-attention layers, a reduction from the original twelve. This configuration retains core modeling capabilities while reducing complexity, encompassing a total of 28,351,488 learnable parameters. To expedite training, we employ pretrained embeddings from OpenAI's GPT, leveraging the benefits of parameter sharing for enhanced efficiency and faster convergence. The models were trained on WikiText-2 and PTB for 50 wall clock time epochs using AdaFisher. Final epoch counts for each optimizer are provided in Table~\ref{table:epoch_num_LM}. For AdamW, we follow the learning rate settings in \cite{elnokrashy2022depth}. For the other optimizers, a grid search over \(\{0.3, 0.15, 0.1, 0.05, 0.03, 0.015, 0.01, \dots, 1 \times 10^{-5}\}\) was conducted, and the selected values are summarized in Table~\ref{learning_rate_lm}. A batch size of 32 and weight decay of 0.1 were used. Notably, Shampoo failed to converge and K-FAC could not be trained on these tasks.

\begin{table*}[!ht]
    \centering
    \caption{Final epoch counts for various optimizers in the language modeling task.}
    \scriptsize
    \setlength{\tabcolsep}{33pt}
    \begin{tabular}{c|c|c|c}
    AdamW & AdaHessian & Shampoo & AdaFisherW\\
    \midrule
    55 & 18 & 12 & 50
    \end{tabular}
    \label{table:epoch_num_LM}
\end{table*}

\begin{table*}[!ht]
    \centering
    \caption{Final selected learning rates for each optimizer, tuned using GPT1 on WikiText-2 and PTB (batch size 32), based on final validation perplexity (PPL).}
    \scriptsize
    \setlength{\tabcolsep}{33pt}
    \begin{tabular}{c|c|c|c}
    AdamW & AdaHessian & Shampoo & AdaFisherW\\
    \midrule
    \(5 \times 10^{-5}\) & 0.015 & 0.003 & \(1 \times 10^{-4}\)
    \end{tabular}
    \label{learning_rate_lm}
\end{table*}
\section{AdaFisher for Image Classification Tasks}
We commence our analysis by assessing the convergence and generalization capabilities of various models on image classification tasks. Specifically, we deploy ResNet architectures (ResNetX where $X \in \{18, 50, 101\}$), DenseNet121~\citep{huang2017densely}, MobileNetV3 \citep{howard2019searching}, Tiny Swin~\citep{liu2021swin}, FocalNet~\citep{yang2022focal} and CCT-2/3$\times$2~\citep{hassani2021escaping} on CIFAR10 and CIFAR100, while utilizing standard ResNet50 for Tiny ImageNet and ImageNet-1k. The performance outcomes for CIFAR datasets are detailed in Table~\ref{table_cutout_vs_nocutout}. Our empirical evaluation of  AdaFisher optimizer across these models and datasets illustrates its efficiency in optimizing image classification, surpassing all SOTA optimizers. We employ the WCT method with a cutoff of 200 epochs for AdaFisher's training, except for ImageNet-1k, where we use a 90-epoch WCT for Adam, which surprisingly matched AdaFisher's training duration. Results confirm AdaFisher's superior classification accuracy on both CNNs and ViTs. The training losses and test errors for the CIFAR experiments, both with and without cutout, are visually represented in Figures~\ref{fig:results_cifar10_no_cutout}, \ref{fig:results_cifar100_no_cutout}, \ref{fig:results_cifar10_cutout}, and \ref{fig:results_cifar100_cutout} in Appendix~\ref{chap:results}. Furthermore, Table~\ref{TinyImageNet} displays the results for the Tiny ImageNet dataset using ResNet50 and Big Swin networks, with visualizations provided in Figure~\ref{fig:results_TinyImageNet}. AdaFisher and AdaFisherW consistently outperform current SOTA optimizers. Notably, Figure~\ref{fig:results_TinyImageNet} illustrates that although AdaFisher converges slower than K-FAC during ResNet50 training, it achieves superior generalization. This is evidenced by lower testing errors, suggesting that AdaFisher tends to converge to a flatter local minimum, enabling smoother transitions between training and testing datasets with minimal generalization loss. For further explanation, see \cite{cha2021swad}. Note that due to AdaHessian's high memory consumption, we were unable to train it on Big Swin.
\begin{table*}[!ht]
    \caption{Performance metrics (mean, std) of different networks and optimizers on CIFAR10 and CIFAR100 using batch size 256 (a) without Cutout and (b) with Cutout. Reported using WCT of 200 AdaFisher training epochs as the cutoff.}
    \label{table_cutout_vs_nocutout}
    \begin{subtable}{\textwidth}
    \noindent
    \setlength{\tabcolsep}{0.001pt}
    \caption{Without Cutout}
    \scriptsize
    \centering
    \begin{tabular}{c|c|c|c|c|c|c||c|c|c|c|c|c}
    &\multicolumn{6}{c||}{CIFAR10}&\multicolumn{6}{c}{CIFAR100}\\ \cline{2-13} Network&SGD&Adam&AdaHessian&K-FAC&Shampoo&AdaFisher&SGD&Adam&AdaHessian&K-FAC&Shampoo&AdaFisher\\
    \midrule ResNet18&$94.89_{0.1}$&$93.64_{ 0.1}$&$94.05_{0.1}$&$94.04_{0.2}$&$94.52_{0.1}$&$\mathbf{95.02_{ 0.1}}$&$76.42_{0.1}$&$72.71_{0.2}$&$73.64_{0.2}$&$74.79_{0.2}$&$76.53_{0.1}$ & $\mathbf{77.10_{0.2}}$\\ ResNet50&$95.07_{0.2}$&$93.89_{02}$&$94.26_{0.1}$&$94.25_{0.1}$&$94.92_{0.1}$&$\mathbf{95.42_{0.2}}$&$77.50_{0.2}$&$73.12_{0.7}$&$75.29_{0.3}$&$75.49_{0.2}$&$77.81_{0.2}$ & $\mathbf{78.91_{0.9}}$\\ ResNet101&$94.77_{0.1}$&$93.14_{0.1}$&$94.73_{0.9}$&$94.23_{0.1}$&$94.22_{0.1}$&$\mathbf{95.51_{0.1}}$&$78.76_{0.2}$&$73.23_{0.4}$&$72.19_{0.2}$&$75.46_{0.3}$&$78.82_{0.1}$ & $\mathbf{79.74_{0.3}}$\\ DenseNet121&$95.11_{0.1}$&$93.74_{0.2}$&$94.54_{0.1}$&$94.97_{0.1}$&$94.99_{0.1}$&$\mathbf{95.29_{0.1}}$&$78.61_{0.2}$&$75.38_{0.3}$&$72.54_{0.9}$&$77.09_{0.3}$&$78.70_{0.3}$ & $\mathbf{79.03_{0.2}}$\\
    MobileNetV3&$92.13_{0.2}$&$91.95_{0.1}$&$91.4_{3.1}$&$91.92_{0.1}$&$91.91_{0.2}$&$\mathbf{92.89_{0.1}}$&$73.81_{0.2}$&$65.64_{0.2}$&$60.78_{3.6}$&$69.87_{0.3}$&$68.01_{0.2}$ & $\mathbf{73.15_{0.2}}$\\
    \hline
    Tiny Swin&$80.08_{0.2}$&$87.47_{0.2}$&$78.34_{0.2}$&$66.84_{0.3}$&$68.44_{0.2}$&$\mathbf{89.08}_{0.1}$&$57.43_{0.3}$&$62.20_{0.2}$&$54.12_{0.3}$&$36.12_{0.3}$&$33.75_{0.3}$ & $\mathbf{66.47_{0.2}}$\\
    FocalNet&$80.87_{0.2}$&$85.65_{0.1}$&$71.03_{0.3}$&$42.92_{0.2}$&$41.49_{0.2}$&$\mathbf{86.92}_{0.1}$&$45.66_{0.3}$&$52.88_{0.3}$&$38.05_{0.3}$&$11.23_{0.3}$&$11.06_{0.3}$ & $\mathbf{52.9_{0.1}}$\\
    CCT-2/3$\times$2 & $73.12_{0.2}$&$83.95_{0.1}$ & $-$ & $34.63_{1.1}$ & $35.1_{0.8}$ & $\mathbf{84.63_{0.3}}$ & $52.12_{1.2}$& $60.14_{1.1}$ & $-$ & $8.06_{0.6}$ & $9.76_{0.3}$ &$\mathbf{60.63_{0.6}}$
    \end{tabular}\\
    $^*$Note that Adam and AdaFisher were used for all CNN architectures, while AdamW and AdaFisherW were applied for all ViT experiments.
    \end{subtable}
    \bigskip 
    \begin{subtable}{\textwidth}
    \noindent
    \centering
    \setlength{\tabcolsep}{0.001pt}
    \caption{With Cutout}
    \scriptsize
    \centering
    \begin{tabular}{c|c|c|c|c|c|c||c|c|c|c|c|c}
    &\multicolumn{6}{c||}{CIFAR10}&\multicolumn{6}{c}{CIFAR100}\\ \cline{2-13} Network&SGD&Adam&AdaHessian&K-FAC&Shampoo&AdaFisher&SGD&Adam&AdaHessian&K-FAC&Shampoo&AdaFisher\\
    \midrule ResNet18&$95.64_{0.1}$&$94.85_{ 0.1}$&$95.44_{0.1}$&$95.17_{0.2}$&$94.08_{0.2}$&$\mathbf{96.25_{ 0.2}}$&$76.56_{0.2}$ &$75.74_{0.1}$&$71.79_{0.2}$&$76.03_{0.3}$&$76.78_{0.2}$ & $\mathbf{77.28_{0.2}}$\\ ResNet50 &$95.71_{0.1}$&$94.45_{0.2}$&$95.54_{0.1}$&$95.66_{0.1}$&$94.59_{0.1}$&$\mathbf{96.34_{0.2}}$&$78.01_{0.1}$&$74.65_{0.5}$&$75.81_{0.3}$&$77.40_{0.4}$&$78.07_{0.4}$ & $\mathbf{79.77_{0.4}}$\\ ResNet101&$95.98_{0.2}$&$94.57_{0.1}$&$95.29_{0.6}$&$96.01_{0.1}$&$94.63_{0.1}$&$\mathbf{96.39_{0.1}}$&$78.89_{0.2}$&$75.56_{0.3}$&$73.38_{0.2}$&$77.01_{0.4}$&$78.83_{0.2}$ & $\mathbf{80.65_{0.4}}$\\ DenseNet121&$96.09_{0.1}$&$94.86_{0.1}$&$96.11_{0.1}$&$96.12_{0.1}$&$95.66_{0.1}$&$\mathbf{96.72_{0.1}}$&$80.13_{0.4}$&$75.87_{0.4}$&$74.80_{0.9}$&$79.79_{0.2}$&$80.24_{0.3}$ & $\mathbf{81.36_{0.3}}$\\
    MobileNetV3&$94.43_{0.2}$&$93.32_{0.1}$&$92.86_{3.1}$&$94.34_{0.1}$&$93.81_{0.2}$&$\mathbf{95.28_{0.1}}$&$73.89_{0.3}$&$70.62_{0.3}$&$56.58_{4.5}$&$73.75_{0.3}$&$70.85_{0.3}$ & $\mathbf{77.56_{0.1}}$\\
    \hline
    Tiny Swin&$82.34_{0.2}$&$87.37_{0.6}$&$84.15_{0.2}$&$64.79_{0.5}$&$63.91_{0.4}$&$\mathbf{88.74_{0.4}}$&$54.89_{0.4}$ &$60.21_{0.4}$&$56.86_{0.5}$&$34.45_{0.4}$&$30.39_{1.2}$ & $\mathbf{66.05_{0.5}}$\\
    FocalNet&$82.03_{0.2}$&$86.23_{0.1}$&$64.18_{0.2}$&$38.94_{0.8}$&$37.96_{0.7}$&$\mathbf{87.90}_{0.1}$&$47.76_{03}$&$52.71_{0.5}$&$32.33_{0.3}$&$9.98_{0.6}$&$9.18_{0.1}$ & $\mathbf{53.69_{0.3}}$\\
    CCT-2/3$\times$2&$78.76_{0.3}$&$83.89_{0.4}$&$-$&$33.08_{2.3}$&$35.16_{0.4}$&$\mathbf{84.94}_{0.3}$&$54.05_{0.4}$&$59.78_{0.5}$&$-$&$7.17_{0.2}$&$8.60_{0.1}$ & $\mathbf{62.91_{0.5}}$
    \end{tabular}\\
    $^*$Note that Adam and AdaFisher were used for all CNN architectures, while AdamW and AdaFisherW were applied for all ViT experiments.
    \end{subtable}
\end{table*}
\begin{table*}[!ht]
    \noindent
    \centering
    \setlength{\tabcolsep}{18pt}
    \caption{Performance of various networks and optimizers on Tiny ImageNet using batch size 256. Reported using wall clock time of 200 AdaFisher training epochs as the cutoff.}
    \scriptsize
    \centering
    \begin{tabular}{c|c|c|c|c|c}
    Network&Adam&AdaHessian&K-FAC&Shampoo&AdaFisher\\
    \midrule ResNet50&$53.06$&$50.21$&$50.05$&$53.53$&$\mathbf{57.41}$\\
    Big Swin&$48.11$&$-$&$8.89$&$4.11$&$\mathbf{48.86}$
    \label{TinyImageNet}
    \end{tabular}
\end{table*}
\begin{figure}[!h]
    \centering
    \includegraphics[width=\textwidth]{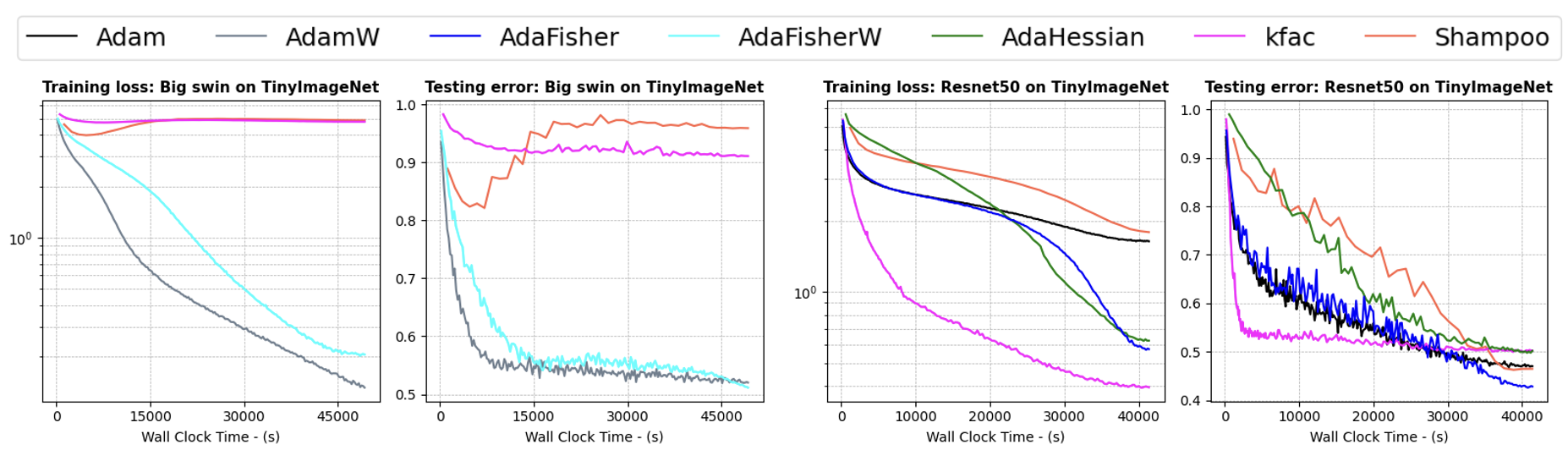}
    \caption{WCT training loss and testing error curves of several optimizers on Tiny ImageNet dataset, ResNet-50 and Big Swin with a batch size of 256. AdaFisher consistently achieves lower test error as compared to Adam, AdaHessian, K-FAC and Shampoo. The final accuracy results are reported in Table~\ref{TinyImageNet}.}
    \label{fig:results_TinyImageNet}
\end{figure}
\subsection{ImageNet-1k Training} 
Training on ImageNet-1k typically requires multiple GPUs and large batch sizes. Our study showcases that AdaFisher achieves superior validation accuracy on a single GPU than its counterparts in scenarios marked by the light blue region. This performance outstrips traditional approaches like SGD, LAMB \citep{You2020Large}, and LARS \citep{you2017large}, which typically utilize batch sizes of 16K. While AdaFisher attains SOTA results on a single GPU, it further excels when scaled up in a distributed setting with larger batch sizes. The results, benchmarked using 256 batch size and a WCT of 90 Adam training epochs, are detailed in Table~\ref{tab:imagenetresults} and illustrated in Figure~\ref{fig:result_ImageNet_TinyImageNet}. Distributed AdaFisher curves are illustrated in Figure~\ref{fig:results_imageNet_dist}. The light blue highlights in the table represent our experiments with a batch size of 256 on a single GPU. The light green indicates results from a distributed version of AdaFisher employing larger batch sizes, whereas the orange reflects results from SOTA methods using a higher batch size of 16K, SGD with a batch size of 256 and  AdamW with a batch size of 1024. It is important to note, however, that the training setups and augmentation techniques for the results highlighted in orange, taken from the literature, may differ from those in our study. These results are included to provide a broader context and intuition regarding AdaFisher’s performance compared to other experiments. Overall, by balancing curvature-aware updates with parameter efficiency, AdaFisher maintains strong generalization even under constrained computational budgets, a critical advantage over methods like K-FAC that struggle with over-parameterized models. Moreover, Figure~\ref{fig:results_imageNet_dist} illustrates the training and validation error of the distributed version of AdaFisher on ImageNet-1k across various batch sizes. AdaFisher not only outperforms its counterparts with smaller batch sizes (256), but it also continues to achieve superior generalization as batch sizes increase. Furthermore, these results reinforce the stability analysis concerning batch sizes presented in Section~\ref{sec:stabilityanalysis}, extending it to a more challenging dataset.
\begin{figure}[h]
    \begin{minipage}{0.5\textwidth}
        \captionof{table}{Validation of ImageNet-1k / ResNet50 by different optimizers reported on Top-1 and Top-5 accuracy.}
        \label{tab:imagenetresults}
        \scriptsize
        \centering
        \setlength{\tabcolsep}{4pt}
        \begin{tabular}{c|c|c|c}
        \toprule
        Optimizers & Batch size & Top-1 & Top-5 \\
        \midrule
        \rowcolor{LightCyan}
        Adam   &  $256$ & $67.78$ & $88.37$ \\
        \hline
        \rowcolor{LightCyan}
        K-FAC  & $256$ & $70.96$ & $89.44$  \\
        \hline
        \rowcolor{LightCyan}
        Shampoo & $256$ & $72.82$ & $91.42$ \\
        \hline
        \rowcolor{LightCyan}
        AdaFisher & $256$ & $\mathbf{76.95}$& $\mathbf{93.39}$\\
        \midrule
        \rowcolor{LightGreen}
        AdaFisher & $512$ & $\mathbf{77.01}$& $\mathbf{93.45}$\\
        \hline
        \rowcolor{LightGreen}
        AdaFisher & $1024$ & $\mathbf{77.09}$& $\mathbf{93.56}$\\
        \midrule
        \rowcolor{Orangelight}
        SGD \cite{goyal2017accurate}& $256$ & $76.40$ & -  \\
        \hline
        \rowcolor{Orangelight}
        AdamW \cite{chen2024symbolic} &$1024$&$76.34$ & -\\
        \hline
        \rowcolor{Orangelight}
        LAMB \cite{You2020Large}& $16K$ & $76.66$ & $93.22$ \\
        \hline
        \rowcolor{Orangelight}
        SGD \cite{You2020Large}& $16K$ & $75.20$ & -  \\
        \hline
        \rowcolor{Orangelight}
        LARS \cite{huo2021large}& $16K$ & $75.1$ & - \\
        \bottomrule
        \end{tabular}
    \end{minipage}
    \hfill
    \begin{minipage}{0.45\textwidth}
        \centering
        \includegraphics[width=\textwidth]{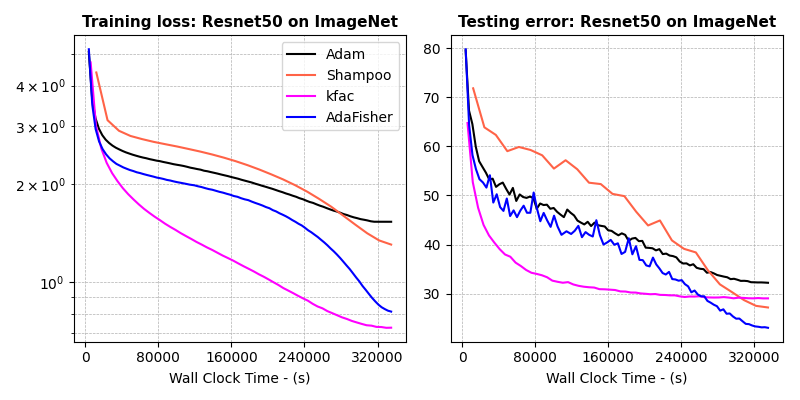}
        \captionof{figure}{Training loss and validation error of ResNet-50 on ImageNet-1k. AdaFisher consistently achieves lower test error as compared to its counterparts.}
        \label{fig:result_ImageNet_TinyImageNet}
    \end{minipage}
\end{figure}
\begin{figure}[!h]
    \centering
    \includegraphics[width=\textwidth]{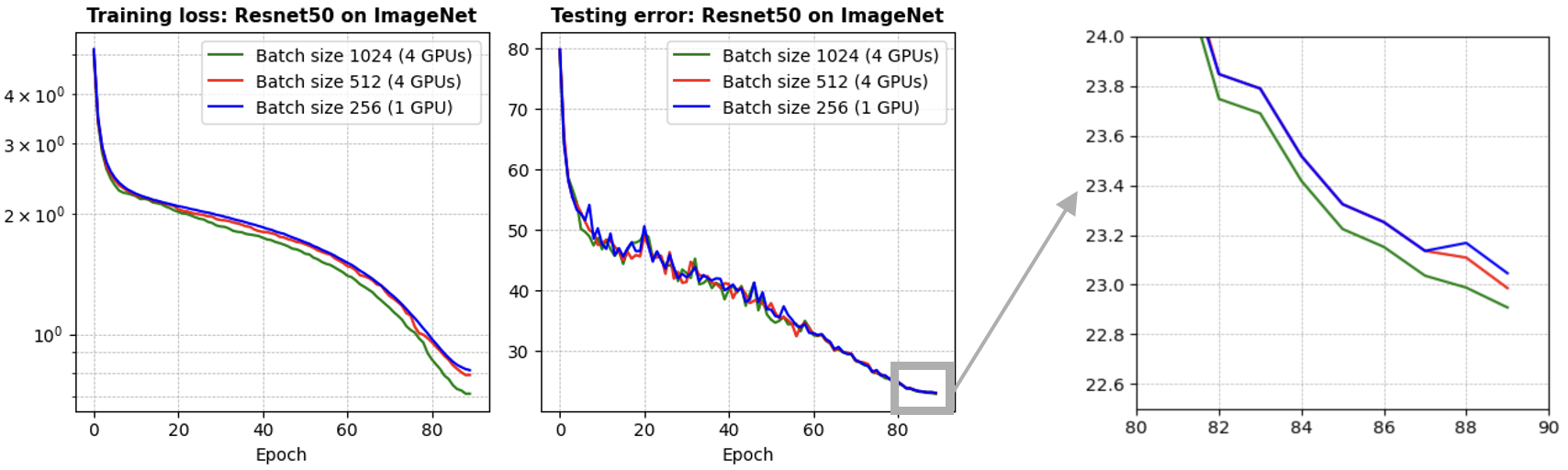}
    \caption{Performance of distributed AdaFisher using ResNet50 on ImageNet-1k with different batch sizes for 90 epochs. The final accuracy results are reported in Table~\ref{tab:imagenetresults}.}
    \label{fig:results_imageNet_dist}
\end{figure}
\subsection{Comparison with Other Relevant Methods}
In this section, we compare AdaFisher with six baseline optimizers for image classification: SGD, Adam/AdamW, AdaHessian, KFAC, and Shampoo. These baselines were selected because they either represent the current state of the art or utilize second-order gradients, making them suitable comparisons for evaluating second-order optimizers. However, other optimizers, such as AdaFactor \citet{shazeer2018adafactor} and EVA \citet{zhang2023eva}, are also relevant in this context. AdaFactor is an enhanced Adam memory-efficient optimizer that approximates second-order moments using row and column factorizations, reducing memory consumption for large-scale models. EVA is a second-order optimizer designed to leverage the FIM with efficient matrix inversion techniques. Therefore, we experimentally compare AdaFisher against the optimizer baselines, including Eva and AdaFactor. Regarding the HPs for EVA, we used the optimal values reported in its original paper and trained the model for 119 epochs using the WCT technique. For AdaFactor, we fine-tuned the learning rate as described in Section~\ref{sec:HPdetailed}, identifying $0.001$ as the optimal value, and trained the model for 216 epochs. Figure~\ref{fig:eva_adafactor} illustrates the performance comparison on two distinct models: ResNet-18 with CIFAR-100 and MobileNetV3 with CIFAR10. The same data augmentation techniques were applied across all experiments, as detailed in Section~\ref{sec:HPdetailed}. The best test accuracies achieved are summarized in Table~\ref{tab:ada_eva_comp}. AdaFisher demonstrates superior performance compared to the new optimizer baselines, outperforming both EVA and AdaFactor.
\begin{table*}[!ht]
    \noindent
    \centering
    \setlength{\tabcolsep}{8pt}
    \caption{Performance comparison of AdaFisher and other optimizers using ResNet-18 (CIFAR100) and MobileNet-V3 (CIFAR10). Reported using WCT of 200 AdaFisher training epochs as the cutoff. }
    \scriptsize
    \centering
    \begin{tabular}{c|c|c|c|c|c|c|c|c}
    
    Network|Optimizer&SGD&Adam&AdaFactor&AdaHessian&K-FAC&Eva&Shampoo&AdaFisher\\
    \midrule MobileNet-V3&$94.43$&$93.32$&$93.21$&$92.86$&$94.34$&$94.41$&$93.81$&$\mathbf{95.28}$\\
     ResNet-18& $76.56$&$75.74$&$69.45$&$71.79$&$76.03$&$76.69$&$76.78$&$\mathbf{77.28}$ 
     
    \label{tab:ada_eva_comp}
    \end{tabular}
\end{table*}
\begin{figure}[!h]
    \centering
    \includegraphics[width=\textwidth]{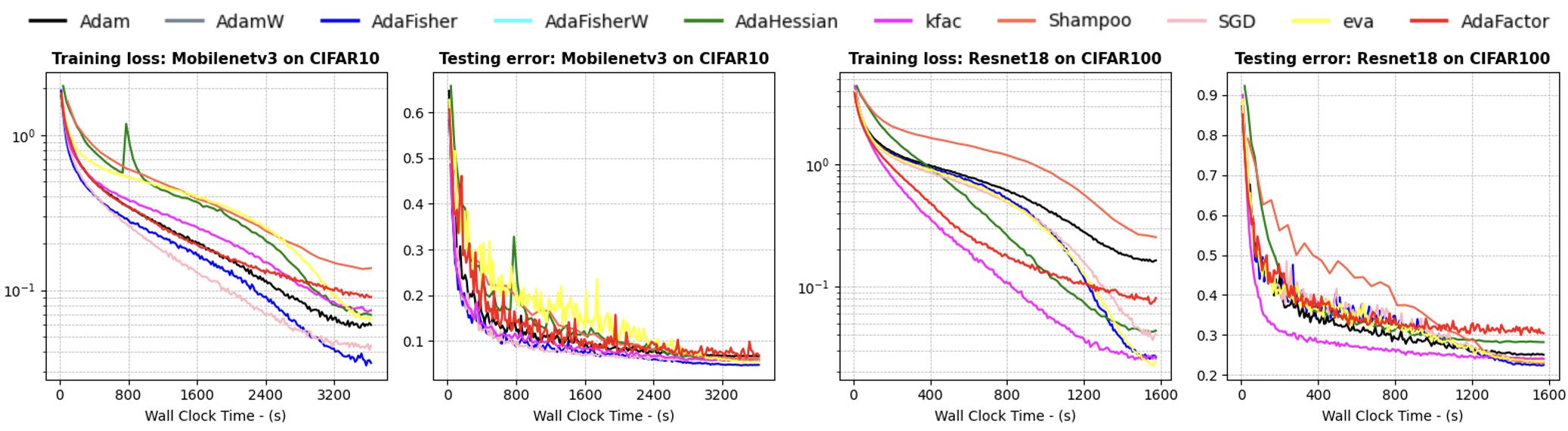}
    \caption{WCT training loss, test error, for ResNet-18 on CIFAR100 and MobileNet-V3 on CIFAR10. A batch size of 256 was used. The final accuracy and training time results are summarized in Table~\ref{tab:ada_eva_comp}.}
    \label{fig:eva_adafactor}
\end{figure}
\begin{table*}[!ht]
    
    \caption{Performance comparison of AdaFisher and other optimizers using (a) ResNet-18 and (b) MobileNet-V3 on CIFAR100 for 200 epochs.}
    \label{comp_epochs}
    \begin{subtable}{\textwidth}
    \noindent
    \setlength{\tabcolsep}{7pt}
    \caption{ResNet-18}
    \scriptsize
    \centering
    \begin{tabular}{c|c|c|c|c|c|c|c|c}
    Optimizer&SGD&Adam&AdaFactor&AdaHessian&K-FAC&Eva&Shampoo&AdaFisher\\
    \midrule Test Acc&$76.52$&$75.71$&$69.78$&$76.86$&$76.96$&$77.08$&$\mathbf{77.35}$&$77.28$\\
    Training Time (min) & $20.03$&$23.33$&$21.67$&$96.67$&$46.46$&$43.18$&$216.67$&$26.58$ 
    \end{tabular}
    \end{subtable}
    \bigskip 
    \begin{subtable}{\textwidth}
    \noindent
    \centering
    \setlength{\tabcolsep}{7pt}
    \caption{MobileNet-V3}
    \scriptsize
    \centering
    \begin{tabular}{c|c|c|c|c|c|c|c|c}
    Optimizer&SGD&Adam&AdaFactor&AdaHessian&K-FAC&Eva&Shampoo&AdaFisher\\
    \midrule Test Acc&$73.42$&$70.53$&$71.08$&$62.36$&$75.16$&$75.48$&$70.65$&$\mathbf{77.56}$\\
    Training Time (min) & $50.03$&$56.63$&$54.22$&$206.28$&$116.86$&$96.78$&$487.21$&$60.12$ 
    \end{tabular}
    \end{subtable}
    \bigskip 
    \begin{subtable}{\textwidth}
    \noindent
    \centering
    \setlength{\tabcolsep}{7pt}
    \caption{ResNet-50}
    \scriptsize
    \centering
    \begin{tabular}{c|c|c|c|c|c|c|c|c}
    Optimizer&SGD&Adam&AdaFactor&AdaHessian&K-FAC&Eva&Shampoo&AdaFisher\\
    \midrule Test Acc&$76.12$&$73.03$&$70.78$&$76.18$&$77.66$&$78.01$&$78.89$&$\mathbf{78.91}$\\
    Training Time (min) & $70.13$&$76.67$&$73.32$&$502.28$&$149.36$&$138.58$&$583.11$&$83.02$
    \end{tabular}
    \end{subtable}
\end{table*}
\subsection{Comparison with Consistent Epoch Counts}
We evaluated AdaFisher and its counterparts, including two prominent optimizers, Eva and Adafactor, over 200 epochs on ResNet-18, ResNet-50 and MobileNet-V3 using the CIFAR100 dataset. Figure~\ref{fig:comp_epochs} illustrates the training loss and test error trends over epochs, along with the best test error achieved as a function of training time per epoch for all optimizers across both models. Table~\ref{comp_epochs} summarizes the highest test accuracy and total training time for each method on both network architectures. Notably, while Shampoo achieved marginally better test accuracy than AdaFisher on ResNet-18, it required approximately eight times longer training time. Conversely, AdaFisher outperformed all baseline optimizers, including Shampoo, in the MobileNet-V3 and ResNet-50 experiments, achieving superior test accuracy while maintaining high efficiency comparable to first-order optimizers.
\begin{figure}[!h]
    \centering
    \includegraphics[width=\textwidth]{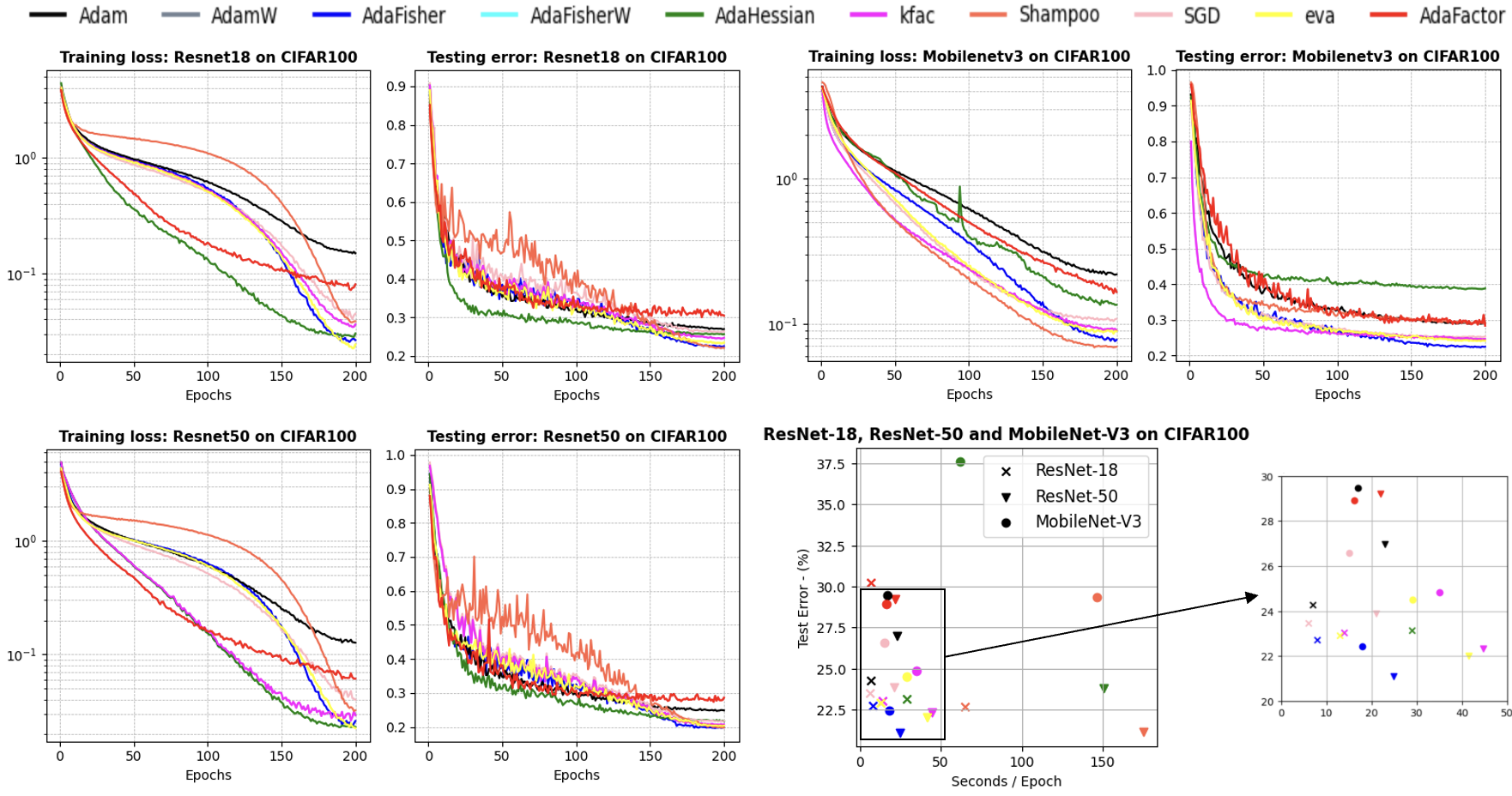}
    \caption{Performance comparison of AdaFisher and other well-finetuned optimizers at their best performances using ResNet-18 and MobileNet-V3 on CIFAR-100 for 200 epochs. A batch size of 256 was used. The final accuracy and training time results are summarized in Table~\ref{comp_epochs}.}
    \label{fig:comp_epochs}
\end{figure}
\section{AdaFisher for Transfer Learning}
Following a more sustainable practice of training DNNs, we employ pretrained models from ImageNet-1k in PyTorch on datasets like CIFAR10 and CIFAR100 to showcase AdaFisher's generalization capability for transfer learning. We applied these pretrained weights across various CNN architectures to train on these datasets. The results, presented in Table~\ref{pretrained_cifar} and Figure~\ref{fig:results_pretrained}, highlight the significant advantages of using AdaFisher, consistently achieving top accuracy across both datasets. 
\begin{table*}[!ht]
    \centering
    \caption{Performance comparison of different networks and optimizers on CIFAR10 and CIFAR100 using ImageNet-1k pretrained weights. Evaluation is based on wall clock time of 50 training epochs with AdaFisher.}
    \scriptsize
    \setlength{\tabcolsep}{0.001pt}
    \begin{tabular}{c|c|c|c|c|c|c||c|c|c|c|c|c}
    &\multicolumn{6}{c||}{CIFAR10}&\multicolumn{6}{c}{CIFAR100}\\ \cline{2-13} Network&SGD&Adam&AdaHessian&K-FAC&Shampoo&AdaFisher&SGD&Adam&AdaHessian&K-FAC&Shampoo&AdaFisher\\
    \midrule
    ResNet50 &$96.50_{0.2}$& $96.45_{0.2}$ & $96.35_{0.3}$ & $96.45_{0.1}$ & $96.03_{0.4}$ & $\mathbf{97.13_{0.2}}$ &$82.12_{0.1}$& $82.01_{0.4}$ & $80.64_{0.9}$ & $80.55_{0.4}$ & $81.70_{0.2}$ & $\mathbf{82.23_{0.2}}$ \\
    ResNet101 &$97.07_{0.2}$& $96.70_{0.1}$ & $96.65_{0.2}$ & $96.84_{0.1}$ & $96.63_{0.1}$ & $\mathbf{97.22_{0.1}}$ &$84.01_{0.1}$& $82.43_{0.2}$ & $81.36_{0.8}$ & $82.26_{0.3}$ & $82.65_{0.2}$ & $\mathbf{84.47_{0.2}}$ \\
    DenseNet121 & $94.80_{0.1}$&$94.77_{0.1}$ & $93.08_{0.1}$ & $94.41_{0.2}$ & $94.76_{0.1}$ & $\mathbf{95.03_{0.1}}$  & $75.98_{0.2}$& $75.65_{0.3}$ & $71.06_{0.9}$ & $76.10_{0.3}$ & $76.08_{0.2}$ & $\mathbf{76.92_{0.3}}$ \\
    MobileNetV3 & $91.76_{0.3}$& $90.92_{0.3}$ & $86.45_{2.5}$ & $91.72_{0.2}$ & $91.39_{0.3}$ & $\mathbf{92.78_{0.2}}$  & $71.86_{0.4}$& $66.11_{0.8}$ & $59.69_{2.3}$ & $69.85_{0.4}$ & $68.87_{0.3}$ & $\mathbf{72.38_{0.4}}$ 
    \label{pretrained_cifar}
    \end{tabular}
\end{table*}
\section{AdaFisher for Natural Language Modeling}
\begin{wraptable}[8]{r}{0.3\textwidth}
\vskip -0.16in
  \caption{Language Modeling performance (PPL) on Wikitext-2 and PTB test dataset (lower is better).}
  \label{tab:wikitext2}
  \centering
  \scriptsize
    \setlength{\tabcolsep}{1pt}
      \begin{tabular*}{\linewidth}{@{\extracolsep{\fill}}lcc}
    \toprule
    Optimizer & \multicolumn{2}{c}{Test PPL} \\
    \cmidrule{2-3}
    & WikiText-2 & PTB \\
    \midrule
    AdamW     & $175.06$ & $44.70$ \\  
    AdaHessian & $407.69$ & $59.43$ \\ 
    Shampoo   & $1727.75$ & $-$ \\ 
    \midrule
    AdaFisherW  & $\mathbf{152.72}$ & $\mathbf{41.15}$ \\ 
    \bottomrule
\end{tabular*}
\end{wraptable}
We employ the WikiText-2 dataset, which encompasses approximately 100 million tokens derived from over 2 million words extracted from a curated set of ``Good'' and ``Featured'' articles on Wikipedia. Additionally, we utilize the PTB dataset, renowned for its extensive collection of English words with part-of-speech tags, which has been widely used in NLP tasks for training and benchmarking language models. Our experiments utilize a scaled-down version of GPT-1 \citep{radford2019language}, featuring four self-attention layers with masking capabilities with more than 28 million learnable parameters. More details about tuning HPs and models can be found in Section~\ref{sec:HPdetailed}. The perplexity (PPL) on the test set, corresponding to the best-performing model during validation, is documented in Table~\ref{tab:wikitext2} and Figure~\ref{fig:stability_cifar100}. Similar to approaches in image classification, we apply the WCT method with 50 epochs training time of AdaFisher as the cutoff period. Notice that Shampoo did not achieve convergence despite using optimal HPs, and the K-FAC was unable to train with ASDL library \citep{osawa2023asdl}.
\begin{figure}[!h]
    \centering
    \includegraphics[width=\textwidth]{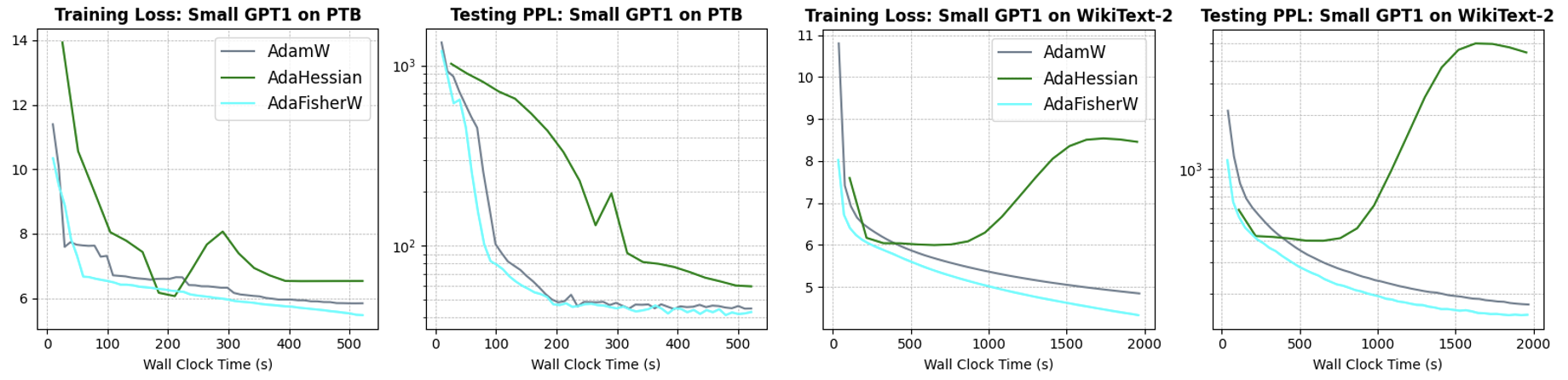}
    \caption{Training Loss and Test Perplexity of Small GPT-1 Model on WikiText-2 and PTB Datasets. Experiments were conducted using a batch size of 32 and optimal settings for all optimizers.}
    \label{fig:lm_results}
\end{figure}
\section{Concluding Remarks}
\label{sec:concluding_experiments}

In this chapter, we have presented a comprehensive experimental evaluation of AdaFisher across a wide range of tasks in both computer vision and natural language processing, covering image classification (CIFAR-10, CIFAR-100, Tiny ImageNet, and ImageNet-1k) and language modeling (WikiText-2 and PTB). Our results consistently highlight AdaFisher’s effectiveness in comparison to several baseline optimizers. One important observation is that AdaFisher achieves high accuracy across diverse model architectures and datasets. When trained on CIFAR-10 and CIFAR-100 using convolutional neural networks (ResNet-18, ResNet-50, ResNet-101, DenseNet121, MobileNetV3) and vision transformers (Tiny Swin, FocalNet, CCT-2/3$\times$2), AdaFisher and its weight-decay variant outperform both classical approaches like SGD and Adam, as well as second-order methods such as K-FAC, Shampoo, and AdaHessian. This pattern persists on Tiny ImageNet and ImageNet-1k, where AdaFisher demonstrates not only higher final accuracy but also robust convergence. Despite being a curvature-aware method, AdaFisher exhibits training times that remain comparable to first-order optimizers, especially when contrasted with other second-order techniques.

On practical tasks where methods like K-FAC or Shampoo can become prohibitively expensive, AdaFisher balances curvature estimation with computational efficiency, allowing it to scale well on multi-GPU setups and handle large batch sizes while retaining strong generalization. Another notable feature of AdaFisher is its tendency to locate flatter minima, as evidenced by superior test accuracy and smaller generalization gaps. 

The transfer learning experiments, which fine-tuned ImageNet-1k pretrained networks (ResNet-50, ResNet-101, DenseNet121, MobileNetV3) on smaller datasets (CIFAR-10, CIFAR-100), further highlight this characteristic by effectively leveraging partially pretrained weights and refined Fisher Information Matrix updates. 

In the language modeling domain, our scaled-down GPT-1 experiments on WikiText-2 and PTB show that AdaFisherW achieves significantly reduced perplexity, surpassing second-order baselines that struggled to converge. Extending our comparisons, we also evaluated AdaFisher against recent optimizers such as EVA and AdaFactor, and AdaFisher maintains superior or on-par performance, confirming that approximate curvature updates can be integrated without severely impacting memory or computational overhead. Altogether, our experiments confirm that AdaFisher outperforms established first- and second-order optimizers while remaining practical. 

By incorporating a refined form of the Fisher Information Matrix into an adaptive framework, AdaFisher stands out as a strong choice for training large-scale neural networks in both vision and language tasks. Potential future directions include deeper exploration of curvature approximation techniques in relation to flat minima, expanded distributed implementations for extremely large models, and integration with advanced data augmentation or regularization strategies. These results highlight AdaFisher’s capacity to leverage curvature information while preserving computational viability, establishing it as a strong contender for next-generation second-order optimization in deep learning.
\chapter{Dissecting Performance: Ablative and Stability Analysis of AdaFisher} \label{chap:ablativestudies}
In this chapter, we systematically examine the inner workings and robustness of the AdaFisher optimizer. Here, we investigate the impact of several key components that distinguish AdaFisher from its predecessors. Our ablation studies evaluate the effect of different learning rate schedulers, convergence efficiency, the role of removing the square root transformation in the update rule, the use of an EMA for KFs, and the integration of FIM computation in normalization layers. By scrutinizing these elements individually, we aim to elucidate how each contributes to AdaFisher’s superior convergence behavior and generalization performance, all while maintaining efficiency across diverse training environments.
\section{Ablation Studies} \label{sec:Ablationstudies}
This section explores the ablation study of AdaFisher to investigate the impact of various learning rate schedulers and convergence efficiency, as discussed in Section~\ref{sec:lrschedulersCE}. Additionally, we conduct an in-depth examination of the key components of AdaFisher. This includes analyzing the effects of the EMA, the use of square root transformations, our novel approximation of the FIM, and the critical role of computing the FIM for normalization layers, all of which are detailed in Section~\ref{sec:componentsAdaFisher}.
\subsection{Evaluating Stability Across Learning Rate Schedulers, and Assessing Convergence Efficiency} \label{sec:lrschedulersCE}
\paragraph{Learning rate schedulers.} This analysis evaluates the impact of different learning rate schedulers--Cosine Annealing, StepLR, and no scheduler—on the performance of AdaFisher, as depicted in Figure~\ref{fig:stability_schedulers_squareroot}. AdaFisher exhibits remarkable robustness across these scheduling strategies. Notably, its performance remains stable and efficient, whether it is paired with the gradual adjustments of Cosine Annealing, the abrupt changes of StepLR, or even in the absence of any scheduler. This underscores AdaFisher’s adaptability and effectiveness in diverse training environments.
\paragraph{Convergence Efficiency.} As training progresses, AdaFisher optimizer demonstrates a significant enhancement in performance compared to its counterparts, especially evident towards the end of the training period (see Appendix~\ref{chap:results}). This rapid convergence is attributed to AdaFisher's approach by incorporating the FIM. Early and mid-training, the FIM serves as an approximation to the Hessian matrix, equivalent to the Generalized Gauss Newton Matrix~\citep{eschenhagen2024kronecker}. However, as the model approaches a local minimum, the FIM increasingly aligns precisely with the Hessian~\citep{martens2020new}. This precise alignment accelerates convergence, markedly improving the optimizer's efficiency in the final phases of training. Additionally, AdaFisher’s tendency to converge to flat local minima leads to more stable generalization when transitioning from training to testing distributions~\citep{cha2021swad}, contrasting sharply with other optimizers. To support these points, we analyze the training distribution of our diagonal block-Kronecker FIM during the training of ResNet18 on CIFAR10. Specifically, we examine the FIM distribution for the first (Panel A), middle (Panel B) convolutional layers and the last linear layer (Panel C), as shown in Figure~\ref{fig:hist_fisher} In Section~\ref{sec:distributedImplementation}. It is evident that for each layer, the FIM distribution with AdaFisher narrows to smaller values with fewer variations compared to that with Adam. This pattern demonstrates AdaFisher’s convergence toward flatter local minima, as the Fisher Information, an approximation of the Hessian, contains crucial curvature information. 
\begin{figure}[!h]
    \centering
    \includegraphics[width=\textwidth]{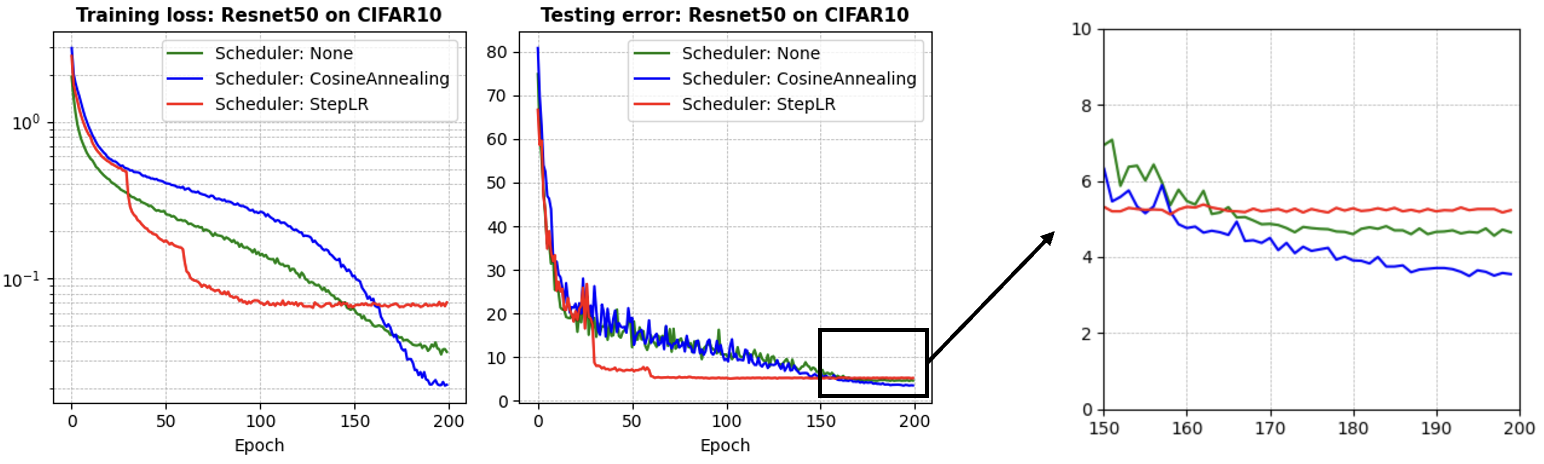}
    \caption{Performance comparison of AdaFisher using the ResNet50 on the CIFAR10 with a batch size of 256 with different learning rate schedulers.}
    \label{fig:stability_schedulers_squareroot}
\end{figure}
\begin{figure}[!h]
    \centering
    \includegraphics[width=\textwidth]{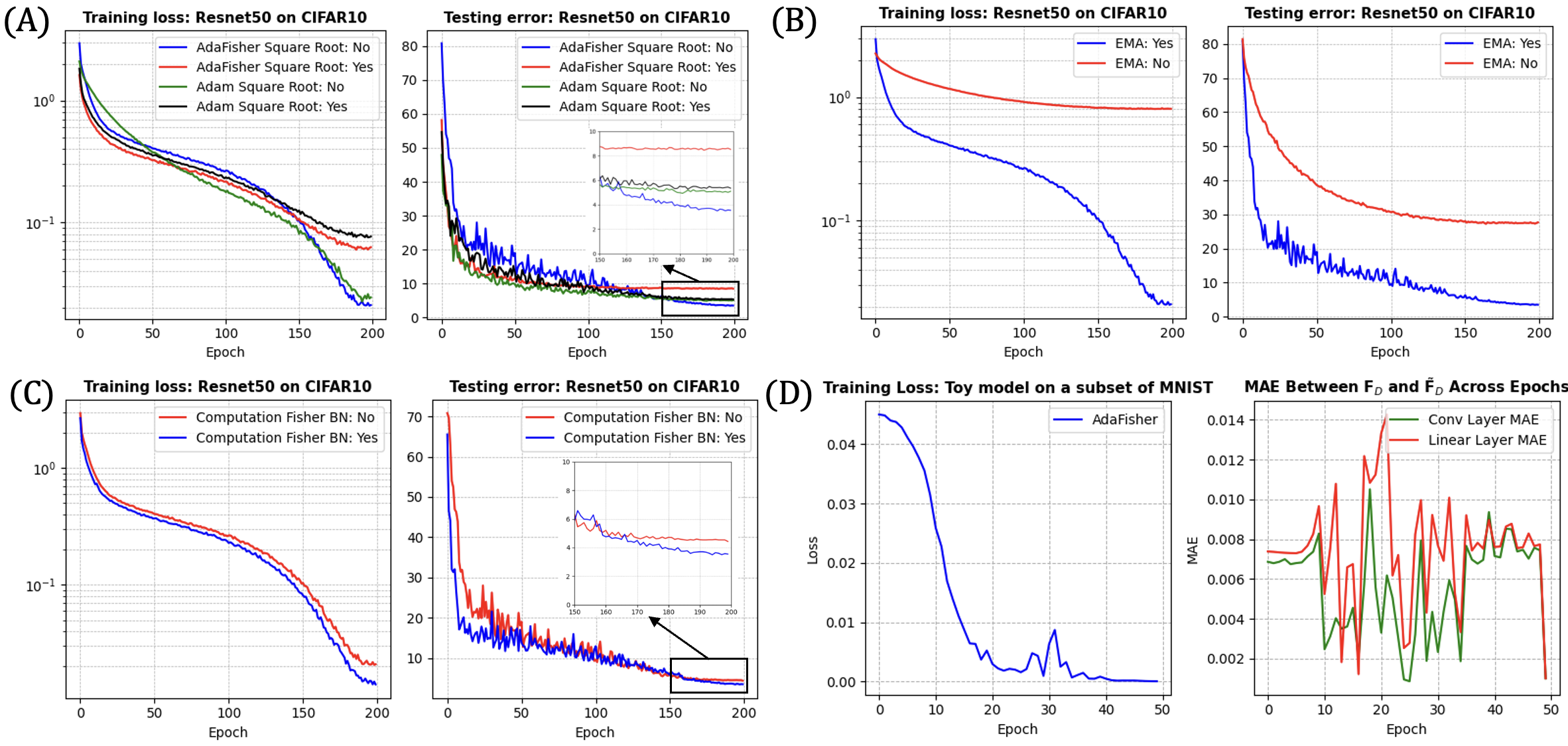}
    \caption{AdaFisher Component Analysis. (A)  Comparison of MAE between the true FIM $F_D$ and our approximation $\tilde{F}_D$ across convolutional and dense layers. (B) Performance comparison of AdaFisher with and without the EMA of KFs. (C) Assessment of AdaFisher's performance with and without the computation of EFIM for Batch Normalization (BN) layers.}
    \label{fig:AdaFisher_elements}
\end{figure}
\subsection{Component Analysis: Evaluating the Significance of AdaFisher's Elements} \label{sec:componentsAdaFisher}
AdaFisher incorporates several key components, including a novel approximation of the FIM, the EMA of the KFs, the omission of the square root in the update rule, and a new EFIM formula for normalization layers. In this part, we elucidate each component and its significance within the AdaFisher optimizer.
\paragraph{Square Root Utilization.} Recent studies, such as \citep{lin2024can}, have reevaluated the necessity of the square root operation in the Adam family's update rules. These studies suggest that eliminating the square root does not affect convergence and may even narrow the generalization gap compared to SGD in CNN models. Our analysis, shown in panel (A) of Figure~\ref{fig:AdaFisher_elements}, investigates this aspect by comparing the performance of AdaFisher and Adam, both with and without the square root operation. The findings reveal that removing the square root not only boosts the performance and stability of both optimizers but also significantly enhances computational efficiency. Specifically, AdaFisher without the square root not only outperforms the version with the square root but also surpasses Adam without the square root. However, Adam without the square root typically requires an additional \textbf{scaling factor} proportional to the batch size, denoted as $f \propto \text{batch size}$, to function correctly. Without this factor, Adam, without the square root, fails to learn effectively, making direct comparisons with AdaFisher invalid.
\paragraph{EMA of KFs.} As elucidated in Section~\ref{sec:effcomputFIM}, employing an EMA over the KFs facilitates a more sophisticated curvature estimation. This technique leverages data across multiple mini-batches, enabling continuous updates to the Fisher information rather than relying solely on the data from a single batch. Panel (B) of Figure~\ref{fig:AdaFisher_elements} underscores, using ResNet-50 on CIFAR10 over 200 epochs, the benefits of using EMA on KFs, a strategy particularly advantageous in methods that utilize diagonal or block-diagonal approximations of the curvature matrix.
\paragraph{Importance of Fisher Computation for Normalization Layers.} The integration of the EFIM in normalization layers, as detailed in Proposition~\ref{prop:proposition_normalization}, significantly enhances the generalization process. Panel (C) of Figure~\ref{fig:AdaFisher_elements} illustrates the impact of incorporating Fisher computation in these layers during the training of AdaFisher with ResNet-50 on CIFAR10 over 200 epochs. In contrast, the identity matrix is employed when Fisher's computation is omitted. The superior performance of AdaFisher when incorporating Fisher computation can be attributed to the critical role normalization layers play in adjusting the input distribution for each mini-batch. This adjustment substantially enhances the NN's learning stability \citep{jiang2024pre}. By quantifying the information each output \( \mathbf{y} \) carries about the parameters \( \boldsymbol{\theta} \) under the model distribution \( p(\mathbf{y}|\mathbf{x}; \boldsymbol{\theta}) \), the computation of the FIM in these layers provides valuable insights into parameter sensitivity and gradient variability. This insight is crucial for optimizing training dynamics and enhancing model convergence—areas that are often inadequately addressed by existing optimizers.
\paragraph{New Approximation of the FIM.}
In Proposition~\ref{prop:proposition1}, we introduce a new methodology for approximating the FIM that diverges from the K-FAC optimizer. Unlike K-FAC, which utilizes the full Kronecker product, our approach focuses solely on the diagonal elements of the FIM, where, as demonstrated in Section~\ref{sec:kroneckerdetail}, the energy of the KFs is predominantly concentrated. This method enables a more efficient computation of the FIM without sacrificing critical information. To validate our approach, we compare the true FIM diagonal with our approximation in convolutional and dense layers using a toy model composed of 2 convolutional layers and two linear layers on a subset of the MNIST dataset \citep{6296535} over 50 epochs. Specifically, we calculate the true Fisher using the NNgeometry Python package \citep{george_nngeometry}, which facilitates the computation of the FIM, Gauss-Newton Matrix, or NTK applied to neural networks. We estimate $p(\mathbf{y}|\mathbf{x})$ through Monte-Carlo sampling. During each epoch, we collected both the empirical and true Fisher information and calculated the Mean Absolute Error (MAE) between these two measures. Panel (D) of Figure~\ref{fig:AdaFisher_elements} showcases the close approximation of AdaFisher's empirical diagonal to the true Fisher, thus validating the efficacy of our approximation method.
\begin{figure}[!h]
    \centering
    \includegraphics[width=\textwidth]{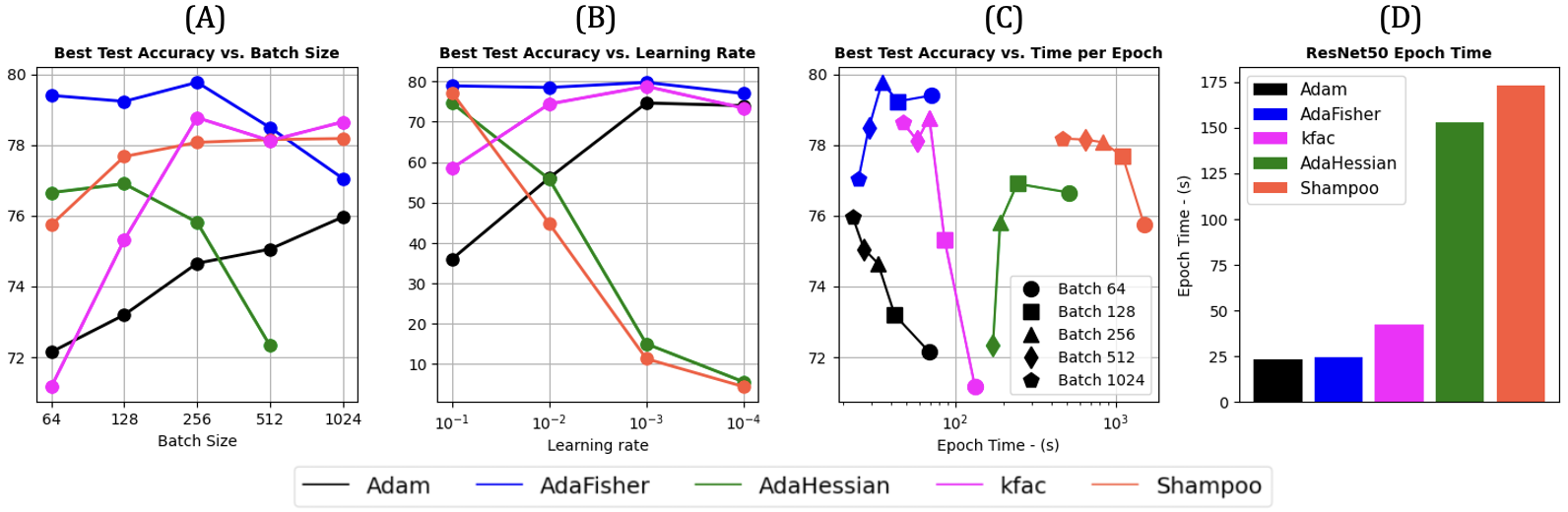}
    \caption{Performance comparison of AdaFisher and other optimizers using the ResNet50 network on the CIFAR100 dataset. (A) Test accuracy by batch size. (B) Accuracy vs. learning rates. (C) Accuracy related to epoch time across batch sizes. (D) Epoch time for different optimizers with a batch size of 256.}
    \label{fig:stability_cifar100}
\end{figure}
\section{Stability Analysis} \label{sec:stabilityanalysis}
\subsection{HP Sensitivity Analysis }
In this section, we assess AdaFisher's stability under varying learning rates and batch sizes using ResNet50 on CIFAR100 and compare its performance to other optimizers. Improved stability indicates a reduced need for HP tuning while maintaining high performance. To ensure a fair comparison, all methods were evaluated using a consistent experimental setup, with parameters tailored to each optimizer's strengths. However, we exclude AdaHessian results for a batch size of 1024 due to its significant computation cost.

\textbf{Batch Size Analysis.} We examine the impact of batch size on AdaFisher’s performance, as shown in Panels (A) and (C) of Figure~\ref{fig:stability_cifar100}. AdaFisher maintains high test accuracy across various batch sizes, excelling particularly at smaller sizes despite some sensitivity to larger ones. Panel (C) highlights AdaFisher’s efficiency, achieving high accuracy with shorter epoch times compared to Adam, detailed further in Panel (D), where AdaFisher shows competitive epoch durations against other optimizers. These results, discussed in Section~\ref{sec:complexitycost}, underscore AdaFisher’s effective performance across batch size variations without adjusting other HPs.

\textbf{Learning Rate Stability.} This analysis evaluates the impact of learning rate variations on AdaFisher's performance, as depicted in Panel (B) of Figure~\ref{fig:stability_cifar100}. AdaFisher demonstrates superior stability, particularly at lower learning rates, maintaining consistent performance across a broad spectrum. This stability alleviates the need for meticulous learning rate adjustments, thereby streamlining model training in various computational environments. Additionally, AdaFisher's stability across various learning rates can be attributed to its effective approximation of the curvature matrix.

\subsection{Comparison of Training Speed and Memory Utilization}
\label{sec:complexitycost}
As discussed in Section~\ref{sec:stabilityanalysis}, AdaFisher emerges as a balanced trade-off between time complexity and performance. Similarly, its memory footprint is comparable to that of Adam, showcasing efficient VRAM utilization. We extend our stability analysis to the CIFAR10 dataset to provide a dataset-independent evaluation of performance metrics, as depicted in panel (A) of Figure~\ref{fig:stability_cifar10}. Additionally, we analyze the memory usage for different batch sizes using the ResNet-50 model on the CIFAR-10/100, presented in panel (B) of Figure~\ref{fig:stability_cifar10}. The analysis reveals that AdaFisher while maintaining high accuracy levels, uses memory comparably to Adam, especially evident in larger batch sizes. This suggests that AdaFisher can achieve competitive performance without excessive VRAM consumption, making it an optimal choice for scenarios with memory constraints.
\begin{figure}[!h]
    \centering
    \includegraphics[width=\textwidth]{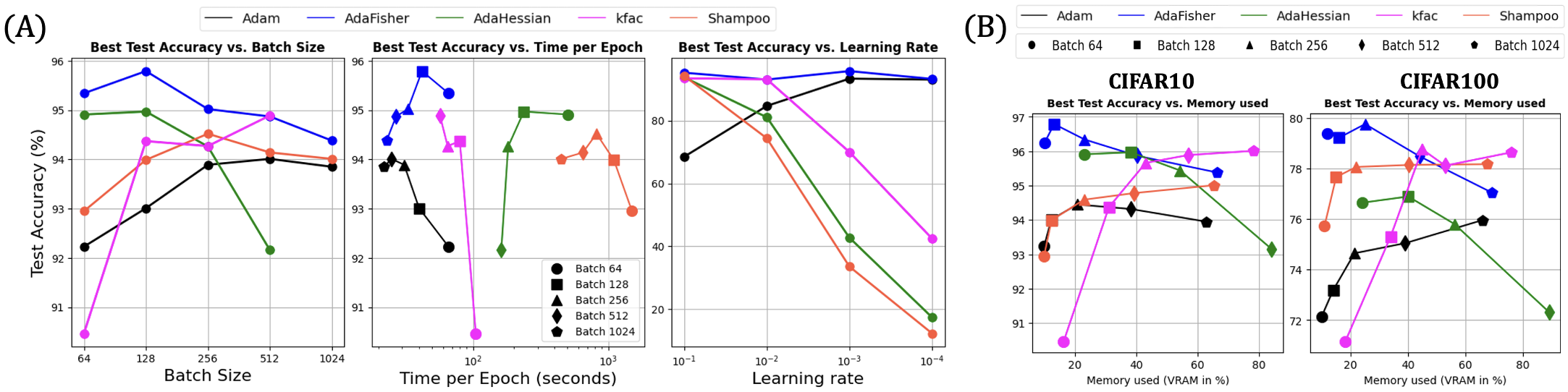}
    \caption{(A) Performance comparison of AdaFisher and other optimizers across various batch sizes, epoch times and learning rates (with a batch size of 256), evaluated using the ResNet50 on the CIFAR-10. (B) Performance comparison of AdaFisher and other optimizers regarding the memory used, assessed using ResNet50 and CIFAR10/100 across different batch sizes. This figure highlights how AdaFisher competes closely with Adam in terms of memory efficiency and performance.}
    \label{fig:stability_cifar10}
\end{figure}
\begin{figure}[!h]
    \centering
    \includegraphics[width=\textwidth]{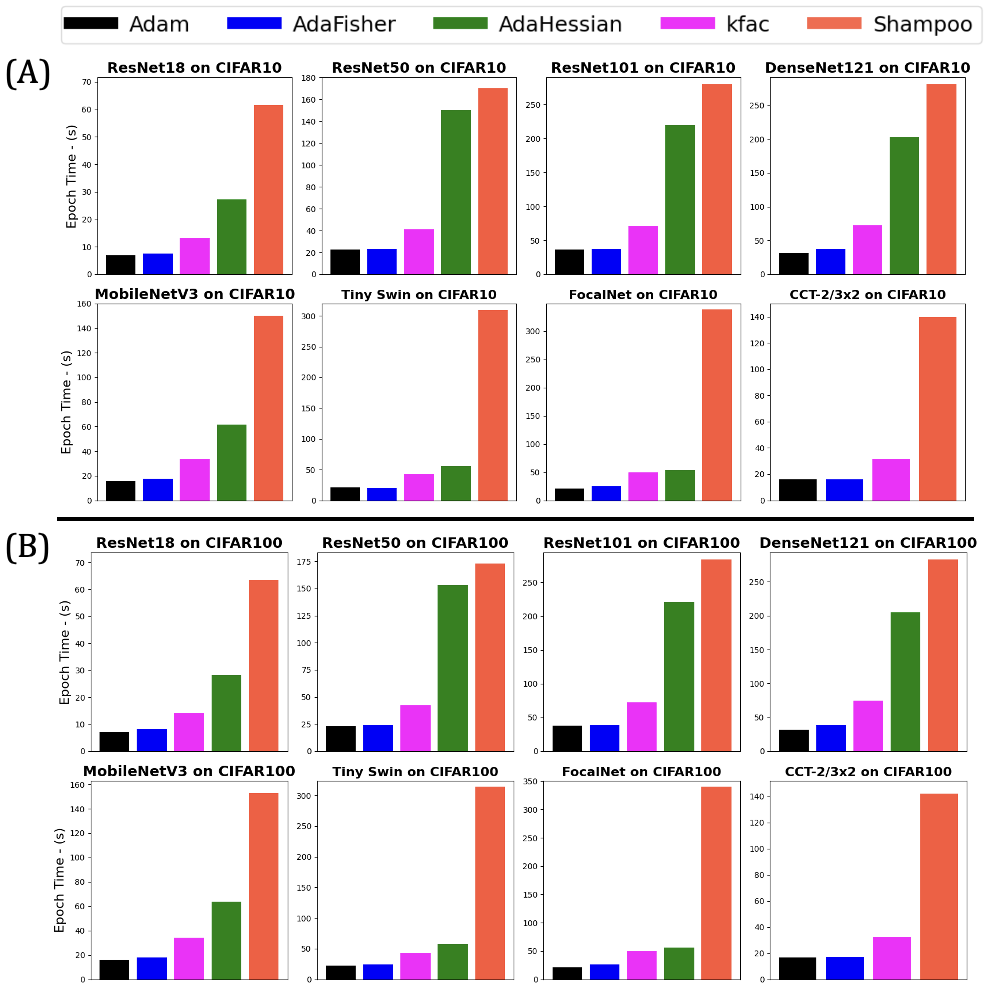}
    \caption{Epoch times for various networks on CIFAR10 (A) and CIFAR100 (B) using Adam, AdaFisher, K-FAC, AdaHessian and Shampoo.}
    \label{fig:bar_plot_time}
\end{figure}

\textbf{Epoch Times.} Continuing our analysis of the time complexity for each optimizer, we present in Figure~\ref{fig:bar_plot_time}  the epoch times for various network architectures and datasets. Specifically, we compare the epoch times of Adam, AdaFisher, K-FAC, AdaHessian, and Shampoo optimizers on CIFAR10 and CIFAR100 datasets. As depicted in Figure~\ref{fig:bar_plot_time} panel (A), AdaFisher demonstrates a comparable training time to Adam across multiple network architectures on the CIFAR10 dataset. This indicates that AdaFisher achieves efficient optimization without incurring significant additional computational costs. Similarly, in Figure~\ref{fig:bar_plot_time} panel (B), we observe that the epoch times for AdaFisher remain close to those of Adam on the CIFAR100 dataset. While K-FAC and AdaHessian exhibit increased training times, Shampoo shows the highest epoch times across all tested networks. This further highlights the efficiency of AdaFisher as an optimizer, combining the advantages of advanced optimization techniques with practical training times.
\section{Concluding Remarks}
In summary, our comprehensive ablative and stability analyses demonstrate that AdaFisher achieves robust and efficient performance through a judicious integration of adaptive and curvature-aware techniques. The experiments reveal that AdaFisher is resilient across various learning rate schedulers, and its refined diagonal block-Kronecker approximation of the FIM significantly enhances convergence efficiency, particularly in the later training stages. Notably, eliminating the square root in the update rule and incorporating an EMA over the KFs yield substantial improvements in both performance and computational efficiency. Additionally, the inclusion of FIM computation in normalization layers further stabilizes training and fosters better generalization by steering the optimizer towards flatter minima. Overall, these findings not only validate the design choices underlying AdaFisher but also underscore its potential as a scalable and practical optimizer for modern deep learning applications. The insights provided here offer a promising roadmap for future research in adaptive second-order optimization.
\chapter{Concluding Remarks} \label{chap:conclusion}
\section{Overview of Chapters}
This thesis has explored the challenge of making second-order optimization practical for deep learning, focusing on the FIM as a source of curvature information. 
Chapter~\ref{chap:introduction} introduced the motivation and problem statement, highlighting the gap between the superior convergence properties of second-order methods and their impracticality in modern DNN training. Chapter~\ref{chap:literaturereview} provided a literature review of optimization techniques, from classical first-order methods to advanced second-order approaches, identifying the central trade-off: first-order methods are efficient but capture limited curvature information, whereas second-order methods use richer curvature information for faster convergence but are often infeasible at scale.
Chapter~\ref{chap:efficientfisherapprox} developed a new, efficient approximation of the FIM suitable for large-scale DNNs. By analyzing the FIM's eigen-spectrum and sparsity patterns, we revealed that much of the curvature information can be retained through structured approximations. The chapter introduced a block-diagonal (Kronecker-factored) approximation to the FIM, naturally arising from layer-wise factorization, and demonstrated that a diagonal-centric structure in the FIM emerges during training, justifying using primarily the diagonal of each factor as an efficient proxy for curvature.
Chapter~\ref{chap:adafisher} introduced AdaFisher, a novel adaptive second-order optimizer leveraging our FIM approximation. AdaFisher employs a refined diagonal block-Kronecker approximation of the FIM as its preconditioner, effectively integrating curvature information into each update step without incurring the full cost of second-order methods. The chapter detailed AdaFisher's theoretical underpinnings, algorithmic structure, and distributed implementation. We provided theoretical analysis showing that AdaFisher's adaptive nature inherits stability similar to first-order methods while its FIM-based scaling guides the model toward minima that generalize better.
Chapter~\ref{chap:experiments} presented extensive experiments validating AdaFisher across computer vision tasks (image classification with CNNs and Vision Transformers) and natural language tasks (language modeling with transformers). Results demonstrated that AdaFisher consistently converged faster and reached lower testing error than baselines including Adam, AdaHessian, K-FAC, and Shampoo. AdaFisher exhibited stability and robustness to hyper-parameter tuning, performing consistently well over a range of settings while using standard learning rate schedules.
Chapter~\ref{chap:experiments} provided an in-depth ablation and stability analysis of AdaFisher's design. We confirmed that each key design choice is justified: the adaptive square-root scaling, exponential moving average for Kronecker factors, and FIM-based preconditioning in all layers all contributed to better outcomes. AdaFisher maintained strong performance under different batch sizes and learning rate schedules. Crucially, models trained with AdaFisher had significantly lower effective curvature than those trained with conventional optimizers, indicating that AdaFisher steers optimization toward broader, flatter minima known to correlate with improved generalization.

\section{Solving Deep Learning Optimization Problem with AdaFisher}
The core problem addressed by this thesis is how can we leverage the second order information in an efficient way to both achieve better generalization while maintaining bounded computation overhead. AdaFisher tackles this by using the Fisher Information Matrix as a guide for optimization. Theoretically, AdaFisher can be viewed as an adaptive natural gradient method: it preconditions the gradient with an estimate of the inverse Fisher matrix, effectively normalizing gradient directions by local curvature. This means AdaFisher \textbf{takes smaller steps where loss curvature is steep and larger steps where the surface is flat}. This curvature-aware scaling is similar to second-order optimizers but with far less computational overhead. AdaFisher integrates the adaptive moment estimation machinery of Adaptive framework, marrying the efficiency of first-order updates with the precision of second-order curvature information.

From a theoretical standpoint, AdaFisher's use of the FIM grants several advantages over purely first-order methods. The Fisher matrix is positive semi-definite and approximates the Hessian in many cases, helping AdaFisher avoid issues of negative curvature and ill-conditioning that plague Hessian-based methods. AdaFisher's update rule performs gradient descent in a whitened space where coordinate scaling is determined by curvature, yielding more balanced progress across all dimensions of parameter space. By steering updates towards regions favored by the FIM's structure, AdaFisher introduces implicit regularization towards flatter regions of the loss landscape, explaining the improved generalization observed empirically.
Experimentally, AdaFisher outperforms other optimizers across various benchmarks. It not only speeds up convergence but also finds solutions with lower test error. On ImageNet and CIFAR benchmarks, AdaFisher consistently achieved lower error rates than Adam and outperformed specialized second-order methods. Unlike other second-order methods which often require large batch sizes and careful tuning, AdaFisher remains robust with standard batch sizes and learning rate schedules.

Compared to Adam, AdaFisher provides more informed preconditioning based on expected curvature rather than gradient magnitudes. Compared to AdaHessian, AdaFisher's use of the Fisher matrix is better suited for classification problems where the Hessian may not be positive-definite. Versus K-FAC and Shampoo, AdaFisher is considerably more lightweight, storing only per-parameter statistics similar to Adam in memory cost, enabling scaling to larger models with far less overhead. AdaFisher also demonstrated efficient distributed training, making it practical for real-world large-scale applications.

\section{Future Directions}
While AdaFisher makes significant progress towards practical second-order optimization, several avenues for improvement and further research remain:
Richer Fisher Approximation (\textbf{Band-Diagonal FIM}): One immediate enhancement would be to improve the approximation of the Fisher Information Matrix. Currently, AdaFisher relies on the diagonal of each Kronecker factor to construct its preconditioner. A natural extension would be to use a band-diagonal approximation instead, capturing a band of off-diagonal entries around the main diagonal to account for limited correlations between parameters. This approach would bridge the gap between the sparse diagonal approximation and the full dense matrix: a small bandwidth could significantly increase accuracy of curvature representation with only a modest increase in computation. Future work can analyze the trade-off between bandwidth and performance gains, and develop algorithms to efficiently update and invert these banded approximations.
Informative Neurons for Parameter-Efficient Fine-Tuning: AdaFisher's insights could be applied to parameter-efficient transfer learning. The FIM can identify the most "informative" neurons or weights – those to which the loss is most sensitive – and fine-tuning could be restricted to those parameters for new tasks. This would create a systematic approach to parameter-efficient fine-tuning rather than guessing which layers to train. Research could involve computing per-neuron Fisher diagonals after training on a base task, then fine-tuning only neurons with top-tier Fisher values for target tasks. This could be combined with lightweight adaptation modules like LoRA or adapter methods, making large model deployment more efficient.

Efficiency through Low-Level Optimization (CUDA Kernels): To further improve runtime performance, future work could focus on low-level optimization. Developing custom CUDA kernels for AdaFisher's core operations could greatly speed up the optimizer on GPU hardware. By optimizing memory access patterns and parallelizing FIM computations, we could reduce per-iteration overhead to be almost on par with Adam. Integration with deep learning compiler frameworks and exploring mixed-precision implementations might further reduce memory and computation costs without affecting efficacy. These engineering improvements could make AdaFisher practically as fast as the best first-order optimizers.
Application to LLMs: An important next step is testing and refining AdaFisher on truly large-scale models, such as Transformer-based LLMs with hundreds of billions of parameters. This will likely require combining memory-efficient approximations and low-level optimizations, since storing even diagonal Fisher information for billions of parameters is non-trivial. It may also require algorithmic adaptations, such as segmenting the model into modules optimized with local curvature information. The payoff could be substantial: AdaFisher might help stabilize the fine-tuning of LLMs on downstream tasks by virtue of its curvature-guided updates. Evaluating AdaFisher on RLHF or long-horizon training could reveal whether curvature information helps navigate complex non-convex training dynamics.
Broader Research Opportunities: Additional research directions include theoretical analysis of generalization for curvature-aware optimizers, combining AdaFisher with techniques like sharpness-aware minimization, developing quantized or memory-compressed variants for enormous models, applying AdaFisher's methodology to continual learning to mitigate catastrophic forgetting, and exploring its use with advanced regularization techniques or in federated learning scenarios.

In summary, the development of AdaFisher opens many future research avenues in both algorithmic innovation and practical applications. By improving the approximation quality, efficiency, and applicability of second-order methods, we move closer to a paradigm where curvature-informed optimization becomes a standard component of deep learning training, leading to faster convergence, better generalization, and more efficient use of computational resources.

\appendix
\setcounter{table}{0}        
\setcounter{figure}{0}        
\renewcommand{\thefigure}{\Alph{chapter}.\arabic{figure}}     
\renewcommand{\thetable}{\Alph{chapter}.\arabic{table}}        

\begin{appendices}
    \chapter{Proofs} \label{chap:ApdxA}
\begin{proposition}[FIM for normalization layer]
Let $(\boldsymbol{\nu}_i, \boldsymbol{\beta}_i) \in \mathbb{R}^{C_i}$ be the scale and shift parameters of a normalization layer $i$. The empirical KFs for the FIM approximation are
\begin{align}
\mathcal{H}_{i-1}\Bigr|_{\boldsymbol{\nu}_i} = \frac{1}{|\mathcal{T}_i|} \sum_{x \in \mathcal{T}_i} \mathbf{h}_{i-1,x}\mathbf{h}_{i-1,x}^\top, \, \mathcal{H}_{i-1}\Bigr|_{\boldsymbol{\beta}_i} = \mathbf{1}\mathbf{1}^\top, \quad   
\mathcal{S}_i = \frac{1}{|\mathcal{T}_i|} \sum_{x \in \mathcal{T}_i} \mathbf{s}_{i,x}\mathbf{s}_{i,x}^\top \notag
\end{align}
where $\mathbf{h}_{i-1}, \mathbf{s}_i \in \mathbb{R}^{C_i \times |\mathcal{T}_i|}$ represent the pre-normalized activations and gradients, respectively. Here, $\mathcal{T}_i$ is the set of dimensions over which normalization statistics are computed, and $C_i$ is the channels/features size.
\end{proposition}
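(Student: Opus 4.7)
The plan is to mirror the K-FAC derivation of Section~\ref{sec:background}, but applied to the special structure of a normalization layer, treating the affine part $\mathbf{a}_{i,x} = \boldsymbol{\nu}_i \odot \mathbf{h}_{i-1,x} + \boldsymbol{\beta}_i$ (evaluated at each index $x \in \mathcal{T}_i$ over which normalization statistics are computed) as a per-location layer whose ``weights'' are $\boldsymbol{\nu}_i$ and whose ``bias'' is $\boldsymbol{\beta}_i$. First I would write down the per-index Jacobians: $\partial \mathbf{a}_{i,x}/\partial \boldsymbol{\nu}_i = \operatorname{diag}(\mathbf{h}_{i-1,x})$ and $\partial \mathbf{a}_{i,x}/\partial \boldsymbol{\beta}_i = \mathbf{I}$. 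Combining these with the chain rule and summing over $x \in \mathcal{T}_i$, one obtains
\begin{align*}
    \nabla_{\boldsymbol{\nu}_i}\mathcal{L} \;=\; \sum_{x\in \mathcal{T}_i} \mathbf{s}_{i,x}\odot \mathbf{h}_{i-1,x}, \qquad
    \nabla_{\boldsymbol{\beta}_i}\mathcal{L} \;=\; \sum_{x\in \mathcal{T}_i} \mathbf{s}_{i,x}.
\end{align*}

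Next, I would substitute these expressions into the FIM definition $F_{\boldsymbol{\theta}} = \mathbb{E}[\nabla_{\boldsymbol{\theta}}\mathcal{L}\,(\nabla_{\boldsymbol{\theta}}\mathcal{L})^\top]$ restricted to the blocks corresponding to $\boldsymbol{\nu}_i$ and $\boldsymbol{\beta}_i$. Using the identity $\mathbf{s}\odot \mathbf{h} = \operatorname{vec}(\mathbf{s}\mathbf{h}^\top)$ viewed as a diagonal extraction, the outer product can be re-expressed in the same Kronecker form that underlies K-FAC. I would then invoke the two standard K-FAC approximations: (i) the activations $\mathbf{h}_{i-1,x}$ and pre-activation derivatives $\mathbf{s}_{i,x}$ are treated as statistically independent, and (ii) contributions from distinct indices $x \neq y$ in $\mathcal{T}_i$ are treated as uncorrelated, so only the ``diagonal'' terms $x=y$ survive in expectation. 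These two steps reduce the double sum $\sum_{x,y}$ to a single sum $\sum_x$, and factor each expectation into separate activation and sensitivity factors.

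This yields $\mathcal{S}_i = |\mathcal{T}_i|^{-1}\sum_{x}\mathbf{s}_{i,x}\mathbf{s}_{i,x}^\top$ for the sensitivity factor, shared by both parameter groups. For the scale, the activation factor is $|\mathcal{T}_i|^{-1}\sum_x \mathbf{h}_{i-1,x}\mathbf{h}_{i-1,x}^\top$, matching the stated $\mathcal{H}_{i-1}|_{\boldsymbol{\nu}_i}$. For the shift, since $\partial \mathbf{a}_{i,x}/\partial \boldsymbol{\beta}_i = \mathbf{I}$ plays the role of a constant-one input across channels, the corresponding activation factor reduces to the all-ones outer product $\mathbf{1}\mathbf{1}^\top$, exactly as claimed. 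Normalization constants $1/|\mathcal{T}_i|$ are absorbed so that the factors stay well-scaled, consistent with the convention in \citet{grosse2016kroneckerfactored}.

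The main obstacle I anticipate is the rigorous justification of the independence between summands at different indices $x \in \mathcal{T}_i$. Within a convolutional BatchNorm, spatial locations generally share the same channel statistics and are correlated; within a LayerNorm, feature dimensions within a token are even more entangled because normalization explicitly couples them. I would handle this by framing the $x \neq y$ decoupling as a K-FAC-style approximation (analogous to the treatment of spatial positions in convolutions in Section~\ref{sec:effcomputFIM}), emphasizing that this assumption is what yields the clean Kronecker form and is consistent with the rest of the framework. A secondary technical point is clarifying the precise meaning of ``pre-normalized activations'': I would fix notation at the outset so that $\mathbf{h}_{i-1,x}$ denotes the input fed into the affine rescaling (i.e., the standardized activations immediately before multiplication by $\boldsymbol{\nu}_i$), ensuring the Jacobian computation above is unambiguous.
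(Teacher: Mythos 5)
Your proposal is correct and follows essentially the same route as the paper's proof: the same gradient expressions $\nabla_{\boldsymbol{\nu}_i}\mathcal{L}=\sum_{x}\mathbf{h}_{i-1,x}\odot\mathbf{s}_{i,x}$ and $\nabla_{\boldsymbol{\beta}_i}\mathcal{L}=\sum_{x}\mathbf{s}_{i,x}$, followed by the K-FAC independence/decoupling approximation to collapse the double sum and factor the expectation into $\mathcal{H}_{i-1}|_{\boldsymbol{\nu}_i}\otimes\mathcal{S}_i$ and $\mathbf{1}\mathbf{1}^\top\otimes\mathcal{S}_i$ (with cross-blocks dropped under the block-diagonal assumption). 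Your explicit Jacobian derivation and your candid discussion of where the $x\neq x'$ decoupling is an approximation are, if anything, slightly more detailed than the paper's own argument.
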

\begin{proof}
Let $(\boldsymbol{\nu}_i, \boldsymbol{\beta}_i) \in \mathbb{R}^{C_i}$ be the scale and shift parameters of a normalization layer $i$, with transformation 
\begin{align*}
    \mathbf{h}_i = \mathbf{a}_i = \boldsymbol{\nu}_i \odot \mathbf{h}_{i-1} + \boldsymbol{\beta}_i
\end{align*}
where $\mathbf{h}_{i-1} \in \mathbb{R}^{C_i \times |\mathcal{T}_i|}$ contains normalized activations and $\odot$ denotes element-wise multiplication. Let $\nabla_{\boldsymbol{\nu}_i}J(\boldsymbol{\theta}) = \sum_{x} \mathbf{h}_{i-1,x} \odot \mathbf{s}_{i,x}$ and $\nabla_{\boldsymbol{\beta}_i}J(\boldsymbol{\theta}) = \sum_{x} \mathbf{s}_{i,x}$ where $\mathbf{s}_i = \nabla_{\mathbf{h}_i}J(\boldsymbol{\theta})$. 

\textbf{For $\boldsymbol{\nu}_i$ parameters:}
\begin{align*}
\mathbb{E}[\nabla_{\boldsymbol{\nu}_i}\mathcal{L}\nabla_{\boldsymbol{\nu}_i}\mathcal{L}^\top] 
&= \mathbb{E}\left[\left(\sum_{x} \mathbf{h}_{i-1,x} \odot \mathbf{s}_{i,x}\right)\left(\sum_{x'} \mathbf{h}_{i-1,x'} \odot \mathbf{s}_{i,x'}\right)^\top\right] \\
&\approx \mathbb{E}\left[\sum_{x} (\mathbf{h}_{i-1,x}\mathbf{h}_{i-1,x}^\top) \otimes (\mathbf{s}_{i,x}\mathbf{s}_{i,x}^\top)\right] \quad \text{(K-FAC independence assumption)}  \\
&= \left(\frac{1}{|\mathcal{T}_i|}\sum_{x} \mathbf{h}_{i-1,x}\mathbf{h}_{i-1,x}^\top\right) \otimes \left(\frac{1}{|\mathcal{T}_i|}\sum_{x} \mathbf{s}_{i,x}\mathbf{s}_{i,x}^\top\right) \\
&= \mathcal{H}_{i-1}\Bigr|_{\boldsymbol{\nu}_i} \otimes \mathcal{S}_i 
\end{align*}

\textbf{For $\boldsymbol{\beta}_i$ parameters:}
\begin{align*}
\mathbb{E}[\nabla_{\boldsymbol{\beta}_i}\mathcal{L}\nabla_{\boldsymbol{\beta}_i}\mathcal{L}^\top] 
&= \mathbb{E}\left[\left(\sum_{x} \mathbf{s}_{i,x}\right)\left(\sum_{x'} \mathbf{s}_{i,x'}\right)^\top\right] \\
&= \mathbf{1}\mathbf{1}^\top \otimes \left(\frac{1}{|\mathcal{T}_i|}\sum_{x} \mathbf{s}_{i,x}\mathbf{s}_{i,x}^\top\right) \quad \text{(Bias term factorization)} \\
&= \mathcal{H}_{i-1}\Bigr|_{\boldsymbol{\beta}_i} \otimes \mathcal{S}_i 
\end{align*}

Cross-terms between $\boldsymbol{\nu}_i$ and $\boldsymbol{\beta}_i$ are excluded under the diagonal block assumption. 
\end{proof}
\begin{proposition}[Efficient FIM]
Let $\mathcal{H}_{i-1}$ and $\mathcal{S}_{i}$ represent the KFs for a given layer index $i$ within a neural network, where these factors exhibit semi-diagonal characteristics indicating energy concentration predominantly along the diagonal, as elaborated in Section \ref{sec:kroneckerdetail}. Define $g_{i}$ as the gradient obtained through backpropagation at layer $i$. Assume that $\mathcal{H}_{i-1}$ and $\mathcal{S}_{i}$ can be closely approximated by diagonal matrices, denoted by $\mathcal{H}_{D_{i-1}}$ and $\mathcal{S}_{D_{i}}$ respectively at layer $i$, such that $\mathcal{H}_{D_{i-1}} = \text{Diag}(\mathcal{H}_{i-1})$, $\mathcal{S}_{D_{i}} = \text{Diag}(\mathcal{S}_{i})$ where $\text{Diag}(\mathcal{M})$ denote the diagonal approximation of a matrix $\mathcal{M}$, which retains only the main diagonal. Therefore, we define the Empirical FIM as
\begin{align}
\Tilde{F}_{D_{i}} \triangleq \mathcal{H}_{D_{i-1}}' \otimes \mathcal{S}_{D_{i}}' + \lambda \mathbf{I},\label{eq:FIMDiag2}
\end{align}
where $\mathcal{M}'$ denotes the Min-Max normalization technique \cite{patro2015normalization} for \(\mathcal{M} = \mathcal{H}_{D_{i-1}}\) or \(\mathcal{S}_{D_{i}}\).  The regularization parameter \(\lambda\) set to \(0.001\) serves as damping factors, in alignment with the principles of Tikhonov regularization, to enhance computational stability and improve the conditioning of the matrix. The foundational aspects of the K-FAC optimization approach are detailed in~\cite{pmlr-v37-martens15}. Then, the closed-form solution for the preconditioned gradient $\bar{\mathbf{g}}^{(t)}$, derived from the diagonal approximation of the FIM, is given by: $\bar{\mathbf{g}}^{(t)} = (\tilde{F}_{D}^{(t)})^{-1} \mathbf{g}^{(t)}$.
\end{proposition}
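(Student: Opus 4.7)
The plan is to derive the stated efficient EFIM formula from the K-FAC block-Kronecker approximation in a few deliberate steps, each of which isolates one of the simplifications introduced in the Proposition. I will begin from the layer-wise block of the EFIM already established in Section~\ref{sec:background}, namely $\hat{F}_i \approx \mathcal{H}_{i-1,i-1} \otimes \mathcal{S}_{i,i}$, which serves as the exact reference point against which the diagonal approximation will be compared. The first substantive step is to invoke the diagonal-dominance analysis developed in Section~\ref{sec:kroneckerdetail} (Gershgorin localization, perturbation robustness, FFT-based energy concentration) to justify replacing $\mathcal{H}_{i-1,i-1}$ and $\mathcal{S}_{i,i}$ by their diagonal parts $\mathcal{H}_{D_{i-1}} = \mathrm{Diag}(\mathcal{H}_{i-1,i-1})$ and $\mathcal{S}_{D_i} = \mathrm{Diag}(\mathcal{S}_{i,i})$ with controlled approximation error. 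Substituting into the Kronecker block and using the fact that the Kronecker product of two diagonal matrices is itself diagonal yields the raw approximation $\hat{F}_i \approx \mathcal{H}_{D_{i-1}} \otimes \mathcal{S}_{D_i}$.

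The second step handles the conditioning refinements that distinguish the statement from a naive diagonalization. I will show that applying min-max normalization coordinate-wise to each diagonal factor is equivalent to multiplying by a positive diagonal rescaling, which preserves diagonality and positive semi-definiteness; this justifies writing $\mathcal{H}_{D_{i-1}}'$ and $\mathcal{S}_{D_i}'$ in the final form without disturbing the Kronecker structure. Adding $\lambda \mathbf{I}$ with $\lambda>0$ then follows the Tikhonov damping argument from \cite{pmlr-v37-martens15}: it guarantees strict positive definiteness, bounds the smallest eigenvalue from below by $\lambda$, and therefore makes $\tilde{F}_{D_i}$ invertible. Assembling these pieces gives exactly Eq.~(\ref{eq:FIMDiag2}).

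The third step produces the closed-form preconditioned gradient. Because $\tilde{F}_{D_i}$ is diagonal (a Kronecker product of two diagonal matrices shifted by $\lambda \mathbf{I}$), its inverse is obtained by elementwise reciprocation of the diagonal entries, and in particular can be written using the identity $(A \otimes B)^{-1} = A^{-1} \otimes B^{-1}$ on the unregularized part when convenient. Assembling the per-layer inverses into a block-diagonal operator $\tilde{F}_D^{(t)}$ and applying it to the stacked gradient $\mathbf{g}^{(t)}$ produces the NGD-style update $\bar{\mathbf{g}}^{(t)} = (\tilde{F}_D^{(t)})^{-1}\mathbf{g}^{(t)}$, mirroring Eq.~(\ref{eq:NGD}).

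The main obstacle is not the algebra, which is essentially a chain of diagonalization and Kronecker identities, but rather making the first step rigorous: quantifying in what sense $\mathcal{H}_{D_{i-1}} \otimes \mathcal{S}_{D_i}$ approximates $\mathcal{H}_{i-1,i-1} \otimes \mathcal{S}_{i,i}$. I would argue this via the Gershgorin bound together with the SNR and perturbation measurements already reported in Section~\ref{sec:kroneckerdetail}, treating off-diagonal mass as bounded noise whose effect on the leading eigenvalues (and hence on the preconditioner) is small, and relying on the empirical diagonal dominance to absorb the remaining error into the damping term $\lambda \mathbf{I}$. The rest of the argument reduces to standard linear algebra and follows cleanly from this justification.
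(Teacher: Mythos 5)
Your proposal follows essentially the same route as the paper's own proof: diagonalize the Kronecker factors (justified by the diagonal-dominance analysis of Section~\ref{sec:kroneckerdetail}; the paper additionally invokes a within-layer independence assumption), use the fact that a Kronecker product of diagonal matrices is diagonal, and argue that min-max normalization plus the Tikhonov term $\lambda\mathbf{I}$ yields a well-conditioned, invertible diagonal matrix whose elementwise inverse gives $\bar{\mathbf{g}}^{(t)} = (\tilde{F}_{D}^{(t)})^{-1}\mathbf{g}^{(t)}$. One minor correction: min-max normalization is an affine shift-and-scale, $\mathcal{M}' = D^{-1}(\mathcal{M}-m_{\min}\mathbf{I})D^{-1}$, not a pure positive diagonal rescaling, though this does not affect your conclusions since diagonality and $0 \preccurlyeq \mathcal{M}' \preccurlyeq \mathbf{I}$ still hold.
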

\begin{proof}
The justification of our approach comprises two principal components: the rationale for adopting a diagonal approximation of the KFs and the methodology for normalization and regularization of these factors.

\textbf{Part 1: Diagonalization of KFs}

The assumption of independent neuronal activity within layers is foundational to our approach. This assumption posits that the covariance matrices \( \mathcal{H} \) and \( \mathcal{S} \), encapsulating the second-order statistics of activations and sensitivities, respectively, are diagonal. This diagonal nature arises because independence among random variables implies a covariance of zero for any pair of distinct variables, thereby nullifying all off-diagonal elements of these covariance matrices.

Consider matrices \( A \) and \( B \), each being diagonal with elements \( a_{ii} \) and \( b_{jj} \), respectively. The Kronecker product \( A \otimes B \), by definition, generates elements \( a_{ii}b_{jj} \) at the corresponding \( (i,j) \) positions. For diagonal \( A \) and \( B \), this product maintains non-zero values exclusively at diagonal positions where \( i = j \), resulting in:
\begin{align*}
    A \otimes B = \text{diag}(a_{11}b_{11}, \ldots, a_{nn}b_{mm}),
\end{align*}
yielding a purely diagonal matrix. Moreover, we have empirically demonstrated that the energy of the KFs is concentrated along the diagonal, as detailed in Section~\ref{sec:kroneckerdetail}. These arguments support our initial premise.

\textbf{Part 2: Normalization and Regularization}

Let $\mathcal{M} \in \{\mathcal{H}_{D_i}, \mathcal{S}_{D_i}\}$ be a diagonal matrix with entries $m_k > 0$. The min-max normalized matrix $\mathcal{M}'$ satisfies
\begin{align*}
\mathcal{M}' = D^{-1}(\mathcal{M} - m_{\min}I)D^{-1}, \quad D = \mathrm{diag}(\sqrt{m_{\max} - m_{\min}})
\end{align*}
where $m_{\min} = \min_k m_k$, $m_{\max} = \max_k m_k$. This affine transformation ensures that $0 \preccurlyeq \mathcal{M}' \preccurlyeq I$ where $\preccurlyeq$ denotes Loewner ordering. Combined with Tikhonov regularization, the modified FIM, $\tilde{F}_{D_i} = \mathcal{H}'_{D_{i-1}} \otimes \mathcal{S}'_{D_i} + \lambda \mathbf{I}$ admits eigenvalue bounds
\begin{align*}
\lambda \leq \lambda_k(\tilde{F}_{D_i}) \leq 1 + \lambda \quad \forall k
\end{align*}
which guarantees numerical stability for inversion. This approach ensures that all elements are scaled uniformly, preserving their relative magnitudes and distances. Compared to other normalization methods, such as z-score normalization~\citep{patro2015normalization}, Min-Max normalization offers several advantages such as the normalization Stability, where for $\mathcal{M}' = (\mathcal{M} - m_{\min}I)/(m_{\max} - m_{\min})$ we have $\sigma(\mathcal{M}') \subseteq [0,1]$ where $\sigma(\cdot)$ denotes matrix spectrum. Moreover, the Kronecker product satisfies $\sigma(\mathcal{H}'_{D_{i-1}} \otimes \mathcal{S}'_{D_i}) \subseteq [0,1]$, thus $\lambda_{\min}(\tilde{F}_{D_i}) \geq \lambda > 0$, guaranteeing invertibility. And the relative error satisfies $\frac{\|\tilde{F}_{D_i}^{-1} - F_{D_i}^{-1}\|}{\|F_{D_i}^{-1}\|} \leq \mathcal{O}(\epsilon + \lambda)$ where $\epsilon$ measures diagonal approximation error. Therefore, the preconditioned gradient can be written has $\bar{\mathbf{g}}^{(t)} = (\mathcal{H}_{D_{i-1}}' \otimes \mathcal{S}_{D_{i}}' + \lambda \mathbf{I})^{-1} \mathbf{g}^{(t)} = (\tilde{F}_{D_{i}}^{(t)})^{-1} \mathbf{g}^{(t)}$.
\end{proof}

\begin{proposition}[Convergence in convex optimization]
For the FIM defined in Eq. (\ref{eq:FIMDiag2}), the updating scheme $\boldsymbol{\theta}^{(t+1)} = \boldsymbol{\theta}^{(t)} -\alpha  (\tilde{F}_{D}^{(t)})^{-1} \nabla J(\boldsymbol{\theta}^{(t)})$ converges. Moreover, if $\nabla J$ is Lipschitz, i.e., $||\nabla J(\boldsymbol{\theta}) - \nabla J(\boldsymbol{\theta}')||_2 \leq L ||\boldsymbol{\theta} - \boldsymbol{\theta}'||$ for any $\boldsymbol{\theta} $ and $ \boldsymbol{\theta}'$, then for the $k$-step iteration with a fixed step size $\alpha\leq 1/L$, then 
\begin{equation*}
J(\boldsymbol{\theta}^{(k)}) - J(\boldsymbol{\theta}^*) \leq \frac{||\boldsymbol{\theta}^{(0)} - \boldsymbol{\theta}^*||_2^2}{2\alpha k},
\end{equation*}
where $J(\boldsymbol{\theta}^*)$ is the optimal value.  
\end{proposition}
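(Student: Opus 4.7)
The statement combines two claims: convergence of the preconditioned iterates and a $\mathcal{O}(1/k)$ non-asymptotic rate under a Lipschitz gradient assumption. My plan is to treat the update as a preconditioned gradient descent with a symmetric positive definite preconditioner whose spectrum has been tightly controlled by the construction in Eq.~(\ref{eq:FIMDiag2}). Concretely, the earlier proposition shows that every eigenvalue of $\tilde{F}_{D}^{(t)}$ lies in $[\lambda,\,1+\lambda]$ because of the Min--Max normalization of $\mathcal{H}_{D_{i-1}}$ and $\mathcal{S}_{D_{i}}$ together with the Tikhonov term $\lambda I$. I will therefore write $0 < \mu_{\min} \leq \lambda_{k}\bigl((\tilde{F}_{D}^{(t)})^{-1}\bigr) \leq \mu_{\max}$ with $\mu_{\min} = 1/(1+\lambda)$ and $\mu_{\max} = 1/\lambda$, and treat the iteration as a descent direction argument against the weighted inner product induced by $(\tilde{F}_{D}^{(t)})^{-1}$.

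\textbf{Key steps in order.} First, I would invoke the classical descent lemma for $L$-smooth $J$: given the update $\boldsymbol{\theta}^{(t+1)} = \boldsymbol{\theta}^{(t)} - \alpha\,(\tilde{F}_{D}^{(t)})^{-1}\nabla J(\boldsymbol{\theta}^{(t)})$, we have
\begin{align*}
J(\boldsymbol{\theta}^{(t+1)}) \leq J(\boldsymbol{\theta}^{(t)}) - \alpha\,\nabla J^{\top}(\tilde{F}_{D}^{(t)})^{-1}\nabla J + \tfrac{L\alpha^{2}}{2}\bigl\|(\tilde{F}_{D}^{(t)})^{-1}\nabla J\bigr\|_{2}^{2}.
\end{align*}
Second, using $\alpha \leq 1/L$ and the spectral bounds on $(\tilde{F}_{D}^{(t)})^{-1}$, I would show that the quadratic term can be absorbed into the linear one, yielding a sufficient-descent inequality of the form $J(\boldsymbol{\theta}^{(t+1)}) \leq J(\boldsymbol{\theta}^{(t)}) - \tfrac{\alpha}{2}\,\nabla J^{\top}(\tilde{F}_{D}^{(t)})^{-1}\nabla J$; this both proves monotone decrease of $J(\boldsymbol{\theta}^{(t)})$ and, combined with the lower bound on $J$, gives the convergence claim. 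Third, for the rate I would combine this with convexity, $J(\boldsymbol{\theta}^{*}) \geq J(\boldsymbol{\theta}^{(t)}) + \nabla J(\boldsymbol{\theta}^{(t)})^{\top}(\boldsymbol{\theta}^{*}-\boldsymbol{\theta}^{(t)})$, and the algebraic identity
\begin{align*}
\bigl\|\boldsymbol{\theta}^{(t+1)}-\boldsymbol{\theta}^{*}\bigr\|_{2}^{2} = \bigl\|\boldsymbol{\theta}^{(t)}-\boldsymbol{\theta}^{*}\bigr\|_{2}^{2} - 2\alpha\,\nabla J^{\top}(\tilde{F}_{D}^{(t)})^{-1}(\boldsymbol{\theta}^{(t)}-\boldsymbol{\theta}^{*}) + \alpha^{2}\bigl\|(\tilde{F}_{D}^{(t)})^{-1}\nabla J\bigr\|_{2}^{2}
\end{align*}
to obtain $J(\boldsymbol{\theta}^{(t+1)}) - J(\boldsymbol{\theta}^{*}) \leq \tfrac{1}{2\alpha}\bigl(\|\boldsymbol{\theta}^{(t)}-\boldsymbol{\theta}^{*}\|_{2}^{2} - \|\boldsymbol{\theta}^{(t+1)}-\boldsymbol{\theta}^{*}\|_{2}^{2}\bigr)$. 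Telescoping from $t=0$ to $t=k-1$ and using monotone decrease to lower-bound the left-hand side by $k\bigl(J(\boldsymbol{\theta}^{(k)}) - J(\boldsymbol{\theta}^{*})\bigr)$ delivers the stated bound.

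\textbf{Main obstacle.} The delicate point is reconciling the time-varying preconditioner $(\tilde{F}_{D}^{(t)})^{-1}$ with the unweighted Euclidean norm that appears in the final bound. A naive application of the telescoping argument in the $M$-weighted norm produces constants depending on $\mu_{\min}$ and $\mu_{\max}$, rather than the clean factor $1/(2\alpha)$ stated. The cleanest route is to bound cross terms by Cauchy--Schwarz in the $\tilde{F}_{D}^{(t)}$-metric and then exploit the two-sided spectral bounds $[\lambda,\,1+\lambda]$ to pass back to the Euclidean norm; alternatively, one can absorb the conditioning into an effective step size $\tilde\alpha \leq \alpha\mu_{\max}$ and verify that $\tilde\alpha \leq 1/L$ is implied by $\alpha \leq 1/L$ since $\mu_{\max} = 1/\lambda$ with $\lambda = 10^{-3}$ small. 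A secondary subtlety is that the time-varying preconditioner prevents a direct reduction to vanilla GD via the change of variables $\tilde{\boldsymbol{\theta}} = (\tilde{F}_{D})^{1/2}\boldsymbol{\theta}$, so the bookkeeping must carry $(\tilde{F}_{D}^{(t)})^{-1}$ through each step; handling this carefully while retaining the stated form of the bound is where most of the work lies.
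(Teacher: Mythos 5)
Your route is genuinely different from the paper's, and more ambitious. The paper's own proof assumes strong convexity and smoothness, $\alpha I \preceq \nabla^{2}J \preceq \beta I$, sandwiches the spectrum of $\tilde{F}_{D}^{(t)}$ with the same constants, derives only a sufficient-descent inequality $J(\boldsymbol{\theta}^{(t+1)})-J(\boldsymbol{\theta}^{(t)}) \leq -\tfrac{\alpha}{2\beta^{2}}\|\mathbf{g}^{(t)}\|^{2}$ in the style of the damped-Newton analysis of \cite{Boyd2004} (following AdaHessian), and then defers the stated $O(1/k)$ bound to \cite{Ryu2016} rather than proving it inline. You instead try to reprove the rate from scratch via the classical descent-lemma-plus-telescoping template for convex $L$-smooth objectives, adapted to the preconditioned update.

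As written, however, your argument has two genuine gaps. First, in your Step 2 the quadratic term $\tfrac{L\alpha^{2}}{2}\,\nabla J^{\top}(\tilde{F}_{D}^{(t)})^{-2}\nabla J$ can be absorbed into the linear term $\alpha\,\nabla J^{\top}(\tilde{F}_{D}^{(t)})^{-1}\nabla J$ only if $L\alpha \leq \lambda_{\min}(\tilde{F}_{D}^{(t)})$; by the construction in Eq.~(\ref{eq:FIMDiag2}) the Min--Max normalization drives the smallest eigenvalue of the Kronecker part to zero, so $\lambda_{\min}(\tilde{F}_{D}^{(t)})=\lambda=10^{-3}$, and $\alpha \leq 1/L$ is therefore not sufficient—you would need $\alpha \leq \lambda/L$. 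Your fallback via an effective step size goes the wrong way: $\tilde{\alpha}=\alpha\mu_{\max}=\alpha/\lambda \gg \alpha$, so $\alpha\leq 1/L$ does not imply $\tilde{\alpha}\leq 1/L$. Second, the obstacle you flag at the end is not surmountable with the tools you propose: the cross term $\nabla J^{\top}(\tilde{F}_{D}^{(t)})^{-1}(\boldsymbol{\theta}^{(t)}-\boldsymbol{\theta}^{*})$ is an indefinite bilinear form, and two-sided eigenvalue bounds on an SPD matrix do not yield $\nabla J^{\top}(\tilde{F}_{D}^{(t)})^{-1}(\boldsymbol{\theta}^{(t)}-\boldsymbol{\theta}^{*}) \geq c\,\nabla J^{\top}(\boldsymbol{\theta}^{(t)}-\boldsymbol{\theta}^{*})$ (the preconditioned inner product can even change sign), so convexity cannot be injected into your expansion of $\|\boldsymbol{\theta}^{(t+1)}-\boldsymbol{\theta}^{*}\|_{2}^{2}$; telescoping in the $\tilde{F}_{D}^{(t)}$-weighted norm instead would require a monotonicity condition on the time-varying preconditioner of the kind the paper only assumes in its non-convex proposition. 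Consequently the clean bound $\|\boldsymbol{\theta}^{(0)}-\boldsymbol{\theta}^{*}\|_{2}^{2}/(2\alpha k)$ is not established; at best your route gives a rate degraded by the condition number $(1+\lambda)/\lambda$ under a step size scaled down by $\lambda$. To match the paper you would either prove the descent inequality under its strong-convexity assumptions and invoke the cited reference for the rate, or add the extra assumptions (fixed or monotone preconditioner, $\alpha\lesssim\lambda/L$) needed to make your telescoping rigorous.
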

\begin{proof}
We follow the same proof as in \cite{yao2021adahessian}. Assume that $J(\boldsymbol{\theta})$ is a strongly convex and strictly smooth function in $\mathbb{R}^d$, such that there exist positive constants $\alpha$ and $\beta$ so that $\alpha I \leq  \nabla^2 J(\boldsymbol{\theta}) \leq \beta I$ for all $w$. We can show that the update formulation $\triangle \boldsymbol{\theta}^{(t)} = (\tilde{F}^{(t)})^{-1} \mathbf{g}^{(t)}$ converges by showing that with the proper learning rate:
$$\triangle \boldsymbol{\theta}^{(t)} \coloneq J(\boldsymbol{\theta}^{(t+1)}) - J(\boldsymbol{\theta}^{(t)}) \leq  - \frac{\alpha}{2\beta^{2}}||g^{(t)}||^2$$
Note that when $k = 0$ or 1, the convergence rate is the same as gradient descent or Newton method, respectively. Our proof is similar to \cite{Boyd2004} for the Newton method. We denote $\lambda(\boldsymbol{\theta}^{(t)}) = (g^{(t)})^\top (\tilde{F}^{(t)})^{-1}\mathbf{g}^{(t)})^{1/2}$. Since $J(\boldsymbol{\theta})$ is strongly convex, we have 
\begin{align*}
    J(\boldsymbol{\theta}^{(t)} - \eta \triangle \boldsymbol{\theta}^{(t)}) &\leq J(\boldsymbol{\theta}^{(t)}) - \eta (\mathbf{g}^{(t)})^\top\triangle \boldsymbol{\theta}^{(t)} + \frac{\eta^2 \beta ||\triangle \boldsymbol{\theta}^{(t)}||^2}{2} \\
    & \leq J(\boldsymbol{\theta}^{(t)}) -\eta \lambda(\boldsymbol{\theta}^{(t)})^2 + \frac{\beta}{2\alpha}\eta^2\lambda(\boldsymbol{\theta}^{(t)})^2.
\end{align*}
The second inequality comes from the fact that 
\begin{equation*}
    \lambda(\boldsymbol{\theta}^{(t)})^2 = \triangle (\boldsymbol{\theta}^{(t)})^\top \tilde{F}^{(t)}\triangle \boldsymbol{\theta}^{(t)} \geq \alpha ||\triangle \boldsymbol{\theta}^{(t)}||^2.
\end{equation*}
Therefore, the step size $\hat{\eta} = \alpha/\beta$ will make $f$ decrease as follows,
$$J(\boldsymbol{\theta}^{(t)} - \hat{\eta} \triangle \boldsymbol{\theta}^{(t)}) - J(\boldsymbol{\theta}^{(t)}) \leq -\frac{1}{2}\hat{\eta}\lambda(\boldsymbol{\theta}^{(t)})^2.$$
Since $\alpha I\preceq \tilde{F}^{(t)}\preceq\beta I$, we have 
$$\lambda(\boldsymbol{\theta}^{(t)})^2 = (\mathbf{g}^{(t)})^\top (\tilde{F}^{(t)})^{-1}\mathbf{g}^{(t)}\geq \frac{1}{\beta}||\mathbf{g}^{(t)}||^2.$$
Therefore, 
\begin{equation}\label{eq:upper_bound1}
    J(\boldsymbol{\theta}^{(t)} - \hat{\eta} \triangle \boldsymbol{\theta}^{(t)}) - J(\boldsymbol{\theta}^{(t)})\leq -\frac{1}{2\beta}\hat{\eta}||\mathbf{g}^{(t)}||^2 = -\frac{\alpha}{2\beta^{2}}||\mathbf{g}^{(t)}||^2
\end{equation}
Since $F_{D}^{(t)}$ is positive definite, hence Eq. (\ref{eq:upper_bound1}) holds true. For the bound on convergence rate, we refer to \cite{Ryu2016} for the details of the complete proof.
\end{proof}
\begin{proposition}[Convergence in non-convex stochastic optimization] Under the assumptions:\\
(i) $f$ is lower bounded and differentiable; $||\nabla J(\boldsymbol{\theta}) - \nabla J(\boldsymbol{\theta}')||_2\leq L ||\boldsymbol{\theta} - \boldsymbol{\theta}'||_2$, $||\tilde{F}_{D}^{(t)}||_\infty<L,\, \forall t, \boldsymbol{\theta}, \boldsymbol{\theta}'$.\\
(ii) Both the true and stochastic gradient are bounded, i.e. $||\nabla J(\boldsymbol{\theta}^{(t)})||_2 \leq \lambda$ and $||g_t||_2 \leq \lambda$, $\forall t$ for some $\lambda>0$.\\
(iii) Unbiased and independent noise in $\mathbf{g}^{(t)}$, i.e. $\mathbf{g}^{(t)} = \nabla J(\boldsymbol{\theta}^{(t)}) + \zeta^{(t)}$, $\mathbb{E}[\zeta^{(t)}] = 0$, and $\zeta^{(t)}\perp\zeta^{(t)}$, $\forall i \neq j$.

Assume $\eta^{(t)} = \frac{\eta}{\sqrt{t}}$, $\beta^{(t)}\leq \beta\leq 1$ is non-increasing, $\frac{\tilde{F}_{D}^{(t-1)}[j]}{\eta^{(t-1)}}\leq \frac{\tilde{F}_{D}^{(t)}[j]}{\eta^{(t)}}$, $\forall t\in [T], j\in [d]$, we then have 
\begin{equation}\label{eq:gradient_bound}
\min_{t\in [T]}\mathbb{E}[||\nabla J(\boldsymbol{\theta}^{(t)})||_2^2] \leq \frac{L}{\sqrt{T}}(C_1 \eta^2 \lambda^2(1+ \log T) + C_2 d\eta + C_3 d\eta^2 + C_4)
\end{equation}
where $C_1, C_2, C_3$ are constants independent of $d$ and $T$, $C_4$ is a constant independent of $T$, the expectation is taken w.r.t all the randomness corresponding to $\{g^{(t)}\}$.
\end{proposition}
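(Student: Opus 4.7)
The proof will follow the generalized Adam-type framework established in \cite{chen2018on}, treating AdaFisher as an instance where the adaptive preconditioner is $\tilde{F}_{D}^{(t)}$ rather than the square-root of accumulated squared gradients. The plan is to begin from the $L$-smoothness assumption to obtain the descent inequality
\begin{align*}
J(\boldsymbol{\theta}^{(t+1)}) \;\leq\; J(\boldsymbol{\theta}^{(t)}) \;+\; \langle \nabla J(\boldsymbol{\theta}^{(t)}),\, \boldsymbol{\theta}^{(t+1)}-\boldsymbol{\theta}^{(t)}\rangle \;+\; \frac{L}{2}\,\|\boldsymbol{\theta}^{(t+1)}-\boldsymbol{\theta}^{(t)}\|_2^2,
\end{align*}
substitute the AdaFisher update $\boldsymbol{\theta}^{(t+1)}-\boldsymbol{\theta}^{(t)} = -\eta^{(t)}(\tilde{F}_{D}^{(t)})^{-1}m^{(t)}$, and take conditional expectation with respect to the filtration generated by $\{\mathbf{g}^{(1)},\dots,\mathbf{g}^{(t-1)}\}$ so that assumption (iii) can be invoked to kill the noise cross-term.

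Next, I would introduce an auxiliary sequence $\mathbf{z}^{(t)} = \boldsymbol{\theta}^{(t)} + \tfrac{\beta^{(t)}}{1-\beta^{(t)}}(\boldsymbol{\theta}^{(t)}-\boldsymbol{\theta}^{(t-1)})$ (the standard trick for decoupling momentum from the preconditioner) to rewrite the inner product $\langle \nabla J(\boldsymbol{\theta}^{(t)}), (\tilde{F}_{D}^{(t)})^{-1}m^{(t)}\rangle$ as the sum of (a) a principal descent term proportional to $\|(\tilde{F}_{D}^{(t)})^{-1/2}\nabla J(\boldsymbol{\theta}^{(t)})\|_2^2$, which, using $\|\tilde{F}_{D}^{(t)}\|_\infty \leq L$, is lower-bounded by $\tfrac{1}{L}\|\nabla J(\boldsymbol{\theta}^{(t)})\|_2^2$, and (b) residual error terms controlled by the difference between $m^{(t)}$ and the true gradient, together with the drift of the preconditioner. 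The monotonicity hypothesis $\tilde{F}_{D}^{(t-1)}[j]/\eta^{(t-1)} \le \tilde{F}_{D}^{(t)}[j]/\eta^{(t)}$ will be used precisely to telescope the drift term $\sum_t \langle \mathbf{m}^{(t)}, \big[(\tilde{F}_{D}^{(t)})^{-1}/\eta^{(t)} - (\tilde{F}_{D}^{(t-1)})^{-1}/\eta^{(t-1)}\big]\mathbf{m}^{(t)}\rangle$ into a dimension-dependent constant using the boundedness of $\|\mathbf{m}^{(t)}\|_\infty \le \lambda$, yielding the $C_2 d\eta$ summand. The Lipschitz/quadratic term $\tfrac{L}{2}\|\boldsymbol{\theta}^{(t+1)}-\boldsymbol{\theta}^{(t)}\|^2$, bounded via $\|(\tilde{F}_{D}^{(t)})^{-1}\|\leq 1/\lambda_{\min}$ from Proposition~\ref{prop:proposition1} (the Tikhonov damping ensures $\lambda_{\min}\ge\lambda>0$) and $\|\mathbf{m}^{(t)}\|\le\lambda$, will contribute the $C_3 d\eta^2$ summand after using $\sum_{t=1}^T (\eta^{(t)})^2 = \eta^2\sum_{t=1}^T 1/t = O(\eta^2 \log T)$, which is the source of the $\log T$ factor and the $C_1\eta^2\lambda^2(1+\log T)$ term. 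Summing the descent inequality from $t=1$ to $T$, telescoping $J(\boldsymbol{\theta}^{(t)})$ on the left-hand side (bounded below by assumption (i), giving $C_4$), dividing through by $\sum_{t=1}^T \eta^{(t)}/L = \Theta(\sqrt{T}/L)$, and finally using $\min_t \mathbb{E}\|\nabla J(\boldsymbol{\theta}^{(t)})\|_2^2 \le \tfrac{\sum_t \eta^{(t)}\mathbb{E}\|\nabla J(\boldsymbol{\theta}^{(t)})\|_2^2}{\sum_t \eta^{(t)}}$ yields the claimed $O(\log T/\sqrt{T})$ rate.

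The main obstacle, as in all Adam-type analyses, will be rigorously handling the coupling between the momentum accumulator $\mathbf{m}^{(t)}$ and the time-varying, data-dependent preconditioner $(\tilde{F}_{D}^{(t)})^{-1}$: naively the stochastic gradients inside $\mathbf{m}^{(t)}$ are not independent of $\tilde{F}_{D}^{(t)}$, so the clean cancellation $\mathbb{E}\langle\nabla J, \mathbf{g}^{(t)}-\nabla J\rangle=0$ does not apply directly. The remedy is the auxiliary-sequence argument of \cite{chen2018on}, which isolates a term of the form $\langle\nabla J(\boldsymbol{\theta}^{(t)}) - \nabla J(\boldsymbol{\theta}^{(t-1)}),\ldots\rangle$ and bounds it via $L$-smoothness, absorbing the coupling into the $C_3 d\eta^2$ and $C_1$ constants. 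A secondary technical point is verifying that the monotonicity hypothesis is compatible with the EMA rule in Eq. (\ref{eq:expkronfactors}); this can be enforced in practice by the standard trick of replacing $\tilde{F}_D^{(t)}$ by $\max(\tilde{F}_D^{(t)}, \tilde{F}_D^{(t-1)})$ coordinate-wise (the AMSGrad-style correction), which preserves all other bounds and leaves the rate unchanged.
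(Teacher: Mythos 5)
Your proposal is correct and follows essentially the same route as the paper: the paper also treats AdaFisher as a generalized Adam-type method in the framework of \cite{chen2018on}, invokes the resulting master inequality (which your descent-lemma/auxiliary-sequence sketch simply re-derives rather than cites), and then bounds the same three terms exactly as you do — the harmonic sum $\sum_t 1/t$ giving the $C_1\eta^2\lambda^2(1+\log T)$ term, the telescoping of $\eta^{(t)}/\tilde{F}_{D}^{(t)}$ under the monotonicity hypothesis giving $C_2 d\eta$, its squared counterpart giving $C_3 d\eta^2$, and the bound $\|\tilde{F}_{D}^{(t)}\|_\infty \le L$ used to lower-bound the left-hand side by $\frac{\sqrt{T}}{L}\min_{t\in[T]}\mathbb{E}\|\nabla J(\boldsymbol{\theta}^{(t)})\|_2^2$.
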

\begin{proof}
Follow \cite{chen2018on}, as AdaFisher is an Adam-type method with the condition $||\eta^{(t)} m^{(t)}/\tilde{F}_{D}^{(t)}||_2 \leq G$ for some $G$ (which can be obtained by $\eta^{(t)}< \eta$, $||\mathbf{g}^{(t)}||_2\leq \lambda$ and $||\tilde{F}_{D}^{(t)}||_2\geq 1$), we have 
\begin{align}\label{eq:upper_bound2}
\mathbb{E}\Bigg[\sum_{t=1}^{T} \eta^{(t)}\langle\nabla J(\boldsymbol{\theta}^{(t)}), \nabla J(\boldsymbol{\theta}^{(t)})/\tilde{F}_{D}^{(t)}\rangle\Bigg] \leq &\mathbb{E}\Bigg[C_1\sum_{t=1}^T \left\Vert\frac{\eta^{(t)} \mathbf{g}^{(t)}}{\tilde{F}_{D}^{(t)}}\right\Vert_2^2 + C_2\sum_{t=1}^T\left\Vert \frac{\eta^{(t)}}{\tilde{F}_{D}^{(t)}} - \frac{\eta^{(t-1)}}{\tilde{F}_{D}^{(t-1)}}\right\Vert_1 \notag\\
& + C_3\sum_{t=1}^T\left\Vert \frac{\eta^{(t)}}{\tilde{F}_{D}^{(t)}} - \frac{\eta^{(t)}}{\tilde{F}_{D}^{(t)}}\right\Vert_2^2\Bigg] + C_4.
\end{align}
We first bound non-constant terms in RHS of Eq. (\ref{eq:upper_bound2}). For the term with $C_1$, since $||\tilde{F}_{D}^{(t)}||_2\geq 1$, we have
\begin{align*}
\mathbb{E}\Bigg[\sum_{t=1}^T \left\Vert\frac{\eta^{(t)} \mathbf{g}^{(t)}}{\tilde{F}_{D}^{(t)}}\right\Vert_2^2\Bigg] &\leq \mathbb{E}\Bigg[\sum_{t=1}^T ||\eta^{(t)} \mathbf{g}^{(t)}||_2^2\Bigg]\\
&= \mathbb{E}\Bigg[\sum_{t=1}^T \left\Vert\frac{\eta}{\sqrt{t}}\mathbf{g}^{(t)}\right\Vert_2^2\Bigg]\\
&\leq \eta^2\lambda^2\sum_{t=1}^T \frac{1}{t}\leq \eta^2\lambda^2 (1 + \log T).
\end{align*}
For the term with $C_2$, we have
\begin{align*}
\mathbb{E}\Bigg[\sum_{t=1}^T\left\Vert \frac{\eta^{(t)}}{\tilde{F}_{D}^{(t)}} - \frac{\eta^{(t-1)}}{\tilde{F}_{D}^{(t-1)}}\right\Vert_1 \Bigg] &=  \mathbb{E}\Bigg[\sum_{j=1}^d \sum_{t=2}^T \left( \frac{\eta^{(t-1)}}{\tilde{F}_{D}^{(t-1)}[j]} - \frac{\eta^{(t)}}{\tilde{F}_{D}^{(t)}[j]}\right) \Bigg]\\
&= \mathbb{E}\Bigg[\sum_{j=1}^d \frac{\eta^{(1)}}{\tilde{F}_{D}^{(1)}[j]} - \frac{\eta^{(T)}}{\tilde{F}_{D}^{(T)}[j]}\Bigg]\\
& \leq \mathbb{E}\Bigg[\sum_{j=1}^d \frac{\eta^{(1)}}{\tilde{F}_{D}^{(1)}[j]}\Bigg] \leq d\eta
\end{align*}
where the first equality is due to  $\frac{\tilde{F}_{D}^{(t-1)}[j]}{\eta^{(t-1)}}\leq \frac{\tilde{F}_{D}^{(t)}[j]}{\eta^{(t)}}$, $\forall t\in [T], j\in [d]$.

For the term with $C_3$, we have
\begin{align*}
\mathbb{E}\Bigg[\sum_{t=1}^T\left\Vert \frac{\eta^{(t)}}{\tilde{F}_{D}^{(t)}} - \frac{\eta^{(t-1)}}{\tilde{F}_{D}^{(t-1)}}\right\Vert_2^2 \Bigg] &= \mathbb{E}\Bigg[\sum_{t=1}^T\sum_{j=1}^d\left( \frac{\eta^{(t)}}{\tilde{F}_{D}^{(t)}[j]} - \frac{\eta^{(t-1)}}{\tilde{F}_{D}^{(t)}[j]}\right)^2 \Bigg]\\
&= \mathbb{E}\Bigg[\sum_{t=1}^T\sum_{j=1}^d\left| \frac{\eta^{(t)}}{\tilde{F}_{D}^{(t)}[j]} - \frac{\eta^{(t-1)}}{\tilde{F}_{D^{(t-1)}}[j]}\right| \cdot \left| \frac{\eta^{(t)}}{\tilde{F}_{D}^{(t)}[j]} - \frac{\eta^{(t-1)}}{\tilde{F}_{D}^{(t-1)}[j]}\right| \Bigg]\\
&\leq \mathbb{E}\Bigg[\sum_{t=1}^T\sum_{j=1}^d\left| \frac{\eta^{(t)}}{\tilde{F}_{D}^{(t)}[j]} - \frac{\eta^{(t-1)}}{\tilde{F}_{D}^{(t-1)}[j]}\right| \cdot \left| \frac{\eta}{\sqrt{t}\tilde{F}_{D}^{(t)}[j]} - \frac{\eta}{\sqrt{t-1}\tilde{F}_{D}^{(t-1)}[j]}\right|\Bigg] \\
&\leq \mathbb{E}\Bigg[\eta\sum_{t=1}^T\sum_{j=1}^d\left| \frac{\eta_t}{\tilde{F}_{D}^{(t)}[j]} - \frac{\eta^{(t-1)}}{\tilde{F}_{D}^{(t-1)}[j]}\right|\Bigg]\\
&= \eta\mathbb{E}\Bigg[\sum_{t=1}^T\left\Vert \frac{\eta^{(t)}}{\tilde{F}_{D}^{(t)}} - \frac{\eta^{(t-1)}}{\tilde{F}_{D}^{(t-1)}}\right\Vert_1 \Bigg]\\
&\leq d\eta^2
\end{align*}
Hence 
\begin{align}\label{eq:bound_1}
\mathbb{E}\Bigg[C_1\sum_{t=1}^T \left\Vert\frac{\eta^{(t)} g^{(t)}}{\tilde{F}_{D}^{(t)}}\right\Vert_2^2 &+ C_2\sum_{t=1}^T\left\Vert \frac{\eta^{(t)}}{\tilde{F}_{D}^{(t)}} - \frac{\eta^{(t-1)}}{\tilde{F}_{D}^{(t-1)}}\right\Vert_1 \notag + C_3\sum_{t=1}^T\left\Vert \frac{\eta^{(t)}}{\tilde{F}_{D}^{(t)}} - \frac{\eta^{(t-1)}}{\tilde{F}_{D^{(t-1)}}}\right\Vert_2^2\Bigg] + C_4\\
&\leq C_1 \eta^2 \lambda^2(1+ \log T) + C_2 d\eta + C_3 d\eta^2 + C_4
\end{align}
Now we lower bound the LHS of Eq. (\ref{eq:gradient_bound}). With the assumption $||\tilde{F}_{D}^{(t)}||_\infty\leq L$, we have 
$$(\eta^{(t)}/\tilde{F}_{D}^{(t)})_j \geq \frac{\eta}{L\sqrt{t}}.$$
Thus 
\begin{equation}\label{eq:bound_2}
\mathbb{E}\Bigg[\sum_{t=1}^{T} \eta^{(t)}\langle\nabla J(\boldsymbol{\theta}^{(t)}), \nabla J(\boldsymbol{\theta}^{(t)})/\tilde{F}_{D}^{(t)}\rangle\Bigg]\geq \mathbb{E}\Bigg[\sum_{t=1}^{T}\frac{\eta}{L\sqrt{t}}||\nabla J(\boldsymbol{\theta}^{(t)})||_2^2\Bigg] \geq \frac{\sqrt{T}}{L}\min_{t\in [T]}\mathbb{E}[||\nabla J(\boldsymbol{\theta}^{(t)})||_2^2]
\end{equation}
Combining Eq. (\ref{eq:bound_1}) and (\ref{eq:bound_2}) gives the desired result.
\end{proof}
\chapter{Visualization}\label{chap:Appendixvisualization}
The convergence rate of an optimizer is crucial, serving as an indicator of its robustness against saddle points and its ability to generalize effectively. In this section, we introduce a novel methodology for visualizing the convergence behavior of optimizers through a statistical model, as depicted in Figure~\ref{fig:heatloss}. Initially, our process employs Principal Component Analysis (PCA) for dimensionality reduction, reducing the dataset dimensions from $\mathcal{D} \in \mathbb{R}^{m \times n}$ to $\hat{\mathcal{D}} \in \mathbb{R}^{m \times 2}$, following the protocol established in \cite{doi:10.1080/14786440109462720}. We then apply this reduced dataset to a toy model composed of an $L$-layer multi-layer perceptron (MLP). Notably, we focus on the first weight matrix $W_{1}^{e}$ of this MLP, which resides in $\mathbb{R}^2$, where $e$ denotes the epoch number. For consistency and to ensure comparability, all layers’ weights are initialized identically across different optimizers.
\begin{figure}[!h]
    \centering
    \includegraphics[width=\textwidth]{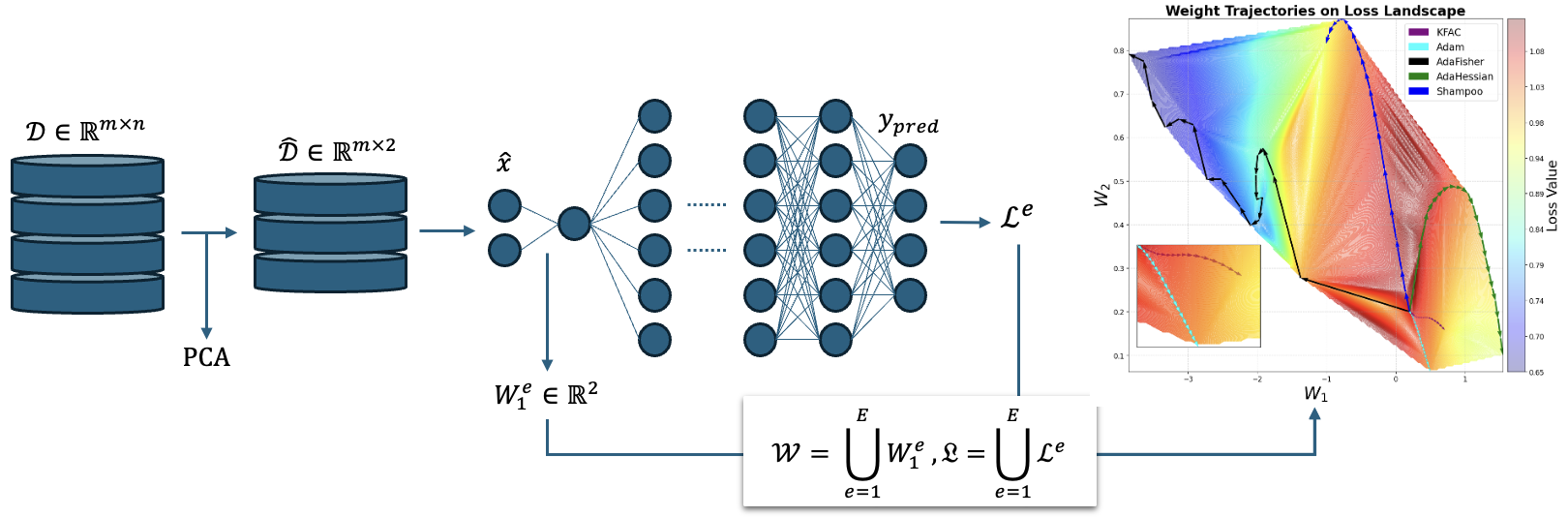}
    \caption{Pipeline for visualization of optimization paths for various algorithms on a loss surface, comparing their convergence efficiency.}
    \label{fig:visualization}
\end{figure}
Following the training phase with various optimizers where we denote a set of optimizer results \(\mathcal{O}\), we analyze both the collection of first-layer weights, $\mathcal{W}$, and the evolution of the loss function, $\mathfrak{L}$ defined as: 
\[
\mathcal{W} = \begin{bmatrix}
(W_1^1)^\top \\
(W_1^2)^\top \\
\vdots \\
(W_1^E)^\top 
\end{bmatrix}, \quad \mathfrak{L} = [\mathcal{L}_1^1, \mathcal{L}_1^2, \ldots, \mathcal{L}_1^E]^\top
\]
where \((W_1^e)^\top \) represents the weight vector at the \(e\)th epoch, and \(\mathcal{L}_1^e\) represents the loss at the \(e\)th epoch, extracted from the optimization results \(\mathcal{O}\). We construct a grid \((\mathbf{X}, \mathbf{Y})\) spanning the range of weight parameters, discretized into \(200\) linearly spaced points along each axis:
\[
\mathbf{X}, \mathbf{Y} = \text{meshgrid}\left( \min(\mathcal{W}_{:,1}), \max(\mathcal{W}_{:,1}), \min(\mathcal{W}_{:,2}), \max(\mathcal{W}_{:,2}), 200 \right)
\]

Finally, we interpolate the loss values \(\mathcal{L}\) over the grid using cubic interpolation to obtain a smooth loss surface \(\mathbf{Z}\):
\[
\mathbf{Z} = \text{griddata}(\mathcal{W}, \mathcal{L}, (\mathbf{X}, \mathbf{Y}), \text{method}=\texttt{cubic})
\]
These elements are integral to the visualization process, which elucidates the optimizer’s trajectory through the parameter space across training epochs. It is important to note that while we focus on the first layer's weight matrix for clarity, the methodology can be adapted to visualize the weights of any layer within the network. Figure~\ref{fig:visualization} summarizes the pipeline.

In the experiment depicted in Figure~\ref{fig:heatloss}, we selected the IRIS dataset~\citep{rz7n-kj20-18}, owing to its widespread recognition and compatibility with PCA application. Our model employs a 2-layer MLP architecture. We specifically attend to the weight matrix of the first layer, denoted by $W_{1} \in \mathbb{R}^2$. This particular focus is informed by the empirical observation that the parameters of the first layer tend to exhibit a faster convergence rate compared to those of subsequent layers in the network. Such a phenomenon can be attributed to the more direct influence of the input features on the first layer's weights, which often results in a more pronounced and expedited learning dynamic. Given the classification nature of the task, we employed the Cross-Entropy loss function~\citep{zhang2018generalized}. The network was trained over 20 epochs using a suite of optimizers: Adam, AdaHessian, K-FAC, Shampoo, and AdaFisher. We standardized the learning rate across all optimizers at $1 \times 10^{-3}$ to ensure comparability of results. Examination of Figure~\ref{fig:heatloss} reveals that AdaFisher's convergence is markedly superior to that of its counterparts, achieving rapid convergence to the local minimum of the loss landscape concerning the first weight parameter within a few iterations. Conversely, the alternative optimizers demonstrate convergence to less optimal local minima.  Note that while the results may vary due to the stochastic nature of parameter initialization, AdaFisher typically converges to a better local minimum compared to its counterparts.  
\chapter{Results} \label{chap:results}
\begin{figure}[!h]
    \centering
    \includegraphics[width=0.8\textwidth]{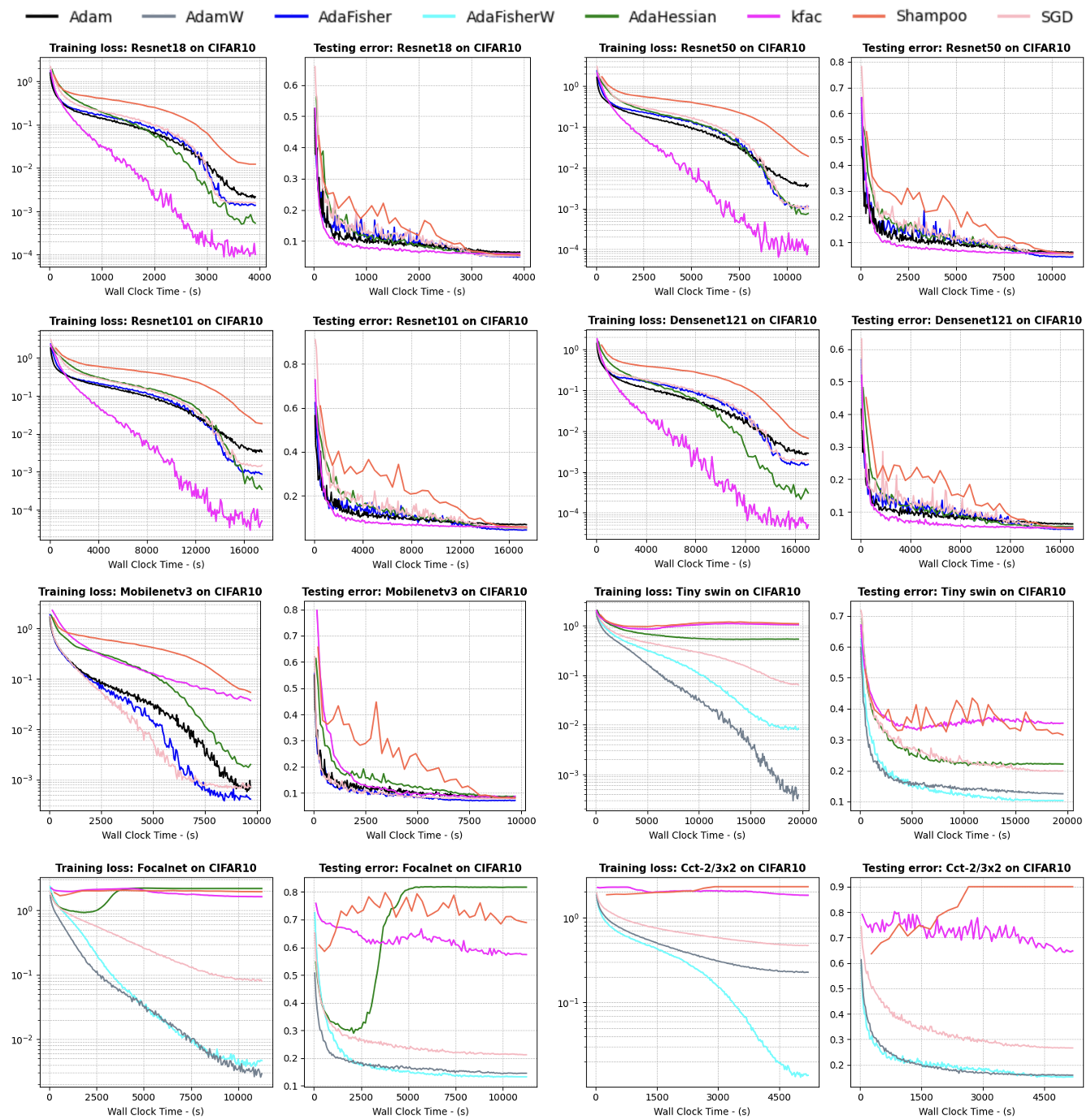}
    \caption{WCT training loss, test error, for CNNs and ViTs on CIFAR10 experiments, without Cutout. A batch size of 256 was used, and all networks were tuned using ResNet18 applied on CIFAR10. The final accuracy results are reported in Table~\ref{table_cutout_vs_nocutout} (a).}
    \label{fig:results_cifar10_no_cutout}
\end{figure}

\begin{figure}[!h]
    \centering
    \includegraphics[width=\textwidth]{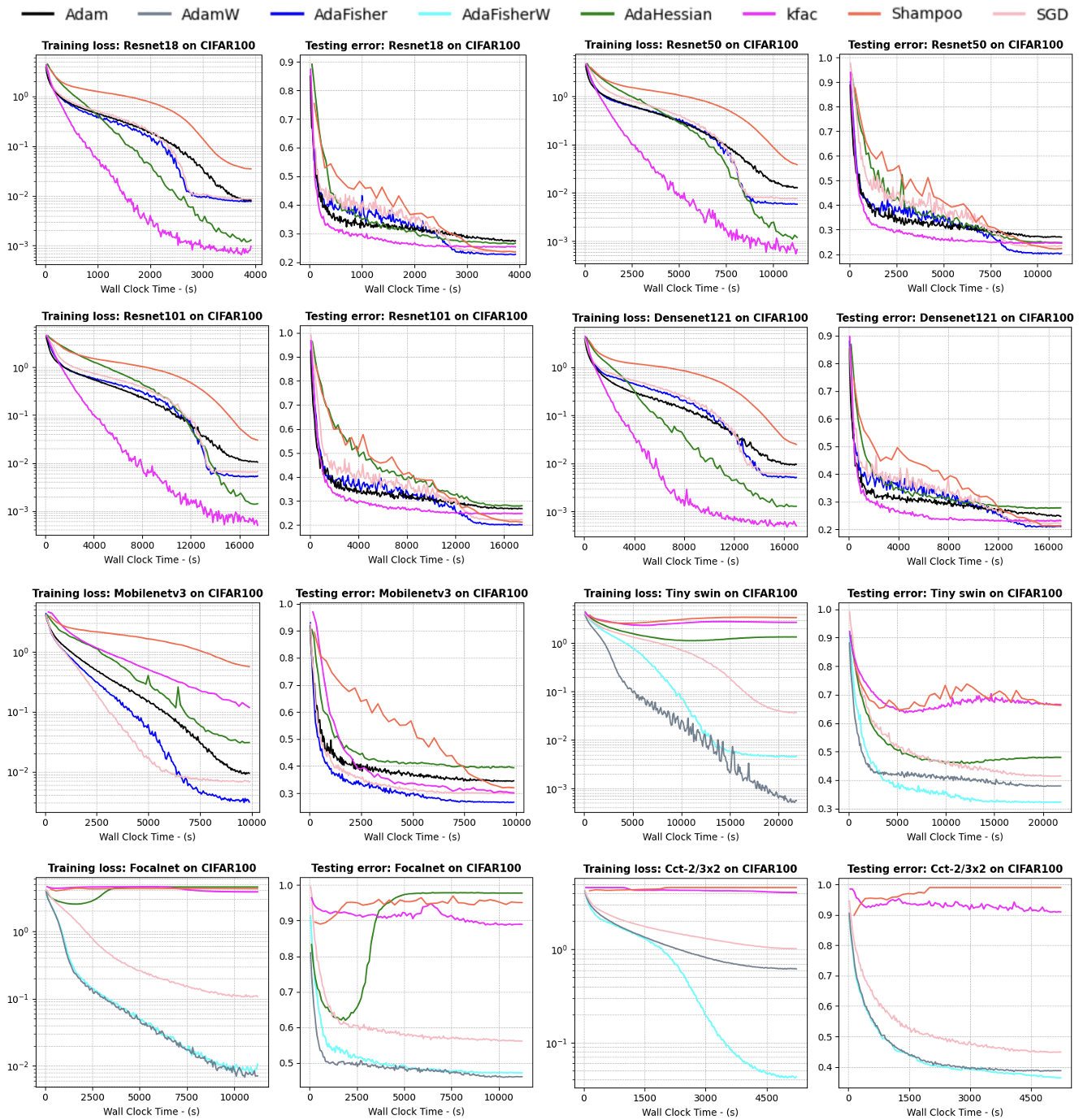}
    \caption{WCT training loss, test error, for CNNs and ViTs on CIFAR100 experiments, without Cutout. A batch size of 256 was used, and all networks were tuned using ResNet18 applied on CIFAR10. The final accuracy results are reported in Table~\ref{table_cutout_vs_nocutout} (a).}
    \label{fig:results_cifar100_no_cutout}
\end{figure}

\begin{figure}[!h]
    \centering
    \includegraphics[width=\textwidth]{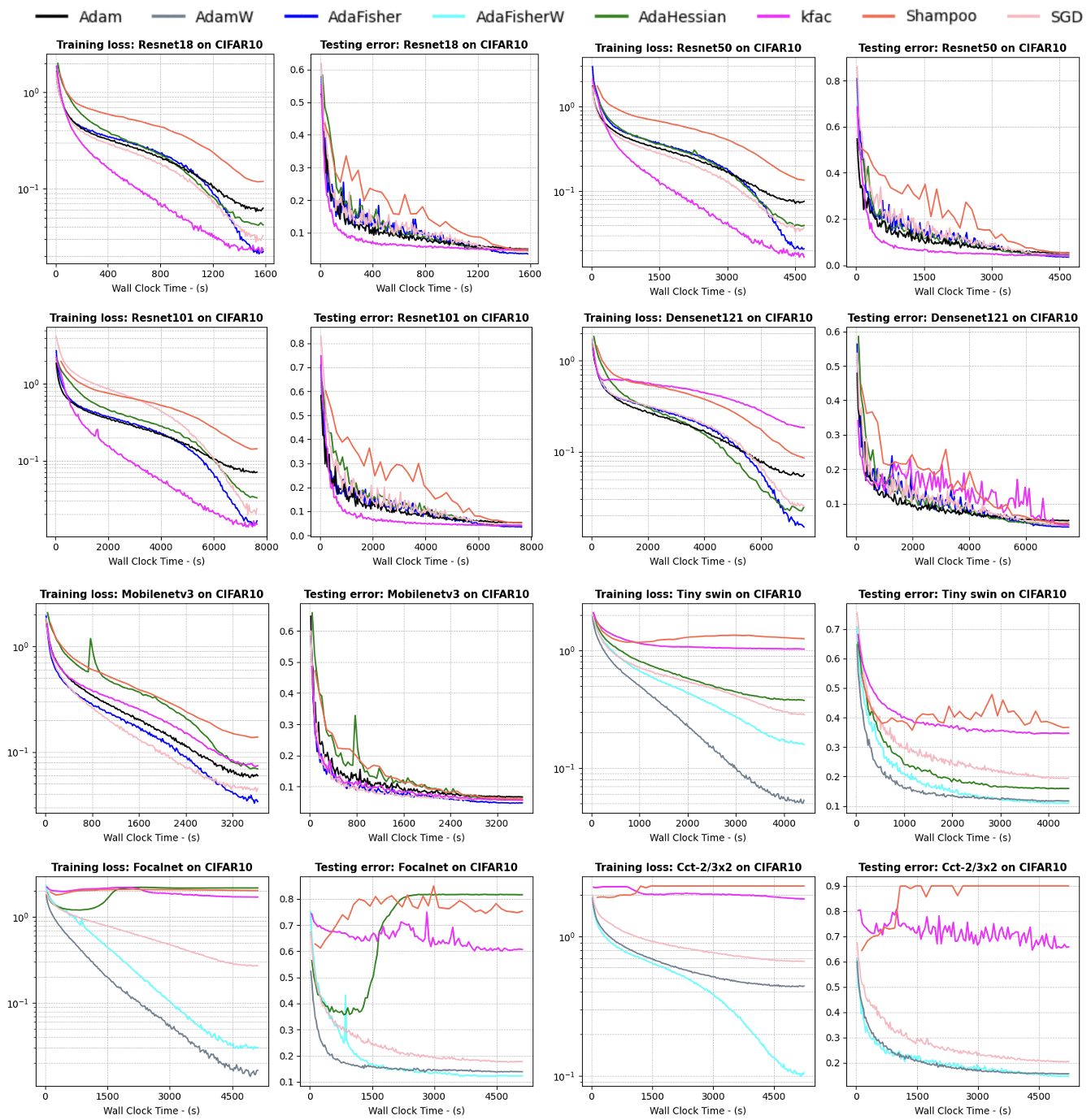}
    \caption{WCT training loss, test error, for CNNs and ViTs on CIFAR10 experiments, with Cutout. A batch size of 256 was used, and all networks were tuned using ResNet18 applied on CIFAR10. The final accuracy results are reported in Table~\ref{table_cutout_vs_nocutout} (b).}
    \label{fig:results_cifar10_cutout}
\end{figure}

\begin{figure}[!h]
    \centering
    \includegraphics[width=\textwidth]{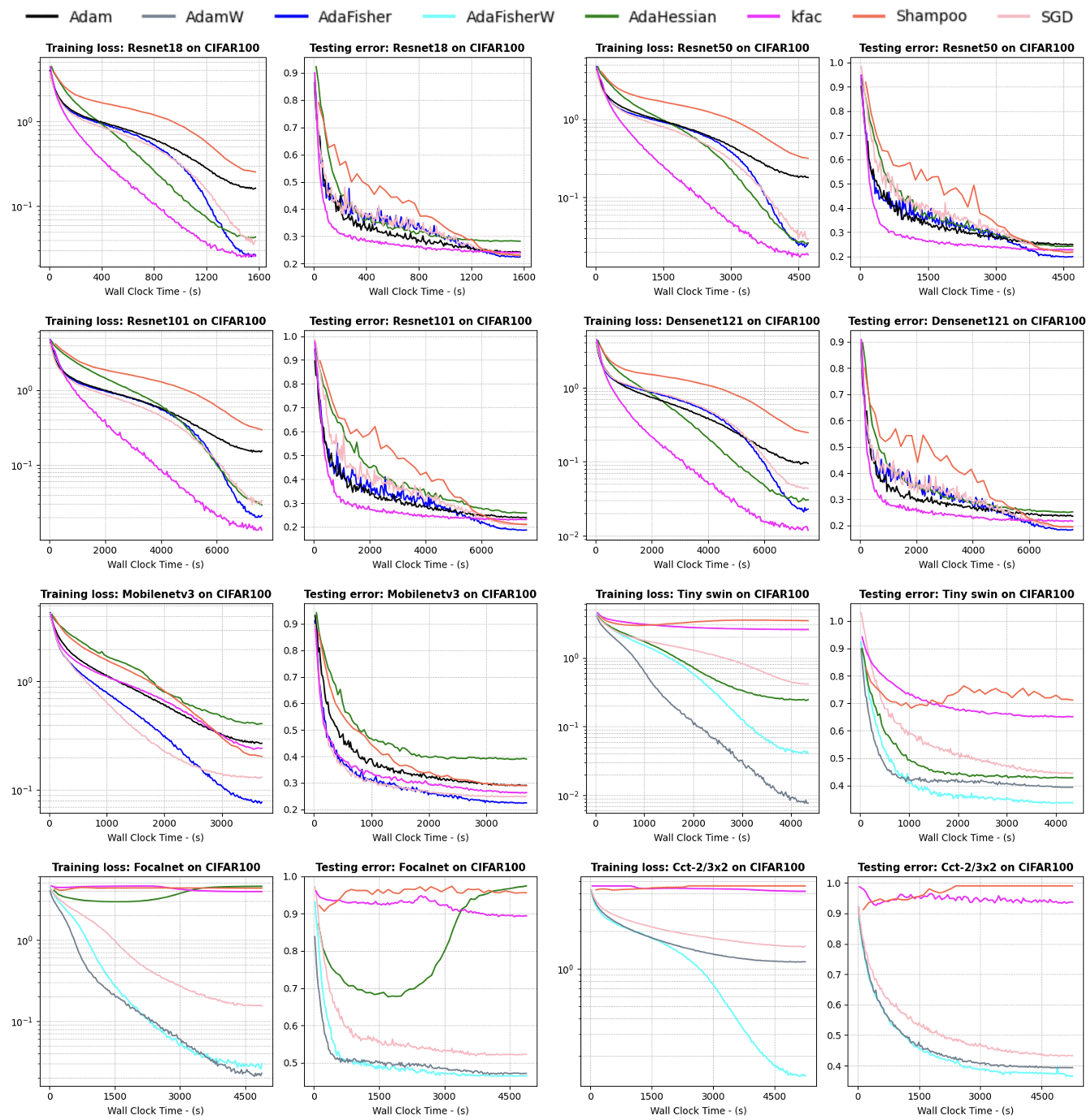}
    \caption{WCT training loss, test error, for CNNs and ViTs on CIFAR100 experiments, with Cutout. A batch size of 256 was used, and all networks were tuned using ResNet18 applied on CIFAR10. The final accuracy results are reported in Table~\ref{table_cutout_vs_nocutout} (b).}
    \label{fig:results_cifar100_cutout}
\end{figure}
\clearpage
\begin{center}
\begin{figure}[!h]
    \centering
    \includegraphics[width=\textwidth]{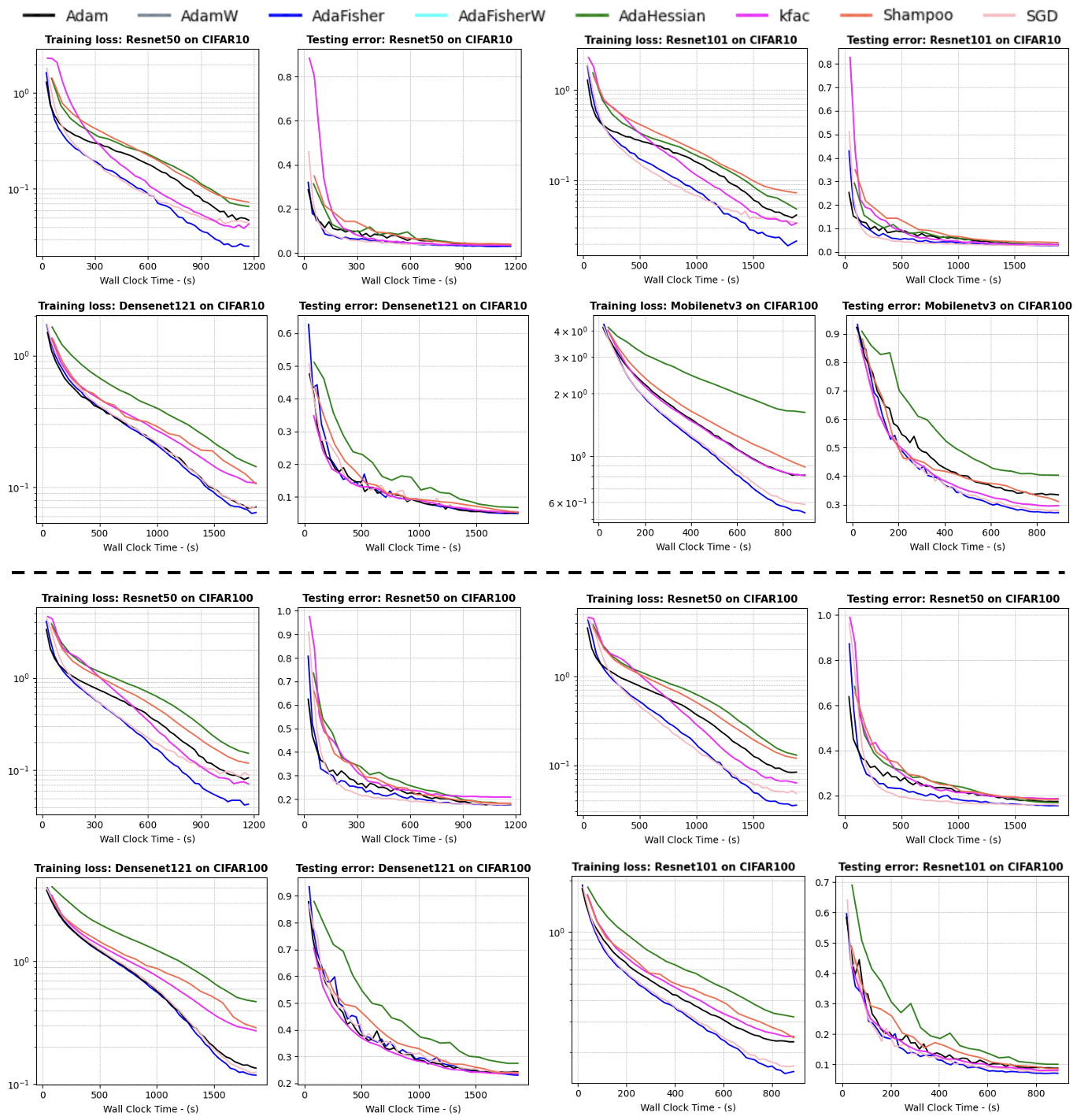}
    \caption{WCT training loss and test error for CNNs on CIFAR-10/100 experiments with pretrained weights on ImageNet-1k. A batch size of 256 was used, and all networks were tuned using ResNet50 applied on CIFAR10. The final accuracy results are reported in Table~\ref{pretrained_cifar}.}
    \label{fig:results_pretrained}
\end{figure}
\end{center}

\end{appendices}

\addcontentsline{toc}{chapter}{Bibliography}  
\bibliography{bibliography}        
\bibliographystyle{apacite}      

\end{document}